\newcommand{\R}{\mathbb{R}}
\newcommand{\N}{\mathbb{N}}
\newcommand{\F}{\mathcal{F}}
\newcommand{\D}{\mathcal{D}}
\newcommand{\argmin}{\mathop{\rm arg~min}\limits}
\numberwithin{equation}{section} 
\newtheorem {rem}{Remark}[section]
\newtheorem {thm}[rem]{Theorem}
\newtheorem {lem}[rem]{Lemma}
\newtheorem {defi}[rem]{Definition}
\newtheorem {prop}[rem]{Proposition}
\newtheorem {assu}[rem]{Assumption}
\begin{document}
	
\title{Adam-like Algorithm with Smooth Clipping Attains Global Minima: Analysis Based on Ergodicity of Functional SDEs}
\author{Keisuke Suzuki}

\maketitle	

\begin{abstract}%
	In this paper, we prove that an Adam-type algorithm with smooth clipping approaches the global minimizer of the regularized non-convex loss function.
	Adding smooth clipping and taking the state space as the set of all trajectories, we can apply the ergodic theory of Markov semigroups for this algorithm and investigate its asymptotic behavior. 
	The ergodic theory we establish in this paper reduces the problem of evaluating the convergence, generalization error and discretization error of this algorithm to the problem of evaluating the difference between two functional stochastic differential equations (SDEs) with different drift coefficients. 
	As a result of our analysis, we have shown that this algorithm minimizes the the regularized non-convex loss function with errors of the form $n^{-1/2}$, $\eta^{1/4}$, $\beta^{-1} \log (\beta + 1)$ and $e^{- c t}$. 
	Here, $c$ is a constant and $n$, $\eta$, $\beta$ and $t$ denote the size of the training dataset, learning rate, inverse temperature and time, respectively.%
\end{abstract}

\section{Introduction}

As machine learning models trained with complex loss functions have become to play important roles, it becomes more and more important to guarantee the performance of non-convex optimization algorithms theoretically. 
For stochastic gradient Langevin dynamics (SGLD), which is defined by 
\begin{align}
	\label{EQ_SGLD_Disc}
	x_{k+1} 
	= x_k - \eta \nabla F(x_k) + \sqrt{2 \eta / \beta} \epsilon_k, 
\end{align}
there exist many studies investigating its theoretical performance \cite{Bertsekas, Ge, Ge2, Qian, Ragi, Liang, Abbasi, Mackey2, Mou, Xu, Kumar, Zhang, Majka, Kavis, Taiji, Taiji2, Moulines, Cucumber}. 
Here, $F : \R^d \to \R$ is an objective function, $\epsilon_k$ is a $d$-dimensional standard normal distributed noise, and $\eta$ and $\beta$ are the learning rate and inverse temperature, respectively. 
Assuming the dissipativity of $F$ and making use of the existence of a Lyapunov function for this chain, a number of the studies have shown that SGLD with appropriate hyperparameter settings can attain the global minima of the objective function even if it is non-convex. 
For example, according to \cite{Cucumber}, we have
\begin{align}
	\label{Main_Previous_Cucumber}
	E[F(x_k)] - \min_{w \in \R^d} F(w)
	\leq C_1 \left( \sqrt{\eta} + e^{-C_2 k \eta} \right) + C_3 \frac{\log(\beta +1)}{\beta}, 
\end{align}
where $C_1, C_2$ are constants independent of $\eta$ and $k$, and $C_3$ is a constant independent of $\eta$, $k$ and $\beta$. 
For the size $n$ of the training dataset, \cite{Cucumber} have also derived a generalization bound with the order $n^{-1}$ which is uniform at time $t = k \eta$, in the same way as the discretization error bound $\sqrt{\eta}$. 
Thus, we can determine hyperparameters in the order of $\beta$, $\eta$, and $k$ so that SGLD (\ref{EQ_SGLD_Disc}) minimizes the objective function $F$ within an arbitrary admissible error. 
In this way, these theoretical results both guarantee the performance of algorithms and suggest sufficient hyperparameter settings to learn models within an admissible error.
Therefore, a theoretical analysis of each algorithm is important in both the theoretical and applied fields. 

As extensions of SGLD, there are many gradient based optimization algorithms such as Adam \cite{kingma2015adam}, AdaBelief, Yogi \cite{zhuang2020adabelief}, AdaGrad and RMSProp \cite{chen2018convergence}. 
For example, Adam with $\varepsilon > 0$ and exponential decay rates $\beta_1, \beta_2 \in [0, 1)$ is described as follows. 
Here, $\odot$ denotes the Hadamard product and calculations such as divisions and the square root $\sqrt{}$ should be interpreted coordinate-wise. 
\begin{align}
	m_k &= \beta_1 m_{k-1} + (1-\beta_1) \nabla F(\theta_{k-1}), \label{EQ_Adam_Momentum1} \\
	v_k &= \beta_2 v_{k-1} + (1-\beta_2) [\nabla F(\theta_{k-1}) \odot \nabla F(\theta_{k-1})], \label{EQ_Adam_Momentum2} \\
	\theta_k &= \theta_{k-1} - \eta \frac{1}{\sqrt{\hat{v}_k + \varepsilon}} \odot \hat{m}_k, \label{EQ_Adam_Disc}
\end{align}
where $\hat{m}_k = m_k / (1 - \beta_1^k)$ and $\hat{v}_k = v_k / (1 - \beta_2^k)$. 
Adam can be considered as an accelerated version of SGLD making use of past information, and is widely used since it was proposed \cite{miyato2016adversarial, chambon2018deep, hamm2019deep, loey2021fighting}. 

Due to the widespread use of Adam, many researchers have investigated its theoretical performance \cite{zaheer2018adaptive, bock2019proof, reddi2019convergence, tang20211, zou2019sufficient, iiduka2022theoretical, zhang2022adam}. 
For example, \cite{zaheer2018adaptive} and \cite{zhang2022adam} have shown that the output $\theta_k$ of Adam approaches points where the gradient $\nabla F$ of $F$ vanishes. 
\cite{bock2019proof} has shown that Adam converges to the global minimizer of $F$ locally with an exponential rate of convergence even if $F$ is non-convex. 
On the basis of the importance of comparing SGLD with its continuation in the theoretical analysis, \cite{malladi2022sdes} has also derived the form of a stochastic differential equation (SDE), which appears as the continuous limit of Adam. 

However, the results of these previous studies alone are insufficient for the theoretical evaluation of Adam. 
In fact, the results in \cite{zaheer2018adaptive} and \cite{zhang2022adam} guarantee that Adam attains global minimum only when the loss function is convex. 
Furthermore, the result of \cite{bock2019proof} does not guarantee the global convergence, and \cite{malladi2022sdes} evaluated the discretization error only and the analysis of Adam based on this continuation was not conducted in their paper. 
Similarly, the other previously mentioned works do not guarantee the global convergence of Adam to the minimizers of the non-convex objective function. 

The difficulty when we analyze Adam is derived from its non-Markov property. 
Specifically, $\theta_k$ defined by (\ref{EQ_Adam_Disc}) is non-Markov as a $\R^d$-valued chain, unlike SGLD $x_k$ of (\ref{EQ_SGLD_Disc}). 
Although we can regard Adam as Markov considering the pair $(m_k, v_k, \theta_k)$ as in \cite{malladi2022sdes}, in this case, we cannot discuss its ergodicity because of the absence of its Lyapunov functions. 
To derive global convergence results such as (\ref{Main_Previous_Cucumber}), almost all existing studies for SGLD utilize its Markov property and ergodicity on the basis of the existence of a Lypunov function. 
For this reason, similar techniques to existing works for SGLD cannot be used for the analysis of Adam. 
Therefore, we can conclude that there are no techniques that give sufficient theoretical results to an Adam-type algorithm, and as a consequence, it is not yet known whether the Adam-type algorithm converges to the minimizer of the objective function globally.

In this paper, we generalize the result on the ergodicity of functional SDEs \cite{bao2020ergodicity}, and as its application, prove that the output of an Adam-type algorithm converges globally to the minimizer of the objective function. 
In a previous work \cite{Cucumber}, on the basis of the theory of Markov semigroups and its ergodicity, it has been shown that the analysis of SGLD reduced to that of the difference between two SDEs with different drift coefficients. 
Although an Adam-type algorithm is non-Markov as a $\R^d$-valued chain and does not have a Lyapunov function as a $\R^{3 d}$-valued chain, by taking the state space as the set of all trajectories as in \cite{bao2020ergodicity}, we can regard this chain as Markov and find its Lyapunov function. 
Therefore, making use of the result in \cite{bao2020ergodicity}, we can analyze the asymptotic behavior of Adam-type algorithms. 
In addition, extending the theory of this previous work to the case of the difference between two functional SDEs with different drift coefficients, we can also evaluate the generalization and discretization errors of Adam-type algorithms. 

This paper is organized as follows. 
In Section \ref{SEC_Main_Result}, we state our main result on the theoretical analysis of an Adam-type algorithm. 
The proof strategy of this result, which is based on the ergodicity of functional SDEs, is explained in Section \ref{SEC_Proof_Strategy}. 
Finally, in Section \ref{SEC_Future_Works}, we describe the problem of our results and our future works. 

\section{Main Result}
\label{SEC_Main_Result}

To state our main result, we prepare the following notations. 
Let $\mathcal{Z}$ be the set of all data points and $\ell(w; z)$ be the loss for a parameter $w \in \R^d$ on a data point $z \in \mathcal{Z}$. 
For independent and identically distributed samples $(z_1, \dots, z_n)$ generated from an unknown distribution $\D$ over $\mathcal{Z}$, we define the empirical loss $L_n$ and expected loss $L$ by $L_n(w) = \frac{1}{n} \sum_{i=1}^n \ell(w; z_i)$ and $L(w) = E_{z \sim \D}[\ell(w; z)]$, respectively. 
We impose the following assumptions on the loss function $\ell(\cdot; \cdot)$, which are standard in the theoretical analysis of SGLD (see \cite{Ragi, Zhang, Cucumber} for example). 

\begin{assu}
	\label{Assum_Loss_Function}
	Let $M > 0$ and $A > 0$. 
	For any $z \in \mathcal{Z}$, the function $\ell(\cdot; z) : \R^d \times \mathcal{Z} \to \R$ is $M$-smooth and satisfies 
	\begin{align*}
		\sup_{z \in \mathcal{Z}} |\ell(0; z)| < \infty,\quad 
		A \coloneqq \sup_{(x, z) \in \R^d \times \mathcal{Z}} \| \nabla \ell(x; z) \|_{\R^d} < \infty. 
	\end{align*}
	Here, a $C^1$-function $F : \R^d \to \R$ is said to be $M$-smooth if it satisfies 
	\begin{align*}
		\| \nabla F(x) - \nabla F(y) \|_{\R^d} 
		\leq M \| x - y \|_{\R^d},\quad x, y \in \R^d. 
	\end{align*}
\end{assu}

\begin{assu}
	\label{Assum_Regularization_Term}
	Let $M > 0$ and $m, b > 0$. 
	A function $R : \R^d \to \R$ is $M$-smooth and $(m, b)$-dissipative, and for any $(x, z) \in \R^d \times \mathcal{Z}$, $\nabla \ell(x; z) = 0$ holds whenever $\nabla R(x) \neq 0$. 
	Here, a $C^1$-function $F : \R^d \to \R$ is said to be $(m, b)$-dissipative if it satisfies 
	\begin{align*}
		\langle \nabla F(x), x \rangle_{\R^d} 
		&\geq m \| x \|_{\R^d}^2 - b,\quad x \in \R^d. 
	\end{align*}
\end{assu}

\begin{rem}
	\label{Rem_How_to_satisfy_Assum}
	Suppose that we want to find the minimizer of $L(w)$ on $\{ x \in \R^d \mid \| x \|_{\R^d} \leq K \}$ for some $K > 0$. 
	Then, we may change the value $\ell(x; z)$ to be larger for $\| x \|_{\R^d} > K$, and in particular, we may assume that $\ell(x; z)$ is a constant value when $\| x \|_{\R^d} > K+1$. 
	In this case, for any $\lambda > 0$, the function $R(x) = \lambda \left\{ (\| x \|_{\R^d}^2 + 1)^{1/2} - (K^2 + 1)^{1/2} \right\}^2 \mathds{1}_{\{ \| x \|_{\R^d} > K+1 \}}$ satisfies the conditions of Assumption \ref{Assum_Regularization_Term}. 
	Here, $\mathds{1}_\Gamma$ denotes the indicator function of a set $\Gamma$. 
	Hence, the aforementioned assumptions are satisfied in the $L^2$-regularization setting, for example. 
\end{rem}

Next, we describe the definition of our Adam-type algorithm. 
For the explanation how this algorithm relates to the original Adam, see Remark \ref{Rem_Our_Adam_Relate_Original} in the following. 
The following notations are derived from \cite{bao2020ergodicity}. 
Let $r > 0$. 
For a function $\xi : (-\infty, 0] \to \R^d$, we set $\| \xi \|_r = \sup_{-\infty < s \leq 0} (e^{r s} \| \xi(s) \|_{\R^d})$ and define $\mathcal{C}_r$ as the set of all $\xi$ with finite $\| \xi \|_r$. 
Furthermore, for $f : \R \to \R^d$ and $t \geq 0$, we define the shifted function $f_t : (-\infty, 0] \to \R^d$ by $f_t(s) = f(t+s)$. 
Finally, for $\eta > 0$ and a $M$-smooth function $F : \R^d \to \R$, we define the functionals $H^{(\eta)}_F, H_F : \mathcal{C}_r \to \R^d$ as follows. 
\begin{align}
	H_F^{(\eta)}(\xi) 
	&= - \displaystyle{\frac{(1 - e^{- c_1 \eta}) \sum_{j=-\infty}^0 e^{c_1 j \eta} \nabla F(\xi(j \eta))}{\sqrt{ \varepsilon + (1 - e^{- c_2 \eta}) \sum_{j=-\infty}^0 e^{c_2 j \eta} \| \nabla F(\xi(j \eta)) \|_{\R^d}^2}}}, \label{EQ_Adam_Drift_Coeff_Disc} \\
	H_F(\xi) 
	&= - \displaystyle{\frac{c_1 \int_{-\infty}^0 e^{c_1 s} \nabla F(\xi(s)) ds}{\sqrt{ \varepsilon + c_2 \int_{-\infty}^0 e^{c_2 s} \| \nabla F(\xi(s)) \|_{\R^d}^2 ds}}}, \label{EQ_Adam_Drift_Coeff_Conti} 
\end{align}
where $c_1, c_2 > 0$ and $\varepsilon > 0$ are fixed constants. 
Using $H^{(\eta)}_F$ and $H_F$, we define our Adam-type algorithm $X^{(\eta, \xi, F)}$ and its continuation $X^{(\xi, F)}$ as the solution of following functional SDEs with initial value $\xi \in \mathcal{C}_r$. 
\begin{align}
	d X^{(\eta, F)}(t) 
	&= H_F^{(\eta)}(X^{(\eta, F)}_{\lfloor t / \eta \rfloor \eta}) dt
	- \nabla R(X^{(\eta, F)}(\lfloor t / \eta \rfloor \eta)) dt + \sqrt{\frac{2}{\beta}} dW(t),  \label{EQ_Regularized_Adam_Disc} \\
	d X^{(F)}(t) 
	&= H_F(X^{(F)}_t) dt - \nabla R(X^{(F)}(t)) dt + \sqrt{\frac{2}{\beta}} dW(t), \label{EQ_Regularized_Adam_Conti} 
\end{align}
where $\lfloor \cdot \rfloor$ denotes the floor function and $W$ is a standard $d$-dimensional Brownian motion. 
Note that $X^{(\eta, \xi, F)}$ and $X^{(\xi, F)}$ define a $\mathcal{C}_r$-valued Markov chain and process, respectively although they are non-Markov as $\R^d$-valued ones. 
Therefore, by finding a Lypunov function for them, we can use the ergodic theory to evaluate their asymptotic behaviors. 

Under the aforementioned notations, our main result is described as follows. 
Here, for a vector of parameters $\alpha$, $f \leq O_\alpha(g)$ and $f \geq \Omega_\alpha(g)$ mean that there exists constants $C$ and $c$ depending only on $\alpha$ such that $f \leq C g$ and $f \geq c g$ hold, respectively. 

\begin{thm}
	\label{Thm_Main_Result}
	Suppose that $2 c_1 > c_2$, $\min \{ c_1, c_2\} > r$, $m / 3 > r$ and $\beta \geq 2 / m$ hold. 
	Then, under Assumptions \ref{Assum_Loss_Function} and \ref{Assum_Regularization_Term}, for $\alpha = (c_1, c_2, \varepsilon, \| \nabla R(0) \|_{\R^d}, A, m, b, M, r, d)$, there exist $t_0 = O_{\alpha, \beta}(1)$ and $c = \Omega_{\alpha, \beta}(1)$ such that 
	\begin{align*}
		&E[(L + \varepsilon^{1/2} R)(X^{(\eta, \xi, L_n)}(k \eta))] - \min_{w \in \R^d} (L + \varepsilon^{1/2} R)(w) \\
		&\quad\leq O_{\alpha, \beta} \left( (1 + \| \xi \|_r^{5/2}) (\eta^{1/4} + n^{-1/2} + e^{- c k \eta} (R(w^*) + 1)) \right) 
		+ O_\alpha\left( \frac{R(w^*) + \log (\beta + 1)}{\beta} \right)
	\end{align*}
	holds uniformly on $0 < \eta \leq 1$ and $k$ satisfying $k \eta \geq t_0$. 
	Here, $w^* = \argmin_{w \in \R^d} (L + \varepsilon^{1/2} R)(w)$. 
\end{thm}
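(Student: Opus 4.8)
The plan is to split the left-hand side into four pieces, each governed by a different mechanism: Euler-type discretization, statistical fluctuation of the empirical loss, ergodic relaxation of a functional SDE, and concentration of that SDE's invariant measure. Write $\Phi := L + \varepsilon^{1/2}R$; since $R$ is $(m,b)$-dissipative, $\Phi$ is coercive and $w^{*}=\argmin_{w}\Phi(w)$ is well defined. For a fixed data sample, let $\pi_{\beta}$ be the invariant probability measure on $\mathcal{C}_{r}$ of the continuous functional SDE \eqref{EQ_Regularized_Adam_Conti} with $F=L$ (existence and uniqueness being part of the ergodicity statement below). Inserting the intermediate processes $X^{(\xi,L_{n})}$, $X^{(\xi,L)}$ and the measure $\pi_{\beta}$, and writing $E$ for expectation over the Brownian motion and, where relevant, the data sample,
\begin{align*}
	E[\Phi(X^{(\eta,\xi,L_{n})}(k\eta))] - \min_{w\in\R^{d}}\Phi(w)
	&= \bigl( E[\Phi(X^{(\eta,\xi,L_{n})}(k\eta))] - E[\Phi(X^{(\xi,L_{n})}(k\eta))] \bigr) \\
	&\quad + \bigl( E[\Phi(X^{(\xi,L_{n})}(k\eta))] - E[\Phi(X^{(\xi,L)}(k\eta))] \bigr) \\
	&\quad + \Bigl( E[\Phi(X^{(\xi,L)}(k\eta))] - \int \Phi(\zeta(0))\,\pi_{\beta}(d\zeta) \Bigr) \\
	&\quad + \Bigl( \int \Phi(\zeta(0))\,\pi_{\beta}(d\zeta) - \min_{w\in\R^{d}}\Phi(w) \Bigr),
\end{align*}
which I call the discretization, generalization, ergodic and Gibbs errors, respectively.

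The core is a quantitative ergodic theorem for the functional SDEs \eqref{EQ_Regularized_Adam_Disc}--\eqref{EQ_Regularized_Adam_Conti}, obtained by extending \cite{bao2020ergodicity} from a single equation to a comparison between two such equations with different drifts. The fact that makes this work is that $\|\nabla L_{n}\|_{\R^{d}}\le A$ and $\|\nabla L\|_{\R^{d}}\le A$, so by the geometric weights in \eqref{EQ_Adam_Drift_Coeff_Disc}--\eqref{EQ_Adam_Drift_Coeff_Conti} and the lower bound $\sqrt{\varepsilon}$ on the denominators, the drift functionals $H^{(\eta)}_{L_{n}}$, $H_{L_{n}}$, $H_{L}$ are bounded by $A/\sqrt{\varepsilon}$ and, using $\min\{c_{1},c_{2}\}>r$ together with $M$-smoothness, Lipschitz on $\mathcal{C}_{r}$; with $\nabla R$ this also gives well-posedness. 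I would then prove: (a) each such SDE admits a Lyapunov functional $\mathcal{V}$ comparable to $1+\|\xi\|_{r}^{2}$, built from the dissipativity of $R$, the boundedness of the $H$-term and the hypotheses $m/3>r$, $\beta\ge2/m$, so that $\sup_{t\ge0}E[\mathcal{V}(X^{(\cdot)}_{t})]\le \mathcal{V}(\xi)+O_{\alpha,\beta}(1)$; (b) exponential ergodicity of each SDE in a $\mathcal{V}$-weighted Wasserstein distance with rate $c=\Omega_{\alpha,\beta}(1)$, via the coupling of \cite{bao2020ergodicity} (strong contraction outside a large ball from dissipativity, reflection inside from the Lipschitz bounds); (c) a bound, uniform in $t$, on the $\mathcal{V}$-weighted distance between the time-$t$ laws of two such SDEs, of order $\sup_{\xi}\|b_{1}(\xi)-b_{2}(\xi)\|_{\R^{d}}$ with $b_{1},b_{2}$ their drifts, obtained by integrating the drift mismatch against the contraction. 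Item (b) with $F=L$ gives the ergodic error, of order $(1+\|\xi\|_{r}^{5/2})(R(w^{*})+1)e^{-ck\eta}$ for $k\eta\ge t_{0}$ with a burn-in $t_{0}=O_{\alpha,\beta}(1)$; item (c) reduces the discretization and generalization errors to estimates on drift differences. Since $\Phi(\zeta(0))\le O_{\alpha}(1+\|\zeta(0)\|_{\R^{d}}^{2})\le O_{\alpha}(\mathcal{V}(\zeta))$ and $\Phi$ is locally Lipschitz with at most linearly growing gradient, all the $\mathcal{V}$-weighted estimates transfer to expectations of $\Phi$, which, combined with the square-root losses in the two drift estimates, accounts for the prefactor $1+\|\xi\|_{r}^{5/2}$.

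For the generalization error, item (c) bounds it by $O_{\alpha,\beta}(1+\|\xi\|_{r}^{5/2})\,E\sup_{\xi'}\|H_{L_{n}}(\xi')-H_{L}(\xi')\|_{\R^{d}}$; since the denominators are $\ge\sqrt{\varepsilon}$ and the numerators are geometric averages of the $\nabla\ell(\,\cdot\,;z_{i})$, this reduces, via $M$-smoothness and the bound $A$, to controlling $E\sup_{\|x\|_{\R^{d}}\le\rho}\|\nabla L_{n}(x)-\nabla L(x)\|_{\R^{d}}$ on a ball of Lyapunov-dictated radius $\rho$, which is $O_{\alpha}(n^{-1/2})$ by a second-moment bound together with a smoothness/covering argument over that ball. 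For the discretization error, write $X:=X^{(\eta,\xi,L_{n})}$; on each interval $[\lfloor t/\eta\rfloor\eta,(\lfloor t/\eta\rfloor+1)\eta)$ the drift of \eqref{EQ_Regularized_Adam_Disc} is frozen at the left endpoint and $H^{(\eta)}_{L_{n}}$ is a geometric Riemann sum for the integral defining $H_{L_{n}}$, so bounding $\|H^{(\eta)}_{L_{n}}(X_{\lfloor t/\eta\rfloor\eta})-H_{L_{n}}(X_{t})\|_{\R^{d}}$ and $\|\nabla R(X(\lfloor t/\eta\rfloor\eta))-\nabla R(X(t))\|_{\R^{d}}$ by the oscillation of the path over a window of length $\eta$ (a Brownian increment of size $\sqrt{\eta}$ plus a drift increment of size $\eta$) and the Lyapunov moment bounds, then feeding this into (c), yields $O_{\alpha,\beta}((1+\|\xi\|_{r}^{5/2})\eta^{1/4})$; the exponent $1/4$ comes from a Cauchy--Schwarz split of the Lyapunov-weighted mismatch against the $\sqrt{\eta}$-sized Brownian oscillation.

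It remains to estimate the Gibbs error $\int\Phi(\zeta(0))\,\pi_{\beta}(d\zeta)-\min_{w}\Phi(w)$, for which I would adapt \cite{Cucumber}. By Assumption \ref{Assum_Regularization_Term}, $\nabla\ell(\,\cdot\,;z)$ and $\nabla R$ have disjoint supports, so (up to the asymptotically negligible memory of $H_{L}$) the endpoint-marginal drift of \eqref{EQ_Regularized_Adam_Conti} with $F=L$ equals $-\nabla\Phi$ times a positive scalar field bounded above and below by $\alpha$-constants: it is $-\nabla R=-\varepsilon^{-1/2}\nabla\Phi$ where $\nabla R\ne0$, and $-\nabla L/\sqrt{\varepsilon+\|\nabla L\|_{\R^{d}}^{2}}=-\nabla\Phi/\sqrt{\varepsilon+\|\nabla\Phi\|_{\R^{d}}^{2}}$ where $\nabla R=0$. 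Hence the endpoint marginal of $\pi_{\beta}$ is comparable to the Gibbs measure $\propto e^{-\beta\Phi(w)}dw$ at inverse temperature $\Omega_{\alpha}(\beta)$, whose $\Phi$-expectation exceeds $\min_{w}\Phi$ by the classical amount $O_{\alpha}(\beta^{-1}\log(\beta+1))$; together with the contribution of points where $w^{*}\notin\{R=0\}$ this gives $O_{\alpha}((R(w^{*})+\log(\beta+1))/\beta)$. Summing the four bounds and absorbing prefactors into $1+\|\xi\|_{r}^{5/2}$ gives the theorem for $k\eta\ge t_{0}$. I expect the main obstacle to be parts (b)--(c): coupling two $\mathcal{C}_{r}$-valued diffusions with \emph{distinct} drifts so that the $\mathcal{V}$-weighted distance contracts at a time-independent rate while the contribution of the drift mismatch remains integrable against that contraction, all with the drift depending on the entire past trajectory through $H^{(\cdot)}_{F}$ rather than on the current position --- which is exactly where the argument must go beyond both \cite{bao2020ergodicity} (single drift) and \cite{Cucumber} (Markov, finite-dimensional state).
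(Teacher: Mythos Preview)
Your decomposition and your treatment of the first two pieces are close in spirit to the paper, though not identical: the paper uses only three terms, keeping $L_{n}$ throughout and never passing through $X^{(\xi,L)}$ or the invariant measure of the $L$-dynamics. In particular, the paper handles generalization not by comparing the $L_{n}$- and $L$-dynamics via a uniform bound on $\nabla L_{n}-\nabla L$, but by bounding the \emph{stability} $|E[(\ell(\cdot;z)+\varepsilon^{1/2}R)(X^{(\xi,L_{n})})]-E[(\ell(\cdot;z)+\varepsilon^{1/2}R)(X^{(\xi,L_{n}')})]|$ for neighboring data sets $S,S'$ (via the same drift-comparison theorem you call item~(c), applied to $H_{L_{n}}$ vs.\ $H_{L_{n}'}$), and then invoking the Hardt--Recht--Singer stability-to-generalization theorem. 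Your uniform-concentration route can also be made to work, but only because Assumption~\ref{Assum_Regularization_Term} forces $\nabla\ell(\,\cdot\,;z)\equiv0$ outside the fixed ball $\{\|x\|_{\R^{d}}\le\sqrt{b/m}\}$; your stated reason (``Lyapunov-dictated radius'') would not suffice, since the Lyapunov bound controls only moments.

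The genuine gap is your Gibbs-error argument. The invariant measure $\pi_{\beta}$ of the functional SDE \eqref{EQ_Regularized_Adam_Conti} lives on path space and has no explicit form; the claim that its endpoint marginal is comparable to $e^{-\beta\Phi}$ does not follow from the drift heuristics you give. Two separate issues: (i) the drift $H_{L}(X_{t})-\nabla R(X(t))$ genuinely depends on the whole past through the exponentially weighted integrals, and ``asymptotically negligible memory'' is precisely what must be \emph{proved}, not assumed; (ii) even if the effective drift at stationarity were $-c(x)\nabla\Phi(x)$ with $c$ bounded above and below, the resulting invariant density is $\propto\exp\bigl(-\beta\int^{x}c(y)\nabla\Phi(y)\cdot dy\bigr)$, which is not comparable to $e^{-\beta\Phi}$ unless $c$ is constant, so the $\beta^{-1}\log(\beta+1)$ Laplace estimate does not transfer. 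The paper circumvents both issues by introducing, for each $s$, an auxiliary \emph{Markov} SDE $Y(s,\cdot)$ with drift $-c(\varepsilon)^{-1}\nabla L_{n}-\nabla R$, $c(\varepsilon)=\sqrt{\varepsilon+\|\nabla L_{n}(X^{(\xi,L_{n})}(s))\|_{\R^{d}}^{2}}$ frozen, whose invariant measure is the explicit Gibbs density $\propto\exp(-\beta c(\varepsilon)^{-1}L_{n}-\beta R)$. Two further lemmas then do the work: one shows that at large times the drift $H_{L_{n}}(X^{(\xi,L_{n})}_{t})$ is close in $L^{2}$ to $-c(\varepsilon)^{-1}\nabla L_{n}(X^{(\xi,L_{n})}(t))$ (so your item~(c) transfers expectations from $X^{(\xi,L_{n})}$ to $Y$), and another shows, via an energy identity for $t\mapsto e^{2c_{1}t}L_{n}(X^{(\xi,L_{n})}(t))$, that $E[\|\nabla L_{n}(X^{(\xi,L_{n})}(t))\|_{\R^{d}}^{2}]=O_{\alpha}(\beta^{-1})$ at stationarity, which both quantifies the ``negligible memory'' and reduces $c(\varepsilon)$ to $\varepsilon^{1/2}$ up to the $R(w^{*})/\beta$ term. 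Without this auxiliary-process step (or an equivalent replacement), your fourth term is unbounded.
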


Therefore, the Adam-type algorithm $X^{(\eta, \xi, L_n)}$ for the empirical loss $L_n$ approaches the global minimizer of the regularized expected loss $L + \varepsilon^{1/2} R$ with appropriate hyper parameters $\eta$, $\beta$ and the size $n$ of the training data. 
To the best of our knowledge, this is the first result that guarantees that Adam-type algorithms converge globally to the minimizer of the non-convex objective function. 

\begin{rem}
	\label{Rem_Our_Adam_Relate_Original}
	For Adam defined by (\ref{EQ_Adam_Momentum1}),  (\ref{EQ_Adam_Momentum2}) and (\ref{EQ_Adam_Disc}), by induction, we can show 
	\begin{align*}
		m_k 
		&= (1-\beta_1) \sum_{j=-(k-1)}^0 \beta_1^{-j} \nabla F(\theta_{k-1+j}), \\
		v_k 
		&= (1-\beta_2) \sum_{j=-(k-1)}^0 \beta_2^{-j} [\nabla F(\theta_{k-1+j}) \odot \nabla F(\theta_{k-1+j})]. 
	\end{align*}
	Therefore, omitting the normalization $m_k$ and $v_k$ to $\hat{m}_k$ and $\hat{v}_k$, the updating rule of Adam becomes
	\begin{align}
		\label{EQ_Biased_Adam}
		\theta_k 
		&= \theta_{k-1} 
		- \eta \frac{1}{\sqrt{\varepsilon + (1-\beta_2) \sum_{j=-k-1}^0 \beta_2^{-j} [\nabla F(\theta_{k-1+j}) \odot \nabla F(\theta_{k-1+j})]}} \notag \\
		&\quad\odot (1-\beta_1) \sum_{j=-(k-1)}^0 \beta_1^{-j} \nabla F(\theta_{k-1+j}). 
	\end{align}
	Assuming the relation $\beta_1 = e^{-c_1 \eta}$ and $\beta_2 = e^{-c_2 \eta}$ as in \cite{malladi2022sdes}, (\ref{EQ_Regularized_Adam_Disc}) with constant initial value $\xi \equiv \theta_0$ is obtained by (i) extending the ranges of summations in the numerator and denominator, (ii) adding the smooth clipping $- \nabla R$ and noise term $\sqrt{2 / \beta} dW$, and (iii) changing the adjustment rule of the learning rate to the same form for each coordinate, in (\ref{EQ_Biased_Adam}). 
	(i) and (ii) are needed for (\ref{EQ_Regularized_Adam_Disc}) and (\ref{EQ_Regularized_Adam_Conti}) to define stationary Markov semigroups, and (iii) is needed to evaluate the limiting distribution of (\ref{EQ_Regularized_Adam_Conti}), respectively. 
\end{rem}

\section{Proof Strategy of Theorem \ref{Thm_Main_Result}}
\label{SEC_Proof_Strategy}

First, we establish a technique to evaluate the difference between two functional SDEs with different drift coefficients. 
Let $R : \R^d \to \R$ be $M$-smooth and $(m, b)$-dissipative.
Furthermore, let functionals $\mathcal{H}, H : \mathcal{C}_r \to \R^d$ satisfy the boundedness $\| \mathcal{H}(\xi) \|_{\R^d}, \| H(\xi) \|_{\R^d} \leq K$ and the Lipschitz continuity $\| \mathcal{H}(\xi) - \mathcal{H}(\xi^\prime) \|_{\R^d}, \| H(\xi) - H(\xi^\prime) \|_{\R^d} \leq K \| \xi - \xi^\prime \|_r$ for some $K > 0$. 
Then, we define $\mathcal{X}^{(\xi)}$, $X^{(\eta, \xi)}$ and $X^{(\xi)}$ as the solutions of the following functional SDEs with initial value $\xi \in \mathcal{C}_r$. 
\begin{align}
	d \mathcal{X}(t) 
	&= \mathcal{H}(\mathcal{X}_t) dt - \nabla R(\mathcal{X}(t)) dt + \sqrt{\frac{2}{\beta}} dW(t), \label{EQ_FSDE_Conti} \\
	dX^{(\eta)}(t) 
	&= H(X^{(\eta)}_{\lfloor t / \eta \rfloor \eta}) dt - \nabla R(X^{(\eta)}(\lfloor t / \eta \rfloor \eta)) dt + \sqrt{\frac{2}{\beta}} dW(t), \label{EQ_FSDE_Disc} \\
	d X(t) 
	&= H(X_t) dt - \nabla R(X(t)) dt + \sqrt{\frac{2}{\beta}} dW(t). \notag
\end{align}
Under these conditions and notations, on the basis of the techniques established in \cite{bao2020ergodicity}, we can find a Lyapunov function (see \cite{hairer2011asymptotic} or \cite{bao2020ergodicity} for its definition) for these dynamics and investigate their asymptotic behaviors. 
As a consequence of such a discussion, the differences between $\mathcal{X}^{(\xi)}$ and $X^{(\eta, \xi)}$, or $\mathcal{X}^{(\xi)}$ and $X^{(\xi)}$, can be evaluated as follows. 

\begin{thm}
	\label{Thm_General_Bound_on_Difference}
	Suppose that $F : \R^d \to \R$ is $M$-smooth, and $\beta \geq 2 / m$ and $m / 3 > r$ hold. 	
	Then, for $\alpha = (\| \nabla F(0) \|_{\R^d}, \| \nabla R(0) \|_{\R^d}, K, m, b, M, r, d)$, $|E[F(\mathcal{X}^{(\xi^\prime)}(k \eta))] - E[F(X^{(\eta, \xi)}(k \eta))]|$ is bounded by
	\begin{align}
		&(1 + \| \xi \|_r^2 + \| \xi^\prime \|_r^2) \left\{ \| \xi - \xi^\prime \|_r^{1/2} e^{- c k \eta} \right. \notag \\
		&\left.\quad+ C 
		\left( \sup_{s \geq 0} E[\| H(X^{(\eta, \xi)}_s) - \mathcal{H}(X^{(\eta, \xi)}_s) \|_{\R^d}^2]^{1/4} 
		+ \eta^{1/4} (1 + \| \xi \|_r^{1/2}) \right) \right\} 
		\label{EQ_General_Bound_on_Difference}
	\end{align}
	uniformly on $0 < \eta \leq 1$ and $k$ satisfying $k \eta \geq t_0$, where $c = \Omega_{\alpha, \beta}(1)$ and $C, t_0 = O_{\alpha, \beta}(1)$. 
	Thus, as a consequence of the limit $\eta \to 0$, $|E[F(\mathcal{X}^{(\xi^\prime)}(t))] - E[F(X^{(\xi)}(t))]|$ is bounded by
	\begin{align}
		\label{EQ_General_Bound_on_Difference_Conti}
		(1 + \| \xi \|_r^2 + \| \xi^\prime \|_r^2) \left\{ \| \xi - \xi^\prime \|_r^{1/2} e^{- c t} 
		+ C \sup_{s \geq 0} E[\| H(X^{(\xi)}_s) - \mathcal{H}(X^{(\xi)}_s) \|_{\R^d}^2]^{1/4} \right\}  
	\end{align}
	uniformly on $k$ satisfying $k \eta \geq t_0$. 
\end{thm}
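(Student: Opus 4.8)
The plan is to split $|E[F(\mathcal{X}^{(\xi')}(k\eta))] - E[F(X^{(\eta,\xi)}(k\eta))]|$ by the triangle inequality through the intermediate quantity $E[F(\mathcal{X}^{(\xi)}(k\eta))]$, which produces two contributions: (i) the dependence of the continuous dynamics (\ref{EQ_FSDE_Conti}) on its initial history, $|E[F(\mathcal{X}^{(\xi')}(k\eta))] - E[F(\mathcal{X}^{(\xi)}(k\eta))]|$, and (ii) the combined effect of replacing the drift $\mathcal{H}$ by $H$ and of the time discretization, $|E[F(\mathcal{X}^{(\xi)}(k\eta))] - E[F(X^{(\eta,\xi)}(k\eta))]|$. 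Both will be controlled uniformly in $k$ by combining a Lyapunov (geometric drift) bound with a weak Harris contraction for these $\mathcal{C}_r$-valued dynamics, following and extending the arguments of \cite{bao2020ergodicity} (see also \cite{hairer2011asymptotic}) to the setting of two distinct drift functionals.

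The shared ingredients would be set up first. Using It\^o's formula together with the $(m,b)$-dissipativity and $M$-smoothness of $R$, the boundedness $\|\mathcal{H}\|,\|H\| \le K$, and the nondegeneracy of the noise, one obtains a Lyapunov function $V$ on $\mathcal{C}_r$ comparable to $1 + \|\cdot\|_r^2$ (and, for the Cauchy--Schwarz steps below, also one comparable to $1 + \|\cdot\|_r^4$) satisfying $\mathcal{L}V \le -cV + C$ for the generators of both (\ref{EQ_FSDE_Conti}) and (\ref{EQ_FSDE_Disc}); here the hypotheses $m/3 > r$ and $\beta \ge 2/m$ are exactly what make the restoring force of $R$ dominate the exponential weight $e^{rs}$ that defines $\|\cdot\|_r$. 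This yields the uniform moment bounds $\sup_{t\ge0}E[V(\mathcal{X}^{(\xi)}_t)] \le V(\xi) + C/c$ and the analogues for $X^{(\eta,\xi)}_t$ and $X^{(\xi)}_t$, uniformly in $0 < \eta \le 1$. Together with a minorization estimate on bounded subsets of $\mathcal{C}_r$ --- available because the Brownian noise acts non-degenerately on the present coordinate $\zeta(0)$ while $\mathcal{H}$ and $\nabla R$ are Lipschitz --- the weak Harris theorem then provides a time $t_0 = O_{\alpha,\beta}(1)$ and a factor $\lambda = e^{-ct_0} \in (0,1)$ such that the time-$t_0$ semigroup $P_{t_0}$ of (\ref{EQ_FSDE_Conti}) is a $\lambda$-contraction for the Lyapunov-weighted Wasserstein distance $\mathcal{W}_\rho$ built from $\rho(\zeta,\zeta') = (1 + V(\zeta) + V(\zeta'))\,(\|\zeta-\zeta'\|_r \wedge 1)^{1/2}$; the concavity exponent $1/2$ in $\rho$ is the origin of the fractional powers in (\ref{EQ_General_Bound_on_Difference}). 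Contribution (i) is then immediate: iterating this contraction over $\lfloor k\eta/t_0 \rfloor$ blocks and testing against $F$ (made bounded by a truncation that uses the moment bounds, since $F$ is only $M$-smooth) gives
\[
|E[F(\mathcal{X}^{(\xi')}(k\eta))] - E[F(\mathcal{X}^{(\xi)}(k\eta))]| \le O_{\alpha,\beta}\big((1 + \|\xi\|_r^2 + \|\xi'\|_r^2)\,\|\xi-\xi'\|_r^{1/2}\,e^{-ck\eta}\big).
\]

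For contribution (ii) I would run a block recursion of length $t_0$, enlarging $t_0$ by at most $\eta$ so that it is a multiple of $\eta$. Let $\mu_m$ and $\nu_m$ denote the laws in $\mathcal{C}_r$ of $\mathcal{X}^{(\xi)}_{(m-1)t_0}$ and $X^{(\eta,\xi)}_{(m-1)t_0}$; by the $\mathcal{C}_r$-valued Markov property, $\mu_{m+1} = \mu_m P_{t_0}$ and $\nu_{m+1} = \nu_m Q^{(\eta)}_{t_0}$, where $Q^{(\eta)}_{t_0}$ is the composition of $t_0/\eta$ one-step maps of (\ref{EQ_FSDE_Disc}). Hence
\begin{align*}
\mathcal{W}_\rho(\mu_{m+1}, \nu_{m+1})
&\le \mathcal{W}_\rho(\mu_m P_{t_0}, \nu_m P_{t_0}) + \mathcal{W}_\rho(\nu_m P_{t_0}, \nu_m Q^{(\eta)}_{t_0}) \\
&\le \lambda\, \mathcal{W}_\rho(\mu_m, \nu_m) + \int_{\mathcal{C}_r} \mathcal{W}_\rho\big(\mathrm{Law}(\mathcal{X}^{(\zeta)}_{t_0}), \mathrm{Law}(X^{(\eta,\zeta)}_{t_0})\big)\, \nu_m(d\zeta),
\end{align*}
the first inequality using the contraction of $P_{t_0}$. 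The integrand is a single fixed-length comparison: coupling $\mathcal{X}^{(\zeta)}$ and $X^{(\eta,\zeta)}$ synchronously on $[0,t_0]$ and running a Gr\"onwall estimate on $\|\mathcal{X}^{(\zeta)}(t) - X^{(\eta,\zeta)}(t)\|_{\R^d}^2$ --- in which the infinite-memory term $\|\cdot\|_r$ is absorbed using, again, the exponential weight and the dissipativity of $R$ --- one obtains, for the coupled pair, $E[\|\mathcal{X}^{(\zeta)}_{t_0} - X^{(\eta,\zeta)}_{t_0}\|_r] \le O_{\alpha,\beta}\big( \sup_{0\le s\le t_0} E[\|(H-\mathcal{H})(X^{(\eta,\zeta)}_s)\|_{\R^d}^2]^{1/2} + \eta^{1/2}(1 + \|\zeta\|_r) \big)$, the two terms coming from the drift swap and from the frozen-drift discretization over one window. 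Passing from this $L^1$ bound to $\mathcal{W}_\rho$ costs a square root, and a Cauchy--Schwarz step against the weight $1 + V$ (controlled by the fourth-moment Lyapunov function and the moment bounds, now for $\zeta \sim \nu_m$) turns the integral into $O_{\alpha,\beta}(D)$ with
\[
D = (1 + \|\xi\|_r^2)\Big( \sup_{s\ge0} E[\|(H-\mathcal{H})(X^{(\eta,\xi)}_s)\|_{\R^d}^2]^{1/4} + \eta^{1/4}(1 + \|\xi\|_r^{1/2}) \Big).
\]
Since $\mu_1 = \nu_1 = \delta_\xi$, iterating $\mathcal{W}_\rho(\mu_{m+1},\nu_{m+1}) \le \lambda\,\mathcal{W}_\rho(\mu_m,\nu_m) + O_{\alpha,\beta}(D)$ gives $\sup_m \mathcal{W}_\rho(\mu_m,\nu_m) = O_{\alpha,\beta}(D)$; absorbing the final partial block of length $< t_0$ (a bounded-time comparison of the same type) yields $|E[F(\mathcal{X}^{(\xi)}(k\eta))] - E[F(X^{(\eta,\xi)}(k\eta))]| = O_{\alpha,\beta}(D)$ for every $k$ with $k\eta \ge t_0$, which together with contribution (i) gives (\ref{EQ_General_Bound_on_Difference}). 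Finally, letting $\eta \to 0$: the term $\eta^{1/4}(1+\|\xi\|_r^{1/2})$ vanishes, $X^{(\eta,\xi)}$ converges to $X^{(\xi)}$ on compact time intervals and, by the uniform Lyapunov bounds, in the relevant moments, so (\ref{EQ_General_Bound_on_Difference_Conti}) follows.

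The hard part is the weak Harris step: establishing a quantitative, time-uniform $\mathcal{W}_\rho$-contraction --- and the accompanying minorization --- for a functional SDE whose Brownian noise enters only through the present coordinate $\zeta(0)$, while both $\mathcal{H}(\zeta)$ and $\nabla R(\zeta(0))$ depend on the whole past segment. This degeneracy is what makes \cite{bao2020ergodicity} non-trivial, and here it must be carried out both for the continuous dynamics (\ref{EQ_FSDE_Conti}) and, uniformly in $\eta \le 1$, for the discretization (\ref{EQ_FSDE_Disc}), so that the block recursion closes; reconciling the discrete-in-time flow with the contraction of the continuous semigroup is exactly where the framework of \cite{bao2020ergodicity} has to be extended. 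A secondary difficulty is keeping the per-window Gr\"onwall estimate uniform in the window's starting history, since the term $\|\cdot\|_r$ carries infinite memory and controlling it relies on the interplay between the weight $e^{rs}$ and the constraint $m/3 > r$.
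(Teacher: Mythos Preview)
Your ingredients are the right ones and match the paper's: a Lyapunov function $V\sim 1+\|\cdot\|_r^p$ on $\mathcal C_r$ (Lemma~\ref{Lem_Exi_Lyapunov_Function_Disc}), a small-set/minorization estimate that is uniform in $\eta$ (Lemmas~\ref{Lem_Asymptotic_Boundedness}--\ref{Lem_d-smallness}), a Girsanov/asymptotic-coupling contraction (Lemmas~\ref{Lem_Asymptotic_Coupling_Bound}--\ref{Lem_Alpha_Contracting}), and a one-window Gr\"onwall bound for the drift swap and the frozen discretization. You also correctly locate the hard part.

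The organization, however, differs from the paper's and your version contains a real gap. The paper does \emph{not} pass through the intermediate $\mathcal X^{(\xi)}$ at all. Instead it proves a \emph{two-semigroup} weak Harris theorem (Theorem~\ref{Thm_Uniqueness_Invariant_Measure}): if two kernels $P_{t_*},Q_{t_*}$ share a Lyapunov function $V$, a common $\varepsilon$-$d$-small sublevel set, and a $(\theta,\varDelta)$-contractivity $\mathbb W_d(P_{t_*}(x,\cdot),Q_{t_*}(y,\cdot))\le\theta\,d(x,y)+\varDelta(x)$ for $d(x,y)<1$, then $\mathbb W_{\tilde d}(P_{k_0t_*}\mu,Q_{k_0t_*}\nu)\le\tfrac12\mathbb W_{\tilde d}(\mu,\nu)+C\sum_j\sqrt{1+(P_{jt_*}\mu)(V)+(Q_{jt_*}\nu)(V)}\sqrt{(P_{jt_*}\mu)(\varDelta)}$ with $\tilde d=\sqrt{d(1+V+V)}$. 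The pair $(\mathcal P_t,P_t^{(\eta)})$ is shown to satisfy these hypotheses in Section~\ref{SEC_Ergodicity_FSDE}, yielding Theorem~\ref{Thm_Generalizaiton_of_Bao}; Theorem~\ref{Thm_General_Bound_on_Difference} then follows in one line from the coupling inequality~(\ref{EQ_Coupling_Inequality_Expectation}) for $M$-smooth $F$.

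Your block recursion instead relies on
\[
\mathcal W_\rho(\mu_m P_{t_0},\nu_m Q^{(\eta)}_{t_0})\le \mathcal W_\rho(\mu_m P_{t_0},\nu_m P_{t_0})+\mathcal W_\rho(\nu_m P_{t_0},\nu_m Q^{(\eta)}_{t_0}),
\]
i.e.\ the triangle inequality for $\mathcal W_\rho$. But your $\rho(\zeta,\zeta')=(1+V(\zeta)+V(\zeta'))(\|\zeta-\zeta'\|_r\wedge1)^{1/2}$ --- and equally the paper's $\tilde\rho_{r,\delta}=\sqrt{\rho_{r,\delta}(1+V_4+V_4)}$ --- is \emph{not} a metric, because the $V$-weight couples the two endpoints; hence $\mathcal W_\rho$ does not obey a triangle inequality (even a weak one with a fixed constant). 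You cannot simply drop the weight, since it is exactly what upgrades the bounded-set contraction to a global one testable against unbounded $F$; and replacing $\rho$ by the honest metric $(\|\cdot\|_r\wedge1)^{1/2}$ loses the Harris contraction. The paper avoids the issue entirely: the additive error $\varDelta$ is built into the one-step comparison $\mathbb W_d(P_{t_*}(x,\cdot),Q_{t_*}(y,\cdot))$, and the iteration (Step~3 of the proof of Theorem~\ref{Thm_Uniqueness_Invariant_Measure}) proceeds by disintegrating through an optimal coupling of $(P_{(\ell-1)t_*}(x,\cdot),Q_{(\ell-1)t_*}(y,\cdot))$ and applying the pair estimate pointwise, never invoking a triangle inequality. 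If you want to keep your decomposition, the fix is precisely this: replace the triangle-inequality split by a direct bound on $\mathbb W_\rho(P_{t_0}(x,\cdot),Q^{(\eta)}_{t_0}(y,\cdot))$ in terms of $\rho(x,y)$ plus an additive error --- which is Lemma~\ref{Lem_Alpha_Contracting}, and at that point your argument collapses to the paper's.
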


For the proof of Theorem \ref{Thm_General_Bound_on_Difference}, see Appendix \ref{SEC_Proof_Main}. 

In the rest of this section, we assume Assumptions \ref{Assum_Loss_Function} and \ref{Assum_Regularization_Term}. 
According to Theorem \ref{Thm_General_Bound_on_Difference}, to evaluate the difference between $X^{(\eta, \xi, L_n)}$ and $X^{(\xi, L_n)}$, we only have to evaluate the difference between $H^{(\eta)}_{L_n}$ and $H_{L_n}$. 
Similarly, finding an upper bound of the difference between $H_L$ and $H_{L_n}$ is sufficient to evaluate the difference between $X^{(\xi, L)}$ and $X^{(\xi, L_n)}$. 
Adopting this strategy, we can prove the following. 

\begin{prop}
	\label{Lem_StepSize_and_Gen_Bound_Adam}
	Under the same conditions and notations as Theorem \ref{Thm_Main_Result}, the following inequalities hold uniformly on $0 < \eta \leq 1$ and k satisfying $k \eta \geq t_0$. 
	\begin{align*}
		&\left| E[(L_n + \varepsilon^{1/2} R)(X^{(\eta, \xi, L_n)}(k \eta))] - E[(L_n + \varepsilon^{1/2} R)(X^{(\xi, L_n)}(k \eta))] \right| \\
		&\quad\leq O_{\alpha, \beta} \left( (1 + \| \xi \|_r^{5/2}) \eta^{1/4} \right), \\
		&\left| E[(L + \varepsilon^{1/2} R)(X^{(\xi, L_n)}(t))] - E[(L_n + \varepsilon^{1/2} R)(X^{(\xi, L_n)}(t))] \right| 
		\leq  O_{\alpha, \beta} \left( (1 + \| \xi \|_r^{5/2}) n^{-1/2} \right). 
	\end{align*}
\end{prop}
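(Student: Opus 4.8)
The plan is to obtain both estimates from Theorem \ref{Thm_General_Bound_on_Difference} by specialising $\mathcal{H}$, $H$, $F$ and $\xi'$, which converts each bound into a statement about the discrepancy of two drift functionals along the relevant trajectory. Two routine facts come first. For any $M$-smooth $F$ whose gradient is bounded by $A$ — in particular for $L_n$, $L$ and the perturbed losses $L_n^{(i)}$ appearing below, all built from Assumption \ref{Assum_Loss_Function} — the functionals $H^{(\eta)}_F$ and $H_F$ of (\ref{EQ_Adam_Drift_Coeff_Disc})–(\ref{EQ_Adam_Drift_Coeff_Conti}) satisfy the boundedness and Lipschitz hypotheses of Theorem \ref{Thm_General_Bound_on_Difference} with a constant $K$ depending only on $\alpha$, uniformly in $0<\eta\le 1$: the denominators lie in $[\sqrt{\varepsilon},\sqrt{\varepsilon+A^2}]$, the numerators are bounded by $A$ since the weights $(1-e^{-c_1\eta})e^{c_1 j\eta}$ and $c_1 e^{c_1 s}$ have total mass $1$, and the Lipschitz bounds follow from the quotient and chain rules together with the summability of $\sum_{j\le 0}(1-e^{-c_1\eta})e^{(c_1-r)j\eta}$ and its continuous analogue, which is where $\min\{c_1,c_2\}>r$ enters. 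Moreover $L_n+\varepsilon^{1/2}R$ is $O_\alpha(1)$-smooth with $\|\nabla(L_n+\varepsilon^{1/2}R)(0)\|_{\R^d}\le A+\varepsilon^{1/2}\|\nabla R(0)\|_{\R^d}$, and each $\ell(\cdot;z_i)$ is $M$-smooth with $\|\nabla\ell(0;z_i)\|_{\R^d}\le A$, so every test function used below has $\alpha$-controlled parameters.

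For the first inequality I would apply Theorem \ref{Thm_General_Bound_on_Difference} with $\mathcal{H}=H_{L_n}$, $H=H^{(\eta)}_{L_n}$, $F=L_n+\varepsilon^{1/2}R$ and $\xi'=\xi$. Then $\mathcal{X}^{(\xi')}=X^{(\xi,L_n)}$, $X^{(\eta,\xi)}=X^{(\eta,\xi,L_n)}$, the transient term $\|\xi-\xi'\|_r^{1/2}e^{-ck\eta}$ drops out, and (\ref{EQ_General_Bound_on_Difference}) bounds the left-hand side by $(1+2\|\xi\|_r^2)$ times $C$ times
\[
\sup_{s\ge 0}E\bigl[\,\|H^{(\eta)}_{L_n}(X^{(\eta,\xi,L_n)}_s)-H_{L_n}(X^{(\eta,\xi,L_n)}_s)\|_{\R^d}^2\,\bigr]^{1/4}\;+\;\eta^{1/4}(1+\|\xi\|_r^{1/2}).
\]
Using the lower bound $\sqrt{\varepsilon}$ on the denominators together with $|\sqrt{a}-\sqrt{b}|\le|a-b|/(2\sqrt{\varepsilon})$, the difference of the two ratios at a trajectory $\zeta$ is controlled by the difference of the numerators plus the difference of the squared denominators, and each of these is a Riemann-sum-versus-integral error for $e^{c_i s}\nabla L_n(\zeta(s))$, respectively $e^{c_i s}\|\nabla L_n(\zeta(s))\|_{\R^d}^2$. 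Because $\nabla L_n$ is $M$-Lipschitz and bounded by $A$, such an error is governed by $\|\zeta(\lfloor u/\eta\rfloor\eta)-\zeta(u)\|_{\R^d}$; along $\zeta=X^{(\eta,\xi,L_n)}_s$ the part of the path that solves (\ref{EQ_Regularized_Adam_Disc}) contributes a Brownian increment over a time span at most $\eta$, hence is $O_{\alpha,\beta}(\eta^{1/2})$ in $L^2$, while the drift increment is $O(\eta)$ times a moment factor; here one invokes the uniform-in-time bound $\sup_{t\ge 0}E\|X^{(\eta,\xi,L_n)}(t)\|_{\R^d}^2=O_{\alpha,\beta}(1+\|\xi\|_r^2)$ furnished by the Lyapunov function, and the tails of the sum and integral are summable by $\min\{c_1,c_2\}>r$. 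Thus $E[\|H^{(\eta)}_{L_n}(X^{(\eta,\xi,L_n)}_s)-H_{L_n}(X^{(\eta,\xi,L_n)}_s)\|_{\R^d}^2]=O_{\alpha,\beta}(\eta)(1+\|\xi\|_r)^2$; the fourth root is $O_{\alpha,\beta}(\eta^{1/4}(1+\|\xi\|_r^{1/2}))$, and multiplying by $(1+2\|\xi\|_r^2)$ together with $1+\|\xi\|_r^2\le 2(1+\|\xi\|_r^{5/2})$ yields the first claim.

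For the second inequality the $\varepsilon^{1/2}R$ terms cancel, so it suffices to bound $|E[(L-L_n)(X^{(\xi,L_n)}(t))]|$, for which I would use the standard ghost-sample/stability device. Introduce i.i.d.\ copies $z_1',\dots,z_n'$ of the data, let $L_n^{(i)}$ be the empirical loss with $z_i$ replaced by $z_i'$, and use exchangeability of $(z_1,\dots,z_n,z_1',\dots,z_n')$ under the transposition $z_i\leftrightarrow z_i'$ to write
\[
E[(L-L_n)(X^{(\xi,L_n)}(t))]=\frac1n\sum_{i=1}^n E\bigl[\ell(X^{(\xi,L_n^{(i)})}(t);z_i)-\ell(X^{(\xi,L_n)}(t);z_i)\bigr].
\]
Conditioning on the full sample (so that $\ell(\cdot;z_i)$, $H_{L_n}$ and $H_{L_n^{(i)}}$ become deterministic) and applying the continuous bound (\ref{EQ_General_Bound_on_Difference_Conti}) with $\mathcal{H}=H_{L_n^{(i)}}$, $H=H_{L_n}$, $F=\ell(\cdot;z_i)$ and $\xi'=\xi$, the $i$-th summand is at most $(1+2\|\xi\|_r^2)\,C\sup_{s\ge 0}E[\|H_{L_n}(X^{(\xi,L_n)}_s)-H_{L_n^{(i)}}(X^{(\xi,L_n)}_s)\|_{\R^d}^2]^{1/4}$. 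The decisive point is that $\nabla L_n-\nabla L_n^{(i)}=\frac1n(\nabla\ell(\cdot;z_i)-\nabla\ell(\cdot;z_i'))$ has norm at most $2A/n$ \emph{uniformly} on $\R^d$, so the same quotient estimate as above gives $\|H_{L_n}(\zeta)-H_{L_n^{(i)}}(\zeta)\|_{\R^d}=O_\alpha(n^{-1})$ for \emph{every} $\zeta$, whence the displayed fourth root is $O_\alpha(n^{-1/2})$. Since this bound does not depend on the sample, taking the outer expectation over the data and averaging over $i$ gives $|E[(L-L_n)(X^{(\xi,L_n)}(t))]|\le O_{\alpha,\beta}((1+\|\xi\|_r^2)n^{-1/2})\le O_{\alpha,\beta}((1+\|\xi\|_r^{5/2})n^{-1/2})$.

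The step I expect to be the main obstacle is the discretisation estimate of the second paragraph, i.e.\ bounding $\sup_{s\ge 0}E[\|H^{(\eta)}_{L_n}(X^{(\eta,\xi,L_n)}_s)-H_{L_n}(X^{(\eta,\xi,L_n)}_s)\|_{\R^d}^2]^{1/4}$ by $O(\eta^{1/4})$ times moment factors: this requires the uniform-in-time moment bounds together with the mesh-scale time regularity of the discretised functional SDE, and a careful treatment of the infinite horizon, since the part of $X^{(\eta,\xi,L_n)}_s$ inherited from the initial datum $\xi$ must be absorbed using the exponentially decaying weights (for the regular initial data of Remark \ref{Rem_Our_Adam_Relate_Original} this contribution vanishes outright). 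By comparison the generalisation estimate is comparatively soft once the uniform $2A/n$ sensitivity of $\nabla L_n$ is recorded; the only care there is the measure-theoretic bookkeeping needed to apply Theorem \ref{Thm_General_Bound_on_Difference} conditionally with a data-dependent test function.
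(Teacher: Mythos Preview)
Your proposal is correct and follows essentially the same route as the paper: for the first inequality you apply Theorem~\ref{Thm_General_Bound_on_Difference} with $\mathcal{H}=H_{L_n}$, $H=H^{(\eta)}_{L_n}$ and then bound the drift discrepancy by a Riemann-sum-versus-integral argument (the paper packages this as Lemma~\ref{Lem_Drift_Difference_Disc_Conti} together with the moment bound of Lemma~\ref{Lem_Exi_Lyapunov_Function_Disc}); for the second inequality you spell out the ghost-sample/replace-one-coordinate calculation explicitly, which is exactly the content of the stability-to-generalization result (Theorem~2.2 in \cite{Hadt}) that the paper invokes after establishing the same $O_\alpha(n^{-1})$ sensitivity of $H_{L_n}$ via Lemma~\ref{Lem_Drift_Difference_Empirical_True}.
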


On the other hand, by applying (\ref{EQ_General_Bound_on_Difference_Conti}) to the case of $H = \mathcal{H} = H_{L_n}$, we obtain the exponential convergence of $X^{(\xi, L_n)}$ to its limiting distribution. 
Furthermore, using (3.4) in \cite{Ragi}, we can prove that this limiting distribution concentrates on the set of all minimizers of $L_n + \varepsilon^{1/2} R$. 
As a consequence of such a discussion, we can obtain the following. 

\begin{prop}
	\label{Lem_Gradient_Vanish_in_Adam}
	Under the same conditions and notations as Theorem \ref{Thm_Main_Result}, the following holds uniformly on $t \geq t_0$. 
	\begin{align*}
		&E[(L_n + \varepsilon^{1/2} R)(X^{(\xi, L_n)}(t))] - \min_{w \in \R^d} (L + \varepsilon^{1/2} R)(w) \\
		&\quad\leq O_{\alpha, \beta}(e^{- c t} (1 + \| \xi \|_r^{5/2}) \{ R(w^*) + 1 \}) 
		+ O_\alpha\left( \frac{R(w^*)}{\beta} + \frac{\log (\beta + 1)}{\beta} \right). 
	\end{align*}
\end{prop}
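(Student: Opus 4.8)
The plan is to split the left-hand side into an \emph{exponential-mixing} part and a \emph{concentration} part. Write $U := L_n + \varepsilon^{1/2} R$. By Assumptions \ref{Assum_Loss_Function} and \ref{Assum_Regularization_Term}, $U$ is $M(1+\varepsilon^{1/2})$-smooth, and combining $\|\nabla L_n\|_{\R^d}\le A$ with the $(m,b)$-dissipativity of $R$ shows that $U$ is $(m',b')$-dissipative with $m'=\Omega_\alpha(1)$, $b'=O_\alpha(1)$; in particular $\|w^*\|_{\R^d}=O_\alpha(1)$. Moreover, since $\nabla L_n$ is bounded by $A$ and $M$-Lipschitz and $\min\{c_1,c_2\}>r$, the functional $H_{L_n}:\mathcal{C}_r\to\R^d$ is bounded by $A\varepsilon^{-1/2}$ and Lipschitz on $\mathcal{C}_r$ with constant $O_\alpha(1)$, so $X^{(\cdot,L_n)}$ fits the framework of Theorem \ref{Thm_General_Bound_on_Difference}.

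\emph{Step 1 (exponential mixing towards the invariant measure).} I would apply the continuous-limit bound (\ref{EQ_General_Bound_on_Difference_Conti}) with $F=U$ and $\mathcal{H}=H=H_{L_n}$. Since the two drifts coincide the supremum term vanishes, giving, for all $t\ge t_0$,
\[
	\bigl|E[U(X^{(\xi,L_n)}(t))]-E[U(X^{(\xi',L_n)}(t))]\bigr|\le O_{\alpha,\beta}\!\bigl((1+\|\xi\|_r^2+\|\xi'\|_r^2)\,\|\xi-\xi'\|_r^{1/2}\,e^{-ct}\bigr).
\]
The Lyapunov function supplied by the proof of Theorem \ref{Thm_General_Bound_on_Difference} (in the spirit of \cite{bao2020ergodicity,hairer2011asymptotic}) yields a unique invariant probability measure $\mu_n$ for the $\mathcal{C}_r$-valued process $X^{(\cdot,L_n)}$, with $\int\|\zeta\|_r^p\,\mu_n(d\zeta)=O_{\alpha,\beta}(1)$ for the relevant $p$. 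Taking $\xi'$ distributed as $\mu_n$ (so the law of $\zeta(0)$ under $\mu_n$ is also the law of $X^{(\xi',L_n)}(t)$ for every $t$), integrating the display against $\mu_n$, and using $\|\xi-\xi'\|_r^{1/2}\le\|\xi\|_r^{1/2}+\|\xi'\|_r^{1/2}$ together with the moment bounds, gives
\[
	E[U(X^{(\xi,L_n)}(t))]\le \bar U_n + O_{\alpha,\beta}\!\bigl((1+\|\xi\|_r^{5/2})\,e^{-ct}\bigr),\qquad \bar U_n:=\int U(\zeta(0))\,\mu_n(d\zeta),
\]
the power $5/2$ coming from multiplying the quadratic prefactor by $\|\xi-\xi'\|_r^{1/2}$ and integrating out $\xi'$.

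\emph{Step 2 (the invariant measure concentrates near $\argmin U$).} It remains to prove $\bar U_n\le\min_w U(w)+O_\alpha\!\bigl(\frac{R(w^*)+\log(\beta+1)}{\beta}\bigr)$; this is the core of the proposition. The structural observation is that, by Assumption \ref{Assum_Regularization_Term}, $\nabla L_n$ and $\nabla R$ have disjoint supports, so $\nabla U=\nabla L_n+\varepsilon^{1/2}\nabla R$, and on a near-constant trajectory $\zeta\equiv w$ at which $\|\nabla L_n(w)\|_{\R^d}$ is small one has $H_{L_n}(\zeta)=-\nabla L_n(w)/\sqrt{\varepsilon+\|\nabla L_n(w)\|_{\R^d}^2}\approx-\varepsilon^{-1/2}\nabla L_n(w)$; together with the $-\nabla R$ term the one-point dynamics near the low-gradient region is, to leading order, the Langevin diffusion for the \emph{genuine} potential $\varepsilon^{-1/2}U$, whose Gibbs measure $\pi_\beta\propto\exp(-\beta\varepsilon^{-1/2}U)$ is the Gibbs measure of $U$ at inverse temperature $\beta\varepsilon^{-1/2}$. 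The dissipativity of $U$ and the Lyapunov estimates confine the mass of $\mu_n$ to sublevel sets of $U$, so the contribution of trajectories on which $H_{L_n}$ does not collapse to $-\varepsilon^{-1/2}\nabla U$ is controllably small, and $\bar U_n$ can be compared with $E_{\pi_\beta}[U]$ (by coupling $X^{(\cdot,L_n)}$ with the corresponding Langevin functional SDE and invoking (\ref{EQ_General_Bound_on_Difference_Conti}) once more). Inequality (3.4) of \cite{Ragi} then bounds $E_{\pi_\beta}[U]-\min_w U(w)$ by a quantity of the form $\frac{d}{2\beta\varepsilon^{-1/2}}\log\!\bigl(\tfrac{eM(1+\varepsilon^{1/2})}{m'}(\tfrac{b'\beta\varepsilon^{-1/2}}{d}+1)\bigr)=O_\alpha\!\bigl(\tfrac{\log(\beta+1)}{\beta}\bigr)$; and since $\min_w U(w)\le U(w^*)=L_n(w^*)+\varepsilon^{1/2}R(w^*)$, the term $R(w^*)/\beta$ — and, inside Step 1, the factor $R(w^*)+1$ multiplying $e^{-ct}$ — reflects the size of $U$ at $w^*$.

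\emph{Conclusion and main obstacle.} Adding Steps 1 and 2, and noting $\min_w U(w)-\min_w(L+\varepsilon^{1/2}R)(w)\le L_n(w^*)-L(w^*)$, whose expectation over the i.i.d.\ sample vanishes since $w^*$ depends only on $\D$ (the genuine generalization gap being handled separately in Proposition \ref{Lem_StepSize_and_Gen_Bound_Adam}), yields the claimed inequality, uniformly in $t\ge t_0$. The hard part is Step 2: because $H_{L_n}$ is a non-gradient functional of the entire past trajectory, $\mu_n$ is genuinely non-Gibbsian, so reducing its concentration to (3.4) of \cite{Ragi} needs a \emph{quantitative} form of ``the mass lives where $H_{L_n}=-\varepsilon^{-1/2}\nabla U$'', i.e.\ uniform-in-$\beta$ control of (a) the deviation of typical trajectories from constant ones over the timescales $1/c_1$ and $1/c_2$ and (b) the error of the clipping approximation away from the concentration region — both of which must be kept within the $\beta^{-1}\log(\beta+1)$ budget by means of the dissipativity and the Lyapunov function from \cite{bao2020ergodicity}.
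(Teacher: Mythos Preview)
Your Step~1 is fine and matches the paper's use of (\ref{EQ_General_Bound_on_Difference_Conti}) with $\mathcal{H}=H=H_{L_n}$ to get exponential mixing. The gap is in Step~2, which you correctly flag as the main obstacle but do not close.

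The paper resolves it with two concrete ingredients you are missing. First, it does \emph{not} compare the Adam dynamics with the fixed-scaling Langevin for $\varepsilon^{-1/2}U$. Instead it freezes the normalizer at the \emph{random} value $c(\varepsilon)=(\varepsilon+\|\nabla L_n(X^{(\xi,L_n)}(t/2))\|_{\R^d}^2)^{1/2}$ and compares with the auxiliary SGLD $Y(t/2,\cdot)$ whose drift is $-c(\varepsilon)^{-1}\nabla L_n-\nabla R$; this process has the explicit Gibbs invariant measure $\pi^*\propto\exp\{-\beta c(\varepsilon)^{-1}L_n-\beta R\}$, so (3.4) of \cite{Ragi} applies directly to $c(\varepsilon)^{-1}L_n+R$. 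Second, the difference between the Adam drift $H_{L_n}(X_t)$ and $-c(\varepsilon)^{-1}\nabla L_n(X(t))$ is controlled by Lemma~\ref{Lem_Drift_Difference_SGD_Adam}, whose right-hand side involves only time-averaged increments of $\nabla L_n$ along the trajectory; these are shown to be $O_\alpha(e^{-ct}(1+\|\xi\|_r^2))$ via the exponential convergence to $\mu_*$ already established. Feeding this into Theorem~\ref{Thm_General_Bound_on_Difference} gives $|E[(L_n+c(\varepsilon)R)(X(t))]-E[(L_n+c(\varepsilon)R)(Y(t/2,t))]|=O_{\alpha,\beta}(e^{-ct/4}(1+\|\xi\|_r^{5/2}))$.

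What you are lacking, and what actually produces the $R(w^*)/\beta$ term, is Lemma~\ref{Lem_Adam_Grad_Vanish}: by an It\^o computation on $e^{2c_1 t}L_n(X(t))$ one gets $E[\|\nabla L_n(X^{(\xi,L_n)}(t))\|_{\R^d}^2]=O_\alpha(\beta^{-1}+e^{-ct}(1+\|\xi\|_r^2))$. The remaining gap between the regularizers $c(\varepsilon)R$ and $\varepsilon^{1/2}R$ at the minimizer is then bounded by $\tfrac{R(w^*)}{2\sqrt{\varepsilon}}E[\|\nabla L_n(X(t/2))\|^2]$, which is exactly $O_\alpha(R(w^*)/\beta)$ plus $O_{\alpha,\beta}(R(w^*)e^{-ct}(1+\|\xi\|_r^2))$; this is where both the $R(w^*)/\beta$ term \emph{and} the factor $R(w^*)+1$ in front of $e^{-ct}$ come from, not from Step~1 as you suggest. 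Without the adaptive normalizer $c(\varepsilon)$ and without Lemma~\ref{Lem_Adam_Grad_Vanish}, your plan of comparing directly with the Langevin diffusion for $\varepsilon^{-1/2}U$ would require showing a priori that the denominator in $H_{L_n}$ collapses to $\sqrt{\varepsilon}$ uniformly in $\beta$, which is precisely the content of that lemma.
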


The proofs of Propositions \ref{Lem_StepSize_and_Gen_Bound_Adam} and \ref{Lem_Gradient_Vanish_in_Adam} are also given in Appendix  \ref{SEC_Proof_Main}. 

Putting these inequalities together into 
\begin{align*}
	&E[(L + \varepsilon^{1/2} R)(X^{(\eta, \xi, L_n)}(k \eta))] - \min_{w \in \R^d} (L + \varepsilon^{1/2} R)(w) \\
	&\quad\leq E[(L + \varepsilon^{1/2} R)(X^{(\eta, \xi, L_n)}(k \eta))] - E[(L + \varepsilon^{1/2} R)(X^{(\xi, L_n)}(k \eta))] \\
	&\qquad+ E[(L + \varepsilon^{1/2} R)(X^{(\xi, L_n)}(k \eta))] - E[(L_n + \varepsilon^{1/2} R)(X^{(\xi, L_n)}(k \eta))] \\
	&\qquad+ E[(L_n + \varepsilon^{1/2} R)(X^{(\xi, L_n)}(k \eta))] - \min_{w \in \R^d} (L + \varepsilon^{1/2} R)(w), 
\end{align*}
we obtain Theorem \ref{Thm_Main_Result}. 

\section{Future Works}
\label{SEC_Future_Works}

In this paper, we have shown that the Adam-type algorithm (\ref{EQ_Regularized_Adam_Disc}) converges to the minimizer of the regularized expected loss function globally with an exponential convergence rate. 
In addition, for the size $n$ of the training dataset and learning rate $\eta$, we also have evaluated the generalization and discretization errors of this algorithm uniformly at time by quantities of orders $n^{-1/2}$ and $\eta^{1/4}$, respectively. 
Since our technique is based on the general theory of the ergodicity of functional SDEs, it also can be applied to the analysis of other algorithms such as AdaBelief and AdaGrad. 

However, as shown in \cite{Cucumber}, for SGLD, these errors are bounded by quantities with higher orders $n^{-1}$ and $\eta^{1/2}$. 
The reason for this disagreement is attributed to the exponent $1/4$ in Theorem \ref{Thm_General_Bound_on_Difference}. 
Thus, to increase the orders of these error bounds for the Adam-type algorithm (\ref{EQ_Regularized_Adam_Disc}), we have to establish a sharper technique of evaluating the difference between two functional SDEs. 
In addition, in our algorithm (\ref{EQ_Regularized_Adam_Disc}), the adaptive adjustment of the learning rate is not coordinate-wise. 
This technical requirement is needed to evaluate the limiting distribution of (\ref{EQ_Regularized_Adam_Disc}), but is not natural and should be removed in the future. 

\appendix

\section{Results on Ergodicity of Markov Semigroups}
\label{SEC_Ergodicity_General}

In this section, we extend the result of the ergodicity \cite{hairer2011asymptotic} to the difference of two Markov semigroups (see Theorem \ref{Thm_Uniqueness_Invariant_Measure}). 
This general theory will be used in Appendix \ref{SEC_Ergodicity_FSDE} for the preparation of the proof of Theorem \ref{Thm_General_Bound_on_Difference}. 

Let $(S, \rho)$ be a Polish space and let $\mathcal{B}(S)$ be the $\sigma$-algebra generated from the topology determined by $\rho$. 
We denote the set of all probability measures on $(S, \mathcal{B}(S))$ by $\mathcal{P}(S)$.   

\begin{defi}
	\label{Def_Distance_Like_Func}
	A symmetric function $d : S \times S \to [0, \infty)$ is said to be distance-like if it is lower semicontinuous with respect to the product topology and satisfies $d(x, y) = 0 \Leftrightarrow x = y$. 
\end{defi}

\begin{defi}
	\label{Def_Wasserstein_Type_Distance}
	Let $d : S \times S \to [0, \infty)$ be distance-like. 
	For $\mu, \nu \in \mathcal{P}(S)$, we define the set of all couplings of them by 
	\begin{align*}
		\Pi(\mu, \nu) 
		\coloneqq \{ \pi \in \mathcal{P}(S \times S) \mid \pi(\cdot \times S) = \mu, \pi(S \times \cdot) = \nu \}  
	\end{align*}
	and the Wasserstine type difference of them by
	\begin{align*}
		\mathbb{W}_d(\mu, \nu)
		\coloneqq \inf_{\pi \in \Pi(\mu, \nu)} \int_{S \times S} d(x, y) \pi(dx dy). 
	\end{align*}
\end{defi}

\begin{defi}
	\label{Def_Markov_Kernel}
	$P: S \times \mathcal{B}(S) \to [0, 1]$ is said to be a Markov kernel on $S$ if it satisfies the following two conditions. 
	\begin{enumerate}
		\item For any $x \in S$, $P(x, \cdot)$ is a probability measure on $\mathcal{B}(S)$. 
		\item For any $A \in \mathcal{B}(S)$, $P(\cdot, A)$ is a $\mathcal{B}(S)$-measurable function. 
	\end{enumerate}
\end{defi}

\begin{defi}
	\label{Def_d_small}
	Let $d : S \times S \to [0, 1]$ be distance-like and let $P, Q: S \times \mathcal{B}(S) \to [0, 1]$ be Markov kernels on $S$. 
	For a constant $\varepsilon > 0$, a set $A \subset S$ is said to be $\varepsilon$-$d$-small for $(P, Q)$ if 
	\begin{align*}
		\mathbb{W}_d (P(x, \cdot), Q(y, \cdot)) 
		\leq 1 - \varepsilon
	\end{align*}
	holds for any $x, y \in A$. 
\end{defi}

\begin{defi}
	\label{Def_Alpha_Contracting}
	Let $d : S \times S \to [0, 1]$ and $P, Q: S \times \mathcal{B}(S) \to [0, 1]$ be the same as in Definition \ref{Def_d_small}. 
	For a constant $\theta \in (0, 1)$ and a function $\varDelta : S \to [0, \infty)$, $d$ is said to be a $(\theta, \varDelta)$-contracting for $(P, Q)$ if  
	\begin{align*}
		\mathbb{W}_d( P(x, \cdot), Q(y, \cdot)) 
		\leq \theta d(x, y) + \varDelta(x)
	\end{align*}
	holds for any $x, y \in S$ satisfying $d(x, y) < 1$. 
\end{defi}

\begin{defi}
	\label{Def_Markovian_Semigroup}
	Let $\mathcal{T}$ be $[0, \infty)$ or $\{ k \eta \}_{k=0}^\infty$. 
	$\{ P_t \}_{t \in \mathcal{T}}$ is said to be a Markov semigroup on $S$ if each $P_t$ is a Markov kernel on $S$ and $P_t P_s = P_{t+s}$ holds for any $s, t \in \mathcal{T}$. 
	A measurable function $V : S \to [0, \infty)$ is said to be a Lyapunov function for $\{ P_t \}_{t \in \mathcal{T}}$ if the following two conditions are satisfied. 
	\begin{enumerate}
		\item For any $t \in \mathcal{T}$ and $x \in S$, $P_t(x, V) \coloneqq \int_S V(y) P_t(x, dy) < \infty$ holds, 
		\item There exist constants $C_V, \gamma, K_V > 0$ such that  
		\begin{align}
			\label{EQ_Lyapunov_Condition}
			P_t(x, V) 
			\leq C_V e^{- \gamma t} V(x) + K_V
		\end{align}
		holds for any $t \in \mathcal{T}$ and $x \in S$. 
	\end{enumerate}
\end{defi}

The following is an extension of Theorem 4.8 in \cite{hairer2011asymptotic} to the difference of two Markov semigroups. 

\begin{thm}
	\label{Thm_Uniqueness_Invariant_Measure}
	Let $\{ P_t \}_{t \in \mathcal{T}}$ and $\{ Q_t \}_{t \in \mathcal{T}}$ be Markov semigroups on $S$ and let $V$ be a common Lyapunov function of them. 
	Suppose that the following conditions are satisfied for a constant $t_* > \log (8 C_V) / \gamma$ and a distance-like function $d : S \times S \to [0, 1]$. 
	\begin{enumerate}
		\item There exist some $\theta \in (0, 1)$ and $\varDelta : S \to [0, \infty)$ such that $d$ is a $(\theta, \varDelta)$-contracting for $(P_{t_*}, Q_{t_*})$. 
		\item There exists some $\varepsilon > 0$ such that the set $\{ x \in S \mid V(x) \leq 4 K_V \}$ is $\varepsilon$-$d$-small for $(P_{t_*}, Q_{t_*})$. 
	\end{enumerate}
	Finally, we define 
	\begin{align*}
		\tilde{d}(x, y) 
		= \sqrt{d(x, y) (1 + V(x) + V(y))}.  
	\end{align*}
	Then, for $\alpha = (\theta, \varepsilon, \gamma, K_V)$, we can take a natural number $k_0 = O_\alpha(1)$ and a constant $C = O_\alpha(1)$ so that
	\begin{align}
		\label{EQ_Markov_Kernel_Contraction}
		\mathbb{W}_{\tilde{d}}(P_{k_0 t_*} \mu, Q_{k_0 t_*} \nu)
		\leq \frac{1}{2} \mathbb{W}_{\tilde{d}}(\mu, \nu) + C \sum_{j=0}^{k_0-1} \sqrt{1 + (P_{j t_*} \mu)(V) + (Q_{j t_*} \nu)(V)} \sqrt{(P_{j t_*}\mu)(\varDelta)}
	\end{align}
	holds for any $\mu, \nu \in \mathcal{P}(S)$. 
	Here, for $t \geq 0$, $(P_t \mu)(V) \coloneqq \int_S P_t(x, V) \mu(dx)$ and $(Q_t \nu)(V)$ is defined similarly. 
\end{thm}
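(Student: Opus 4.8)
The plan is to adapt the proof of Theorem~4.8 of \cite{hairer2011asymptotic} to the present two-semigroup setting, carrying along the extra error encoded in $\varDelta$. The backbone is a pointwise estimate for $\mathbb{W}_{\tilde d}(P_{t_*}(x,\cdot),Q_{t_*}(y,\cdot))$, which is then integrated against a near-optimal coupling and iterated $k_0$ times; the factor $\tfrac12$ and the constants $C,k_0$ depending only on $\alpha=(\theta,\varepsilon,\gamma,K_V)$ will drop out of the choice of $k_0$.

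The first step is to pass from a pushforward bound to a pointwise one. Given $\mu,\nu\in\mathcal{P}(S)$ and a coupling $\pi_0\in\Pi(\mu,\nu)$, glue to $\pi_0$ a measurable family $\{\pi_{x,y}\}$ of couplings of $P_{t_*}(x,\cdot)$ and $Q_{t_*}(y,\cdot)$ that nearly realize $\mathbb{W}_d(P_{t_*}(x,\cdot),Q_{t_*}(y,\cdot))$; the result is a coupling of $P_{t_*}\mu$ and $Q_{t_*}\nu$. Since $\int(1+V(u)+V(v))\,\pi_{x,y}(du\,dv)=1+P_{t_*}(x,V)+Q_{t_*}(y,V)$ depends only on the marginals, the Cauchy--Schwarz inequality applied inside each $\pi_{x,y}$ gives
\[
\mathbb{W}_{\tilde d}(P_{t_*}\mu,Q_{t_*}\nu)\le\int_{S\times S}\sqrt{\mathbb{W}_d(P_{t_*}(x,\cdot),Q_{t_*}(y,\cdot))}\,\sqrt{1+P_{t_*}(x,V)+Q_{t_*}(y,V)}\;\pi_0(dx\,dy),
\]
and because $t_*>\log(8C_V)/\gamma$, the Lyapunov bound (\ref{EQ_Lyapunov_Condition}) controls $1+P_{t_*}(x,V)+Q_{t_*}(y,V)\le 1+\tfrac18(V(x)+V(y))+2K_V$, hence the second factor by a constant multiple of $\sqrt{1+V(x)+V(y)}$.

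Next comes the pointwise estimate of the integrand, split into two regimes. When $V(x)+V(y)$ exceeds a large threshold $\beta_0=O_\alpha(1)$, the Lyapunov gain dominates: using $\mathbb{W}_d\le\theta d(x,y)+\varDelta(x)$ when $d(x,y)<1$ and $\mathbb{W}_d\le1$ when $d(x,y)=1$, together with $1+\tfrac18(V(x)+V(y))+2K_V\le\tfrac14(1+V(x)+V(y))$ for $\beta_0$ large enough, the integrand is bounded by $\tfrac12\tilde d(x,y)+C\sqrt{1+V(x)+V(y)}\sqrt{\varDelta(x)}$. When $V(x)+V(y)\le\beta_0$, a single $t_*$-step need not contract, but the $d$-smallness of $\{V\le4K_V\}$, combined with the Markov inequality applied to the Lyapunov bound — which forces $P_{t_*}(x,\cdot)$ and $Q_{t_*}(y,\cdot)$ to place a definite fraction of their mass in $\{V\le4K_V\}$ after finitely many steps — lets one run the asymptotic-coupling argument of \cite{hairer2011asymptotic} over $k_0=O_\alpha(1)$ steps to gain a factor $\tfrac12$ in $\mathbb{W}_{\tilde d}$; enlarging $k_0$ further also absorbs the accumulated constants from the large-$V$ regime, which yields (\ref{EQ_Markov_Kernel_Contraction}). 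Assembling the iteration, one applies the pointwise estimate along $k_0$ consecutive steps, choosing at the $j$-th step $\pi_0$ to be an optimal $\mathbb{W}_{\tilde d}$-coupling of the current pair $(P_{jt_*}\mu,Q_{jt_*}\nu)$; the $\varDelta$-contributions then accumulate into $\sum_{j=0}^{k_0-1}$, and one last application of Cauchy--Schwarz against $P_{jt_*}\mu$ turns $\int\sqrt{1+V(x)+V(y)}\sqrt{\varDelta(x)}$ into $\sqrt{1+(P_{jt_*}\mu)(V)+(Q_{jt_*}\nu)(V)}\,\sqrt{(P_{jt_*}\mu)(\varDelta)}$, exactly the form in the statement.

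I expect the pointwise estimate to be the main obstacle: the large-$V$ regime forces the threshold $\beta_0$ to be large so that the Lyapunov factor $\tfrac18$ beats $1$ after rescaling, whereas within $\{V\le\beta_0\}$ a single step of length $t_*$ genuinely fails to be a $\mathbb{W}_{\tilde d}$-contraction, so one is compelled to iterate and run the asymptotic-coupling argument over $k_0$ steps. The delicate points are to make the two thresholds $\beta_0$ and $k_0$ compatible, to handle trajectories that move between the two regimes during the $k_0$ steps, and to check that the $d$-smallness hypothesis — stated only for the level set $\{V\le4K_V\}$ rather than for $\{V\le\beta_0\}$ — still propagates through the iteration via the Lyapunov-driven concentration of mass, all while keeping the $\varDelta$-error linear in $\varDelta$ and correctly weighted by $\sqrt{1+V+V}$.
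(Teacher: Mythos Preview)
Your overall strategy---reduce to pointwise bounds, split according to the size of $V$, and iterate---is the right one, and your identification of the obstacles is accurate. But you are missing the single device that the paper uses to dissolve all three obstacles at once: rather than working directly with $\tilde d$, the paper introduces an auxiliary cost
\[
\tilde d_\delta(x,y)=\sqrt{d(x,y)\bigl(1+\delta V(x)+\delta V(y)\bigr)}
\]
with a small parameter $\delta>0$, and proves a \emph{single-step} contraction
\[
\mathbb{W}_{\tilde d_\delta}\bigl(P_{t_*}(x,\cdot),Q_{t_*}(y,\cdot)\bigr)\le\tilde\theta\,\tilde d_\delta(x,y)+\tilde C\sqrt{(1+V(x)+V(y))\varDelta(x)}
\]
valid for \emph{all} $x,y$ with a single $\tilde\theta\in(0,1)$. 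The point is that in each of the three regimes ($d<1$; $d=1$ with $V(x)+V(y)\ge4K_V$; $d=1$ with $V(x)+V(y)\le4K_V$) the contraction factor one obtains is of the form $\sqrt{\theta(1+2\delta K_V)}$, $\sqrt{\max\{(1+2\delta K_V)/(1+3\delta K_V),1/2\}}$, and $\sqrt{(1-\varepsilon)(1+\tfrac52\delta K_V)}$ respectively, all of which can be made $<1$ simultaneously by choosing $\delta=\Omega_\alpha(1)$ small enough. Once this one-step bound is in hand, the iteration is completely mechanical (your Step~3 is then just induction on $k$), and passing back from $\tilde d_\delta$ to $\tilde d$ costs only a factor $C_\delta/c_\delta=O_\alpha(1)$, absorbed by choosing $k_0$ so that $(C_\delta/c_\delta)\tilde\theta^{k_0}\le\tfrac12$.

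Without the $\delta$-trick, the case $d(x,y)<1$ with $V(x)+V(y)$ small genuinely fails to contract in one step: after Cauchy--Schwarz the factor in front of $\tilde d$ is $\sqrt{\theta(1+2K_V+\tfrac18\beta_0)}$, which need not be $<1$. Your proposal to repair this by iterating $k_0$ steps and tracking regime transitions is not wrong in principle, but it forces you to control trajectories that oscillate between $\{V\le\beta_0\}$ and its complement while the coupling simultaneously wanders between $\{d<1\}$ and $\{d=1\}$, and the $d$-smallness hypothesis only fires when \emph{both} coordinates lie in $\{V\le4K_V\}$, not $\{V\le\beta_0\}$. All of this can presumably be pushed through, but the $\tilde d_\delta$ rescaling replaces that bookkeeping by a one-line choice of $\delta$. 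I would rewrite the argument around $\tilde d_\delta$; the three-case pointwise bound then matches Steps~4--6 of the paper and the rest of your outline goes through unchanged.
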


\begin{proof}
	Our proof is split into several steps. \\
	
	\noindent {\bf \underline{Step 1}} We may assume $\mu, \nu$ are delta distributions. \\
	
	\noindent According to Theorem 4.8 in \cite{villani2009optimal}, 
	\begin{align*}
		\mathbb{W}_{\tilde{d}}(P_{k t_*} \mu, Q_{k t_*} \nu) 
		\leq \int_{S \times S} \mathbb{W}_{\tilde{d}}(P_{k t_*}(x, \cdot), Q_{k t_*}(y, \cdot)) \pi(dx dy)
	\end{align*}
	holds for any $k \in \N$ and $\pi \in \Pi(\mu, \nu)$. 
	For any $\pi \in \Pi(\mu, \nu)$, the Schwarz's inequality yields 
	\begin{align*}
		&\int_{S \times S} \sqrt{1 + P_{j t_*}(x, V) + Q_{j t_*}(y, V)} \sqrt{P_{j t_*}(x, \varDelta)} \pi(dx dy) \\
		&\quad\leq \sqrt{1 + (P_{j t_*} \mu)(V) + (Q_{j t_*} \nu)(V)} \sqrt{(P_{j t_*}\mu)(\varDelta)}.
	\end{align*} 
	Therefore, if we show 
	\begin{align}
		\label{EQ_Markov_Kernel_Contraction_Dirac}
		\mathbb{W}_{\tilde{d}}(x, \cdot), Q_{k_0 t_*}(y, \cdot))
		\leq \frac{1}{2} \tilde{d}(x, y) + C \sum_{j=0}^{k_0-1} \sqrt{1 + P_{j t_*}(x, V) + Q_{j t_*}(y, V)} \sqrt{P_{j t_*}(x, \varDelta)}, 
	\end{align}
	which is (\ref{EQ_Markov_Kernel_Contraction}) when $\mu, \nu$ are delta distributions, the general case (\ref{EQ_Markov_Kernel_Contraction}) follows by taking the infimum with respect to $\pi$. \\
	
	\noindent {\bf \underline{Step 2}} Introducing $\tilde{d}_\delta$. \\
	
	\noindent Let $\delta > 0$ and $\tilde{d}_\delta(x, y) = \sqrt{d(x, y) (1 + \delta V(x) + \delta V(y))}$. 
	Thus, for each fixed $\delta > 0$, there exists constants $c_\delta = \Omega_\delta(1)$ and $C_\delta = O_\delta(1)$ such that $c_\delta \tilde{d}(x, y) \leq \tilde{d}_\delta(x, y) \leq C_\delta \tilde{d}(x, y)$ holds. 
	In the rest of proof, we will show that for a sufficiently small $\delta = \Omega_\alpha(1)$, we can take some $\tilde{\theta} = \Omega_{\alpha, \delta}(1)$ and $\tilde{C} = O_{\alpha, \delta}(1)$ so that $\tilde{\theta} \in (0, 1)$ and 
	\begin{align}
		\label{EQ_contracting_ineq_each_case}
		\mathbb{W}_{\tilde{d}_\delta}(P_{t_*}(x, \cdot), Q_{t_*}(y, \cdot)) 
		\leq \tilde{\theta} \tilde{d}_\delta(x, y) + \tilde{C} \sqrt{(1 + V(x) + V(y)) \varDelta(x)} 
	\end{align}
	holds. 
	As we will see in Step 3, (\ref{EQ_contracting_ineq_each_case}) is sufficient to prove (\ref{EQ_Markov_Kernel_Contraction_Dirac}). \\	
	
	\noindent {\bf \underline{Step 3}} (\ref{EQ_contracting_ineq_each_case}) indicates the following for any $k \in \N$. 
	\begin{align}
		\label{EQ_k_t_*_Ineq}
		\mathbb{W}_{\tilde{d}_\delta}(P_{k t_*}(x, \cdot), Q_{k t_*}(y, \cdot)) 
		\leq \tilde{\theta}^k \tilde{d}_\delta(x, y) 
		+ \tilde{C} \sum_{j=0}^{k-1} \sqrt{1 + P_{j t_*}(x, V) + Q_{j t_*}(y, V)} \sqrt{P_{j t_*}(x, \varDelta)}. 
	\end{align}
	In particular, if $\delta = \Omega_\alpha(1)$, we have (\ref{EQ_Markov_Kernel_Contraction_Dirac}). \\
	
	\noindent When $k=1$, (\ref{EQ_k_t_*_Ineq}) is clear by (\ref{EQ_contracting_ineq_each_case}). 
	Assume that (\ref{EQ_k_t_*_Ineq}) holds for $k = \ell-1$. 
	Then, for any $\pi \in \Pi(P_{(\ell-1) t_*}(x, \cdot), Q_{(\ell-1) t_*}(y, \cdot))$, we have
	\begin{align*}
		P_{\ell t_*}(x, \cdot) 
		= \int_S P_{t_*}(x^\prime, \cdot) P_{(\ell-1) t_*}(x, d x^\prime)
		= \int_{S \times S} P_{t_*}(x^\prime, \cdot) \pi(d x^\prime d y^\prime). 
	\end{align*} 
	Since the similar result also holds for $Q_{\ell t_*}(y, \cdot)$, by Theorem 4.8 in \cite{villani2009optimal}, we have
	\begin{align*}
		\mathbb{W}_{\tilde{d}_\delta}(P_{\ell t_*}(x, \cdot), Q_{\ell t_*}(y, \cdot))
		&\leq \int_{S \times S} \mathbb{W}_{\tilde{d}_\delta}(P_{t_*}(x^\prime, \cdot), Q_{t_*}(y^\prime, \cdot)) \pi(dx^\prime dy^\prime). 
	\end{align*}
	Therefore, the Schwarz's inequality yields 
	\begin{align*}
		&\mathbb{W}_{\tilde{d}_\delta}(P_{\ell t_*}(x, \cdot), Q_{\ell t_*}(y, \cdot)) \\
		&\quad\leq \tilde{\theta} \int_{S \times S} \tilde{d}_\delta(x^\prime, y^\prime) \pi(d x^\prime d y^\prime) 
		+ \tilde{C} \int_{S \times S} \sqrt{(1 + V(x^\prime) + V(y^\prime)) \varDelta(x^\prime)} \pi(d x^\prime d y^\prime) \\
		&\quad\leq \tilde{\theta} \int_{S \times S} \tilde{d}_\delta(x^\prime, y^\prime) \pi(d x^\prime d y^\prime) 
		+ \tilde{C} \sqrt{1 + P_{(\ell-1) t_*}(x, V) + Q_{(\ell-1) t_*}(\cdot, V)} \sqrt{P_{(k-1) t_*}(x, \varDelta)},  
	\end{align*}
	and taking the infimum with respect to $\pi$, we obtain 
	\begin{align*}
		\mathbb{W}_{\tilde{d}_\delta}(P_{\ell t_*}(x, \cdot), Q_{\ell t_*}(y, \cdot)) 
		&\leq \tilde{\theta} \mathbb{W}_{\tilde{d}_\delta}(P_{(\ell-1) t_*}(x, \cdot), Q_{(\ell-1) t_*}(y, \cdot)) \\
		&\quad+ \tilde{C} \sqrt{1 + P_{(\ell-1) t_*}(x, V) + Q_{(\ell-1) t_*}(\cdot, V)} \sqrt{P_{(k-1) t_*}(x, \varDelta)}. 
	\end{align*}
	Thus, (\ref{EQ_k_t_*_Ineq}) also holds for $k=\ell$. 
	
	$(C_\delta / c_\delta) \tilde{\theta}^{k_0} \leq 1/2$ holds for sufficiently large $k_0 = O_{\alpha, \delta}(1)$ since $\tilde{\theta} \in (0, 1)$. 
	Therefore, by taking $C = (C_\delta / c_\delta) \tilde{C} = O_{\alpha, \delta}(1)$, the pair $(k_0, C)$ satisfies (\ref{EQ_Markov_Kernel_Contraction}). 
	Hence, if we can take $\delta = \Omega_\alpha(1)$ so that (\ref{EQ_contracting_ineq_each_case}) holds, the proof is completed. \\
	
	\noindent {\bf \underline{Step 4}} How to take $\delta$, $\tilde{\theta}$ and $\tilde{C}$ so that (\ref{EQ_contracting_ineq_each_case}) holds; when $d(x, y) < 1$. \\
	
	\noindent For any $\pi \in \Pi(P_{t_*}(x, \cdot), Q_{t_*}(y, \cdot))$, the Schwarz's inequality indicates 
	\begin{align*}
		&\mathbb{W}_{\tilde{d}_\delta}(P_{t_*}(x, \cdot), Q_{t_*}(y, \cdot))^2 \\
		&\quad\leq \left( \int_{S \times S} d(x^\prime, y^\prime) \pi(d x^\prime d y^\prime) \right) 
		\times \left( \int_{S \times S} (1 + \delta V(x^\prime) + \delta V(y^\prime)) \pi(d x^\prime d y^\prime) \right).
	\end{align*}
	Here, since $\pi \in \Pi(P_{t_*}(x, \cdot), Q_{t_*}(y, \cdot))$ and $t_* > \log(8 C_V) / \gamma$, we have 
	\begin{align*}
		\int_{S \times S} (1 + \delta V(x^\prime) + \delta V(y^\prime)) \pi(d x^\prime d y^\prime)
		&= 1 + \delta P_{t_*}V (x) + \delta Q_{t_*}V (y) \\
		&\leq 1 + \frac{\delta}{8} \{ V(x) + V(y) \} + 2 \delta K_V. 
	\end{align*}
	Therefore, by $(\theta, \varDelta)$-contractivity of $d$, taking the infimum with respect to $\pi$, we obtain 
	\begin{align*}
		&\mathbb{W}_{\tilde{d}_\delta}(P_{t_*}(x, \cdot), Q_{t_*}(y, \cdot))^2 \\
		&\quad\leq \left( \theta d(x, y) + \varDelta(x) \right) \left( 1 + \frac{\delta}{8} \Big\{ V(x) + V(y) \Big\} + 2 \delta K_V \right) \\
		&\quad\leq \theta (1 + 2 \delta K_V) \tilde{d}_\delta(x, y)^2 + \left( 1 + \frac{\delta}{8} + 2 \delta K_V \right) (1 + V(x) + V(y)) \varDelta(x) 
	\end{align*}
	Hence, in this case, taking $\delta = \Omega_\alpha(1)$ so that $\tilde{\theta} = \sqrt{\theta (1 + 2 \delta K_V)} \in (0, 1)$ holds and $\tilde{C} = \sqrt{1 + \frac{\delta}{8} + 2 \delta K_V}$ is sufficient for (\ref{EQ_contracting_ineq_each_case}). \\
	
	\noindent {\bf \underline{Step 5}} How to take $\delta$, $\tilde{\theta}$ and $\tilde{C}$ so that (\ref{EQ_contracting_ineq_each_case}) holds; when $d(x, y) = 1$ and $V(x) + V(y) \geq 4 K_V $. \\
	
	\noindent In this case, we have
	\begin{align*}
		\tilde{d}_\delta(x, y)^2
		&= 1 + \delta (V(x) + V(y)) 
		\geq 1 + 3 \delta K_V + \frac{\delta}{4} (V(x) + V(y)).  
	\end{align*}
	Therefore, since $d \leq 1$ and $V$ is a common Lyapunov function for $P_t$ and $Q_t$, by the Schwarz's inequality, we obtain 
	\begin{align*}
		\mathbb{W}_{\tilde{d}_\delta}(P_{t_*}(x, \cdot), Q_{t_*}(y, \cdot))^2 
		&\leq \inf_{\pi \in \Pi(P_{t_*}(x, \cdot), Q_{t_*}(y, \cdot))} \int_{S \times S} d(x^\prime, y^\prime) (1 + \delta V(x^\prime) + \delta V(y^\prime)) \pi(d x^\prime d y^\prime) \\
		&\leq 1 + 2 \delta K_V + \frac{\delta}{8} (V(x) + V(y)) \\
		&\leq \max \left\{ \frac{1 + 2 \delta K_V}{1 + 3 \delta K_V}, \frac{1}{2} \right\} \left( 1 + 3 \delta K_V + \frac{\delta}{4} (V(x) + V(y)) \right) \\
		&\leq \max \left\{ \frac{1 + 2 \delta K_V}{1 + 3 \delta K_V}, \frac{1}{2} \right\} \tilde{d}_\delta(x, y)^2. 
	\end{align*}
	Hence, in this case, $\delta$ and $\tilde{C}$ can be chosen arbitrarily since $\tilde{\theta} = \sqrt{\max \left\{ \frac{1 + 2 \delta K_V}{1 + 3 \delta K_V}, \frac{1}{2} \right\}} \in (0, 1)$ holds always. \\
	
	\noindent {\bf \underline{Step 6}} How to take $\delta$, $\tilde{\theta}$ and $\tilde{C}$ so that (\ref{EQ_contracting_ineq_each_case}) holds; when $d(x, y) = 1$ and $V(x) + V(y) \leq 4 K_V $. \\
	
	\noindent Since $\{ V \leq 4 K_V \}$ is $\varepsilon$-$d$-small for $(P_{t_*}, Q_{t_*})$ and $d$ is lower semicontinuous, by Theorem 4.1 in \cite{villani2009optimal}, we can take some $\pi \in \Pi(P_{t_*}(x, \cdot), Q_{t_*}(y, \cdot))$ so that $\int_S d(x^\prime, y ^\prime) \pi(d x^\prime d y^\prime) \leq 1 - \varepsilon$ holds. 
	Thus, by the Schwarz's inequality, we obtain 
	\begin{align*}
		\mathbb{W}_{\tilde{d}_\delta}(P_{t_*}(x, \cdot), Q_{t_*}(y, \cdot))^2 
		&\leq (1 - \varepsilon) 
		\times \int_{S \times S} (1 + \delta V(x^\prime) + \delta V(y^\prime)) \pi(d x^\prime d y^\prime) \\
		&\leq (1 - \varepsilon) \left\{ 1 + 2 \delta K_V + \frac{\delta}{8} (V(x) + V(y)) \right\} \\
		&\leq (1 - \varepsilon) \left\{ 1 + \frac{5}{2} \delta K_V \right\} \\
		&\leq (1 - \varepsilon) \left\{ 1 + \frac{5}{2} \delta K_V \right\} \tilde{d}_\delta(x, y)^2. 
	\end{align*}
	Here, we used $\tilde{d}_\delta(x, y) \geq 1$ in the last inequality. 
	Hence, in this case, taking $\tilde{C}$ arbitrarily and $\delta = \Omega_\alpha(1)$ so that $\tilde{\theta} = \sqrt{(1 - \varepsilon) \left\{ 1 + \frac{5}{2} \delta K_V \right\}} \in (0, 1)$ holds is sufficient for (\ref{EQ_contracting_ineq_each_case}). 
\end{proof}

\begin{defi}
	\label{Def_Feller_Property}
	A Markov semigroup $\{ P_t \}_{t \in \mathcal{T}}$ on $S$ is said to satisfy the Feller property if $P_t(\cdot, f) \coloneqq \int_S f(y) P_t(\cdot, dy)$ defines a continuous function with respect to $\rho$ for any fixed $t \in \mathcal{T}$ and $f \in C_0(S; \R)$. 
	Here, $C_0(S; \R)$ is the set of all compactly supported and real valued continuous functions on $(S, \rho)$. 
\end{defi}

\begin{prop}
	\label{Thm_Existence_Invariant_Measure_Origin}
	Assume the same condition as Theorem \ref{Thm_Uniqueness_Invariant_Measure} with $P_t = Q_t$ and $\varDelta \equiv 0$. 
	Let a distance $d_0$ on $S$ satisfying $d_0 \leq \tilde{d}$ define the same topology as $\rho$. 
	Furthermore, let $\{ P_t \}_{t \in \mathcal{T}}$ satisfy the Feller property. 
	Then, there exist the unique invariant measures $\mu_*$ of $\{ P_t \}_{t \in \mathcal{T}}$ such that $V$ is integrable with respect to $\mu_*$. 
\end{prop}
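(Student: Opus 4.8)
The plan is to read this as the Banach/Harris fixed-point scheme, with Theorem~\ref{Thm_Uniqueness_Invariant_Measure} (specialized to $P_t = Q_t$, $\varDelta \equiv 0$) supplying the contraction. That theorem then gives $k_0, t_*$ with $\mathbb{W}_{\tilde d}(P_{k_0 t_*}\mu, P_{k_0 t_*}\nu) \leq \tfrac12 \mathbb{W}_{\tilde d}(\mu,\nu)$ for all $\mu,\nu \in \mathcal{P}(S)$. Work on $\mathcal{M} := \{\mu \in \mathcal{P}(S) : \mu(V) < \infty\}$, and write $\Phi := P_{k_0 t_*}$ for the induced map on measures. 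Using the product coupling and $d \leq 1$ one checks $\mathbb{W}_{\tilde d}(\mu,\nu) \leq (1 + \mu(V) + \nu(V))^{1/2} < \infty$ on $\mathcal{M}$, so $(\mathcal{M}, \mathbb{W}_{\tilde d})$ is a genuine metric space ($\mathbb{W}_{\tilde d}$ separates points because $d$, and hence $\tilde d$, is lower semicontinuous and distance-like), and (\ref{EQ_Lyapunov_Condition}) shows $\Phi(\mathcal{M}) \subset \mathcal{M}$. Any $V$-integrable invariant measure $\nu$ satisfies $P_{t_*}\nu = \nu$, hence $\Phi\nu = \nu$; and if $\Phi\mu=\mu$ and $\Phi\mu'=\mu'$ with $\mu,\mu'\in\mathcal{M}$, then $\mathbb{W}_{\tilde d}(\mu,\mu')\leq\tfrac12\mathbb{W}_{\tilde d}(\mu,\mu')<\infty$ forces $\mu=\mu'$. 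This already yields uniqueness, and the same comparison applied to $P_s\mu_*$ (which lies in $\mathcal{M}$ by (\ref{EQ_Lyapunov_Condition}) and is $\Phi$-fixed because the semigroup commutes) upgrades invariance under $\Phi$ to invariance under every $P_t$, $t\in\mathcal{T}$. So the task reduces to producing one $\Phi$-fixed point in $\mathcal{M}$.

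For existence I would iterate from a Dirac mass: pick any $x_0$ and set $\mu_n := \Phi^n\delta_{x_0} = P_{n k_0 t_*}\delta_{x_0}$. The Lyapunov bound gives $\sup_n \mu_n(V) \leq V(x_0) + K_V$, and the contraction gives $\mathbb{W}_{\tilde d}(\mu_n,\mu_{n+1}) \leq 2^{-n}\mathbb{W}_{\tilde d}(\delta_{x_0},\mu_1)$ with $\mathbb{W}_{\tilde d}(\delta_{x_0},\mu_1) \leq (1 + V(x_0) + \mu_1(V))^{1/2} < \infty$, so $(\mu_n)$ is $\mathbb{W}_{\tilde d}$-Cauchy. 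Since $d_0 \leq \tilde d$ is a metric generating $\rho$, $(\mu_n)$ is also Cauchy for $\mathbb{W}_{d_0 \wedge 1}$, a metric metrizing weak convergence on the Polish space $\mathcal{P}(S)$; hence $\mu_n \to \mu_*$ weakly for some $\mu_* \in \mathcal{P}(S)$, and lower semicontinuity of $V$ together with the uniform bound gives $\mu_*(V) < \infty$, i.e. $\mu_* \in \mathcal{M}$. To identify $\mu_*$ as a fixed point I would upgrade the convergence to $\mathbb{W}_{\tilde d}$: taking near-optimal couplings $\pi_{n,m} \in \Pi(\mu_n,\mu_m)$, their marginals $\mu_n$ (fixed) and $\mu_m$ (tight, being weakly convergent) make $(\pi_{n,m})_m$ tight, so a subsequential weak limit $\pi \in \Pi(\mu_n,\mu_*)$ satisfies $\int \tilde d \, d\pi \leq \liminf_m \mathbb{W}_{\tilde d}(\mu_n,\mu_m)$ by the Portmanteau theorem (using lsc of $\tilde d$), whence $\mathbb{W}_{\tilde d}(\mu_n,\mu_*)\to 0$. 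Then $\mathbb{W}_{\tilde d}(\Phi\mu_*,\mu_*) \leq \mathbb{W}_{\tilde d}(\Phi\mu_*,\mu_{n+1}) + \mathbb{W}_{\tilde d}(\mu_{n+1},\mu_*) \leq \tfrac12\mathbb{W}_{\tilde d}(\mu_*,\mu_n) + \mathbb{W}_{\tilde d}(\mu_{n+1},\mu_*) \to 0$, so $\Phi\mu_* = \mu_*$; combined with the reduction of the first paragraph, $\mu_*$ is the unique invariant measure integrating $V$. (When only weak convergence is at hand, the Feller property supplies the alternative route, letting one pass to the limit directly in $\mu_{n+1} = \Phi\mu_n$.)

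The step I expect to be the main obstacle is exactly this last one: passing from a $\mathbb{W}_{\tilde d}$-Cauchy Picard sequence to a bona fide invariant probability measure. The space $(\mathcal{M}, \mathbb{W}_{\tilde d})$ is not complete in general — a $\mathbb{W}_{\tilde d}$-Cauchy sequence need not keep its $V$-mass bounded — so one cannot simply quote Banach's theorem, and the argument must detour through the weak topology. This is precisely why the hypotheses $d_0 \leq \tilde d$ (a compatible, complete-enough metric) and the Feller property are imposed; and the proof must handle with care the lower semicontinuity of $V$ and hence of $\tilde d$, which is what guarantees that the couplings above have weak limits and that Fatou/Portmanteau apply to the $V$-mass and to the cost $\tilde d$.
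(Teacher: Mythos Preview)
Your proposal is correct and follows exactly the route the paper intends: the paper's own proof is a one-line reference to Corollary~4.11 of \cite{hairer2011asymptotic}, and what you have written out is essentially the content of that corollary --- the Picard iteration in the weighted Wasserstein quantity $\mathbb{W}_{\tilde d}$ supplied by the contraction of Theorem~\ref{Thm_Uniqueness_Invariant_Measure}, with the Feller property and the comparison $d_0 \leq \tilde d$ handling the detour through the weak topology. Your closing paragraph correctly identifies the only delicate point (completeness of $(\mathcal{M},\mathbb{W}_{\tilde d})$ and the role of lower semicontinuity), and your parenthetical Feller route is in fact the cleanest way to close the fixed-point identification without leaning on a triangle inequality for $\tilde d$.
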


\begin{proof}
	The existence and uniqueness of $\mu_*$ can be shown in a similar manner to the proof of Corollary 4.11 in \cite{hairer2011asymptotic}. 
\end{proof}

\section{Results on Ergodicity of Functional SDEs}
\label{SEC_Ergodicity_FSDE}

In this section, we extend the result of the ergodicity \cite{bao2020ergodicity} to the differences of two functional SDEs with different drift coefficients. 
In the following, we assume the same conditions as in Section \ref{SEC_Proof_Strategy} for the function $R : \R^d \to \R$ and functionals $\mathcal{H}, H : \mathcal{C}_r \to \R^d$. 
Let $\mathcal{X}^{(\xi)}$ and $X^{(\eta, \xi)}$ be the same as those in (\ref{EQ_FSDE_Conti}) and (\ref{EQ_FSDE_Disc}), respectively. 
Furthermore, for any $t \geq 0$ and nonnegative functional $f : \mathcal{C}_r \to [0, \infty)$, we define $\mathcal{P}_t$ and $P^{(\eta)}_t$ by $\mathcal{P}_t(\xi, f) = E[f(\mathcal{X}^{(\xi)}_t)]$ and $P^{(\eta)}_t(\xi, f) = E[f(X^{(\eta, \xi)}_t)]$, respectively. 
In the following, we check the conditions of Theorem \ref{Thm_Uniqueness_Invariant_Measure} for $\{ \mathcal{P}_t \}_{t \geq 0}$ and $\{ P^{(\eta)}_{k \eta} \}_{k=1}^\infty$, and give a sharp bound Theorem \ref{Thm_Generalizaiton_of_Bao} to the difference of them as an application of it. 
Theorem \ref{Thm_Generalizaiton_of_Bao} is used in Appendix \ref{SEC_Proof_Main} for the proof of Theorem \ref{Thm_General_Bound_on_Difference}. 

\subsection{Existence of Lyapunov functions}

\begin{lem}
	\label{Lem_Exi_Lyapunov_Function_Disc}
	(Discrete version of Proposition 1.3 in \cite{bao2020ergodicity})
	Let $p \geq 2$ and $\alpha = (p, K, m, b, M, r, d)$. 
	Then, if $m/3 > r$, then for some $\gamma = \Omega_\alpha(1)$ and $C = O_\alpha(1)$, $V_p(\xi) = \| \xi \|_r^p$ satisfies the following inequalities uniformly on $0 < \eta \leq 1$. 
	\begin{align*}
		\mathcal{P}_t(\xi, V_p) 
		\leq e^{- \gamma t} V_p(\xi) + C, \quad
		P^{(\eta)}_t(\xi, V_p) 
		\leq e^{- \gamma t} V_p(\xi) + C,\qquad \xi \in \mathcal{C}_r.
	\end{align*}
\end{lem}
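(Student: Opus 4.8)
The plan is to verify the Lyapunov-drift inequality \eqref{EQ_Lyapunov_Condition} for $V_p(\xi) = \|\xi\|_r^p$ directly from the definition $\|\xi\|_r = \sup_{s\le 0} e^{rs}\|\xi(s)\|_{\R^d}$, treating the continuous dynamics $\mathcal{X}^{(\xi)}$ and the Euler scheme $X^{(\eta,\xi)}$ in parallel. First I would obtain a pointwise moment bound on $\|\mathcal{X}^{(\xi)}(t)\|_{\R^d}$ (and $\|X^{(\eta,\xi)}(t)\|_{\R^d}$) for $t\ge 0$. The key mechanism is the $(m,b)$-dissipativity of $R$ together with the boundedness $\|\mathcal{H}(\xi)\|_{\R^d},\|H(\xi)\|_{\R^d}\le K$: applying It\^o's formula to $e^{ct}\|\mathcal{X}^{(\xi)}(t)\|^2$ for a suitable $c<2m$ and using $\langle \nabla R(x),x\rangle \ge m\|x\|^2 - b$, the drift contributes a term $-2m\|\mathcal{X}(t)\|^2$ which dominates the bounded perturbation $2\langle \mathcal{H},\mathcal{X}\rangle \le \tfrac{m}{2}\|\mathcal{X}\|^2 + \tfrac{2K^2}{m}$ and the trace term $2d/\beta$ from the Brownian part. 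This yields, for all $t\ge 0$ and $p\ge 2$,
\begin{align*}
	E\big[\|\mathcal{X}^{(\xi)}(t)\|_{\R^d}^p\big] \le e^{-\gamma_0 p t}\,\|\xi(0)\|_{\R^d}^p + C_0
\end{align*}
for some $\gamma_0 = \Omega_\alpha(1)$ (any $\gamma_0 < m$ works after absorbing constants) and $C_0 = O_\alpha(1)$; for higher even $p$ one iterates the same computation on $\|\mathcal{X}(t)\|^p$, or uses the $p=2$ bound together with Gaussian moment estimates on the stochastic integral. For the discretized equation \eqref{EQ_FSDE_Disc} the drift is frozen at $X^{(\eta)}(\lfloor t/\eta\rfloor\eta)$, so one first controls the one-step increment over an interval of length $\le \eta\le 1$ (where the $O(\eta)$ and $O(\sqrt\eta)$ corrections from the frozen drift and Brownian increment are harmless since $\eta\le 1$), then chains these one-step bounds to recover the same geometric decay with $\eta$-independent constants — this is exactly the discrete-version claim.

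Second, I would convert the pointwise bounds into a bound on $E[\|\mathcal{X}^{(\xi)}_t\|_r^p] = E[\sup_{s\le 0} e^{rps}\|\mathcal{X}^{(\xi)}(t+s)\|^p]$. Split the supremum at $s = -t$: for $s\ge -t$ the path value $\mathcal{X}^{(\xi)}(t+s)$ is driven by the SDE and is controlled by the moment bound above, giving a contribution $\le e^{-\gamma_0 p t}\|\xi(0)\|^p + C$; for $s < -t$ one has $t+s < 0$, so $\mathcal{X}^{(\xi)}(t+s) = \xi(t+s)$, and $e^{rps}\|\xi(t+s)\|^p = e^{-rpt} \cdot e^{rp(t+s)}\|\xi(t+s)\|^p \le e^{-rpt}\|\xi\|_r^p$. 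Because $m/3 > r$, choosing $\gamma_0$ between $r$ and $m$ (e.g. $\gamma_0 = 2r$, which is admissible since $2r < 2m/3 < m$, or more carefully $\gamma_0$ with $r < \gamma_0$) makes the exponent on the second piece at least $\gamma p t$ for $\gamma = \Omega_\alpha(1)$; interchanging $\sup$ and $E$ requires a maximal inequality (Doob / Burkholder–Davis–Gundy applied on each finite window and then a union/monotone-limit argument as $s\to-\infty$, using that $e^{rps}\to 0$). Combining the two pieces gives $E[\|\mathcal{X}^{(\xi)}_t\|_r^p] \le e^{-\gamma t}\|\xi\|_r^p + C$, which is the assertion; the discrete case is identical with the discrete moment bound in place of the continuous one, uniformly in $0<\eta\le1$.

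The main obstacle is the careful bookkeeping in the second step: reconciling the decay rate $\gamma_0$ coming from the dissipativity constant $m$ with the weight $r$ in the $\|\cdot\|_r$ norm, so that the ``old information'' part $\xi(t+s)$ for $s<-t$ decays at a genuinely positive rate — this is precisely where the hypothesis $m/3 > r$ is consumed (it leaves room to pick $\gamma_0\in(r,m)$ after the $\tfrac{m}{2}\|\mathcal{X}\|^2$ absorption has cost a factor, so $m$ effectively shrinks to something like $m/2$ or $m/3$), and where one must be careful that the maximal-inequality constants and the $\tfrac{2K^2}{m}+\tfrac{2d}{\beta}$ additive terms remain $O_\alpha(1)$ and in particular $\beta$-free in the final $C$ (here one uses $\beta\ge 2/m$, so $2d/\beta \le dm$ is bounded by $\alpha$-data). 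The remaining steps — It\^o's formula, Gr\"onwall, one-step Euler estimates — are routine and I would not spell them out beyond indicating the constants' dependence.
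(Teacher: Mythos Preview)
Your overall strategy is correct and shares the essential ingredients with the paper's proof --- It\^o's formula, the dissipativity of $R$ together with the boundedness of $\mathcal{H},H$, the BDG inequality, and the split of the supremum over $(-\infty,0]$ at $s=-t$ (this last reduction is exactly what the paper imports from Proposition~1.3 of \cite{bao2020ergodicity}). The difference is organizational. The paper applies It\^o directly to the \emph{weighted $p$-th power} $e^{prt}\|X^{(\eta,\xi)}(t)\|_{\R^d}^p$ rather than to $e^{ct}\|X(t)\|^2$, absorbs the discretization error $\|X(s)-X(\lfloor s/\eta\rfloor\eta)\|$ in one stroke via Young's inequality, and then applies BDG to the resulting martingale term to obtain the bound on $E[\sup_{0\le s\le t} e^{prs}\|X(s)\|^p]$ in a single pass; a short recursion on $p$ (the estimate at exponent $p-2$ feeds the integral arising from BDG at exponent $p$) closes the argument. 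Your two-stage plan --- pointwise decay first, then upgrade to the supremum --- is viable, but the upgrade step is not as decoupled as you suggest: applying Doob/BDG means returning to the It\^o decomposition of the weighted process, which is precisely the paper's computation, and a pointwise bound $E[\|\mathcal X(u)\|^p]\le e^{-\gamma_0 pu}\|\xi(0)\|^p+C_0$ alone does not control $E[\sup_u(\cdot)]$. Similarly, for the discrete dynamics the paper does not ``chain one-step bounds'' but treats the interpolated process continuously with a Young-inequality correction for the frozen drift; your chaining would also work but needs care to keep the accumulated $O(\sqrt\eta)$ Brownian corrections $\eta$-independent. Nothing in your plan is wrong, but the paper's single-pass computation on $e^{prt}\|X(t)\|^p$ is cleaner and avoids the vagueness in your ``interchange $\sup$ and $E$'' step.
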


\begin{proof}
	We only prove the inequality for $P^{(\eta)}_t$ since the other can be obtained by taking the limit $\eta \to 0$.  
	Then, as in the proof of Proposition 1.3 in \cite{bao2020ergodicity}, we only have to show 
	\begin{align}
		\label{EQ_Want_Lyapunov}
		E\left[\sup_{0 \leq s \leq t} (e^{p r s} \| X^{(\eta, \xi)}(s) \|_{\R^d}^p) \right] 
		\leq O_\alpha \left( \| \xi \|_r^p + e^{p r t} \right). 
	\end{align}
	
	Since $\nabla \| x \|_{\R^d}^p = p \| x \|_{\R^d}^{p-2} x$ and $\Delta \| x \|_{\R^d}^p = p (p+d-2) \| x \|_{\R^d}^{p-2}$, Ito's rule indicates
	\begin{align*}
		e^{p r t} \| X^{(\eta, \xi)}(t) \|_{\R^d}^p 
		&\leq \| \xi(0) \|_{\R^d}^p 
		+ p \sqrt{\frac{2}{\beta}} \int_0^t e^{p r s} \| X^{(\eta, \xi)}(s) \|_{\R^d}^{p-2} \langle X^{(\eta, \xi)}(s), dW(s) \rangle_{\R^d} \\
		&\quad+ p \int_0^t e^{p r s} \| X^{(\eta, \xi)}(s) \|_{\R^d}^{p-2} \left\{ r \| X^{(\eta, \xi)}(s) \|_{\R^d}^2 + K \| X^{(\eta)}(s) \|_{\R^d} \right. \\
		&\left.\quad- \langle X^{(\eta, \xi)}(s), \nabla R(X^{(\eta, \xi)}(\lfloor s / \eta \rfloor \eta)) \rangle_{\R^d} + \frac{p+d-2}{\beta} \right\} ds. 
	\end{align*}
	Here, by the $M$-smoothness and $(m, b)$-dissipativity of $R$, 
	\begin{align*}
		&- \langle X^{(\eta, \xi)}(s), \nabla R(X^{(\eta, \xi)}(\lfloor s / \eta \rfloor \eta)) \rangle_{\R^d} \\
		&\quad\leq - \langle X^{(\eta, \xi)}(s), \nabla R(X^{(\eta, \xi)}(s)) \rangle_{\R^d} + M \| X^{(\eta, \xi)}(s) \|_{\R^d} \| X^{(\eta, \xi)}(s) - X^{(\eta, \xi)}(\lfloor s / \eta \rfloor \eta) \|_{\R^d} \\
		&\quad\leq -m \| X^{(\eta, \xi)}(s) \|_{\R^d}^2 + b + M \| X^{(\eta, \xi)}(s) \|_{\R^d} \left(K \eta + \sqrt{\frac{2}{\beta}} \| W(s) - W(\lfloor s / \eta \rfloor \eta) \|_{\R^d}\right) 
	\end{align*}
	holds. 
	Furthermore, by the Young's inequality, we have
	\begin{align*}
		&M \sqrt{\frac{2}{\beta}} \| X^{(\eta, \xi)}(s) \|_{\R^d}^{p-1} \| W(s) - W(\lfloor s / \eta \rfloor \eta) \|_{\R^d} \\
		&\quad= \left( \frac{m}{3} \right)^{\frac{p-1}{p}} \| X^{(\eta, \xi)}(s) \|_{\R^d}^{p-1} \times M \sqrt{\frac{2}{\beta}} \left( \frac{3}{m} \right)^{\frac{p-1}{p}} \| W(s) - W(\lfloor s / \eta \rfloor \eta) \|_{\R^d} \\
		&\quad\leq \frac{m (p-1)}{3 p} \| X^{(\eta, \xi)}(s) \|_{\R^d}^p + \frac{M^p}{p} \left( \frac{2}{\beta} \right)^{p/2} \left( \frac{3}{m} \right)^{p-1} \| W(s) - W(\lfloor s / \eta \rfloor \eta) \|_{\R^d}^p.
	\end{align*}
	Therefore, by $\beta \geq 2/m$ and $m/3 > r$, 
	\begin{align}
		e^{p r t} \| X^{(\eta, \xi)}(t) \|_{\R^d}^p 
		&\leq \| \xi(0) \|_{\R^d}^p 
		+ p \sqrt{\frac{2}{\beta}} \int_0^t e^{p r s} \| X^{(\eta, \xi)}(s) \|_{\R^d}^{p-2} \langle X^{(\eta, \xi)}(s), dW(s) \rangle_{\R^d} \notag \\
		&\quad+ C_1 \int_0^t e^{p r s} \left( 1 + \| W(s) - W(\lfloor s / \eta \rfloor \eta) \|_{\R^d}^p \right) ds  \label{EQ_Lambda_Pointwise_Bound}
	\end{align}
	holds for some $C_1 = O_\alpha(1)$. 
	Here, the Burkholder-Davis-Gundy inequality indicates
	\begin{align*}
		&p \sqrt{\frac{2}{\beta}} E\left[ \sup_{0 \leq s \leq t} \left| \int_0^s e^{p r u} \| X^{(\eta, \xi)}(u) \|_{\R^d}^{p-2} \langle X^{(\eta, \xi)}(u), dW(u) \rangle_{\R^d} \right| \right] \\
		&\quad\leq C_2 E\left[ \left\{ \int_0^t e^{2 p r s} \| X^{(\eta, \xi)}(s) \|_{\R^d}^{2p - 2} ds \right\}^{1/2} \right] \\
		&\quad\leq C_2 E\left[ \left\{ \left( \sup_{0 \leq s \leq t} (e^{p r s} \| X^{(\eta, \xi)}(s) \|_{\R^d}^p) \right) \int_0^t e^{p r s} \| X^{(\eta, \xi)}(s) \|_{\R^d}^{p-2} ds \right\}^{1/2} \right] \\
		&\quad\leq \frac{1}{2} E\left[ \sup_{0 \leq s \leq t} (e^{p r s} \| X^{(\eta, \xi)}(s) \|_{\R^d}^p) \right] 
		+ \frac{C_2^2}{2} \int_0^t e^{p r s} E[\| X^{(\eta, \xi)}(s) \|_{\R^d}^{p-2}] ds   
	\end{align*}
	for some $C_2 = O_\alpha(1)$. 
	Applying (\ref{EQ_Lambda_Pointwise_Bound}) for $p-2$, we can take some $C_3 = O_\alpha(1)$ so that
	\begin{align*}
		e^{(p-2) r s} E[\| X^{(\eta, \xi)}(s) \|_{\R^d}^{p-2}] 
		&\leq \| \xi(0) \|_{\R^d}^{p-2} 
		+ C_3 \int_0^s e^{(p-2) r u} \left\{ 1 + \eta^{(p-2) / 2} \right\} du \\
		&\leq \| \xi(0) \|_{\R^d}^{p-2} + \frac{2 C_3}{(p-2) r} e^{(p-2) r s}
	\end{align*}
	holds.
	As a result, we obtain  
	\begin{align*}
		&p \sqrt{\frac{2}{\beta}} E\left[ \sup_{0 \leq s \leq t} \left| \int_0^s e^{p r u} \| X^{(\eta, \xi)}(u) \|_{\R^d}^{p-2} \langle X^{(\eta, \xi)}(u), dW(u) \rangle_{\R^d} \right| \right] \\
		&\quad\leq \frac{1}{2} E\left[ \sup_{0 \leq s \leq t} (e^{p r s} \| X^{(\eta, \xi)}(s) \|_{\R^d}^p) \right] + \frac{C_2^2}{2} \left( \frac{1}{2 r} \| \xi(0) \|_r^{p-2} e^{2 r t} + \frac{2 C_3}{p (p-2) r^2} e^{p r t} \right). 
	\end{align*}
	Since the Young's inequality indicates $\| \xi(0) \|_r^{p-2} e^{2 r t} \leq \frac{p-2}{p} \| \xi(0) \|_r^p + \frac{2}{p} e^{p r t}$, taking the supremum and expectation in both sides of (\ref{EQ_Lambda_Pointwise_Bound}), we obtain (\ref{EQ_Want_Lyapunov}). 
\end{proof}

\subsection{$\rho_{r, \delta}$-smallness of lower level sets of Lyapunov functions}

For $\xi , \xi^\prime \in \mathcal{C}_r$, we set 
\begin{align}
	\label{EQ_rho_r}
	\rho_r(\xi, \xi^\prime) 
	= \| \xi - \xi^\prime \|_r. 
\end{align}
Furthermore, for fixed $\delta > 0$ and $\Gamma > 0$, we define $\rho_{r, \delta}$, $B_\Gamma$ and $t_{\Gamma, \delta}$ by
\begin{align*}
	\rho_{r, \delta} 
	= 1 \wedge (\delta^{-1} \rho_r),\quad 
	B_\Gamma 
	= \{ \xi \in \mathcal{C}_r \mid \| \xi \|_r \leq \Gamma \}
\end{align*}
and
\begin{align}
	\label{EQ_t_R_delta}
	t_{\Gamma, \delta} 
	= 1 + \frac{1}{2 r} \log \left\{ \frac{3}{\delta^2} \left( \Gamma + \frac{\delta}{3} \right)^2 \right\} 
	= O_{\Gamma, \delta, r}(1),  
\end{align}
respectively. 

\begin{lem}
	\label{Lem_Asymptotic_Boundedness}
	(Discrete version of Lemma 2.2 in \cite{bao2020ergodicity})
	Let $\delta > 0$, $\Gamma > 0$, $T > 0$ and $\alpha = (\delta, T, \| \nabla R(0) \|_{\R^d}, K, m, M, r, d)$. 
	Then, 
	\begin{align*}
		\inf_{t_{\Gamma, \delta} \leq u \leq t_{\Gamma, \delta} + T} \inf_{\xi \in B_\Gamma} P(X_u^{(\eta, \xi)} \in B_\delta) 
		\geq \Omega_\alpha(1)
	\end{align*}
	holds uniformly on $0 < \eta \leq 1$. 
	The same result holds also for $\mathcal{X}^{(\xi)}$. 
\end{lem}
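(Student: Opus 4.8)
The plan is to decompose $X^{(\eta,\xi)}(t)$ into a deterministic-initial-data part, a drift part, and a Gaussian part, and then bound each contribution separately using the dissipativity of $R$ together with the uniform bound $K$ on $H$. First I would observe that, writing $t_0 = t_{\Gamma,\delta}$, we only need a lower bound on $P(\|X^{(\eta,\xi)}_u\|_r \le \delta)$ uniformly for $u \in [t_0, t_0+T]$ and $\xi \in B_\Gamma$. Since $\|X^{(\eta,\xi)}_u\|_r = \sup_{s\le 0} e^{rs}\|X^{(\eta,\xi)}(u+s)\|_{\R^d}$, and for $u+s < 0$ the trajectory equals the initial datum $\xi$, the contribution of the ``past'' portion $s \le -u$ is at most $e^{-ru}\|\xi\|_r \le e^{-r t_0}\Gamma \le \delta/3$ by the very choice of $t_{\Gamma,\delta}$ in (\ref{EQ_t_R_delta}). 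So it remains to control $\sup_{0 \le v \le u} e^{r(v-u)}\|X^{(\eta,\xi)}(v)\|_{\R^d}$, i.e., the forward evolution on $[0,u]$.

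Next I would estimate the forward part pathwise. Apply It\^o's formula to $e^{2rv}\|X^{(\eta,\xi)}(v)\|_{\R^d}^2$ exactly as in the proof of Lemma \ref{Lem_Exi_Lyapunov_Function_Disc} (with $p=2$), using $M$-smoothness and $(m,b)$-dissipativity of $R$ to absorb the $-\langle X, \nabla R(X(\lfloor v/\eta\rfloor \eta))\rangle$ term, and the boundedness $\|H(\cdot)\|_{\R^d}\le K$ to control the drift contribution of $\mathcal H$ (or $H$ in the continuous case). This yields, for a suitable constant $C=O_\alpha(1)$,
\begin{align*}
e^{2rv}\|X^{(\eta,\xi)}(v)\|_{\R^d}^2 \le \|\xi(0)\|_{\R^d}^2 + C\int_0^v e^{2rs}\,ds + N(v),
\end{align*}
where $N(v)$ is a martingale plus a bounded quadratic-variation remainder coming from the Brownian increments. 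Taking expectations and using $\|\xi(0)\|_{\R^d}\le\Gamma$, one gets $\sup_{0\le v\le u} e^{2r(v-u)}E[\|X^{(\eta,\xi)}(v)\|_{\R^d}^2] \le O_\alpha(1)$ uniformly in $\eta\in(0,1]$ and $u\in[t_0,t_0+T]$; a Burkholder--Davis--Gundy argument upgrades this to a bound on $E[\sup_{0\le v\le u} e^{2r(v-u)}\|X^{(\eta,\xi)}(v)\|_{\R^d}^2]$, again uniform in $\eta$ and $u$ since $t_0+T$ is bounded by a constant depending only on $\alpha$. Call this bound $M_\alpha$.

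Finally I would conclude by a two-sided argument. From the above, $E[\|X^{(\eta,\xi)}_u\|_r^2] \le (\delta/3)^2 + M_\alpha$ is bounded by a constant, but this alone does not give $P(\|X^{(\eta,\xi)}_u\|_r \le \delta) \ge \Omega_\alpha(1)$ unless the constant is already $\le \delta^2$, which it need not be. The genuine mechanism, following Lemma 2.2 of \cite{bao2020ergodicity}, is that on the event where the Brownian path $W$ stays uniformly small on $[0,u]$ — an event of probability $\Omega_\alpha(1)$ since $u \le t_0+T = O_\alpha(1)$ — the deterministic comparison ODE $\dot{x} = \mathcal H - \nabla R(x)$ (plus a small perturbation) is forced by dissipativity into a ball of radius depending on $K, m, b$ and then contracts; choosing the free parameter appropriately and invoking the lower bound $e^{-rt_0}\Gamma \le \delta/3$ once more, the forward part is squeezed below $2\delta/3$, hence $\|X^{(\eta,\xi)}_u\|_r \le \delta$ on that event. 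The main obstacle is making this ``small-noise forces confinement'' step quantitative and, crucially, uniform in the step size $\eta$: the drift in (\ref{EQ_FSDE_Disc}) is frozen at $\lfloor v/\eta\rfloor\eta$, so one must show the Euler-type discretization does not destroy the dissipative contraction, which is handled exactly as the $M\|X(v)\|\,\|X(v)-X(\lfloor v/\eta\rfloor\eta)\|$ estimate in Lemma \ref{Lem_Exi_Lyapunov_Function_Disc} — the increment $\|X(v)-X(\lfloor v/\eta\rfloor\eta)\|$ is $O(K\eta) + \sqrt{2/\beta}\,\|W(v)-W(\lfloor v/\eta\rfloor\eta)\|$, which is small on the good event. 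The continuous case for $\mathcal X^{(\xi)}$ follows by the same argument (or by letting $\eta\to 0$), as noted in the statement.
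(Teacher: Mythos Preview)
Your split into the past piece $s\le -u$ and the forward piece $0\le v\le u$ is correct, and the observation that $e^{-r t_{\Gamma,\delta}}\Gamma\le\delta/3$ disposes of the past is precisely why $t_{\Gamma,\delta}$ is defined as in (\ref{EQ_t_R_delta}). The moment estimate in your second paragraph is also fine, but, as you yourself concede, it does not give the conclusion.

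The genuine gap is in your third paragraph. On the event $\{\sup_{[0,u]}\|W\|\ \text{small}\}$ the solution of (\ref{EQ_FSDE_Disc}) does follow the drift ODE $\dot x=H(x_\cdot)-\nabla R(x)$ up to a small perturbation, but $(m,b)$-dissipativity of $R$ together with $\|H\|\le K$ only forces that ODE into a ball of radius of order $\sqrt{b/m}+K/m$, which has nothing to do with $\delta$. There is no ``free parameter'' you can tune to shrink this attracting ball to $2\delta/3$; for small $\delta$ your argument simply does not close. The mechanism in the paper (and in Lemma~2.2 of \cite{bao2020ergodicity}) is the \emph{opposite} of a small-noise argument: one fixes a smooth deterministic path $h\in C_0^\infty([0,\infty);\R^d)$ with $h(0)=\xi(0)-\tfrac{\delta}{3}e_1$, $\|h\|\le\|h(0)\|$ and $h\equiv 0$ on $[1,\infty)$, sets $D(s)=\|X^{(\eta,\xi)}(s)-h(s)\|_{\R^d}^2-\delta^2/18$, introduces the stopping time $\tau$ of (\ref{EQ_Radial_Process_Stopping}), and shows via the auxiliary process $Y(s,t)$ that the event $\{e^{2rs}|D(s)|<\delta^2/18\ \text{for all }0\le s\le u\}$ has probability at least some $c=\Omega_{\alpha}(1)$, uniformly in $0<\eta\le 1$. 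This is a tube-type estimate that exploits the nondegeneracy of the additive Brownian noise: the noise can compensate the drift and force $X^{(\eta,\xi)}$ to track the prescribed path $h$ all the way down to $0$. On that event $\|X^{(\eta,\xi)}(s)-h(s)\|_{\R^d}<\delta/3$ for every $s\in[0,u]$, hence $\|X^{(\eta,\xi)}(s)\|_{\R^d}<\delta/3$ for $s\ge 1$; combined with the bound $\|h\|\le\|h(0)\|\le\Gamma+\delta/3$ for $s\in[0,1]$ and your past estimate, this yields $X^{(\eta,\xi)}_u\in B_\delta$. Your proposal is missing exactly this construction of $h$ and the associated stopped-process lower bound, which is the heart of the lemma.
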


\begin{proof}
	For fixed $\xi \in B_\Gamma$ and $\delta > 0$, we can take a function $h \in C_0^\infty([0, \infty); \R^d)$ such that 
	\begin{align}
		\label{EQ_Condition_function_h}
		h(0) 
		= \xi(0) - \frac{\delta}{3} (1, 0 \dots, 0)^\top,\quad 
		\| h \|_{\R^d} \leq \| h(0) \|_{\R^d},\quad 
		h(s) = 0, \, s \geq 1.
	\end{align}
	Here, $C_b^\infty([0, \infty); \R^d)$ denotes the set of all compactly supported and $\R^d$-valued smooth functions on $[0, \infty)$. 
	For the process
	\begin{align}
		\label{EQ_Radial_Process}
		D(s) 
		\coloneqq \| X^{(\eta, \xi)}(s) - h(s) \|_{\R^d}^2 - \frac{\delta^2}{18},\quad s \geq 0, 
	\end{align}
	we define the stopping time $\tau$ by
	\begin{align}
		\label{EQ_Radial_Process_Stopping}
		\tau 
		= \inf \left\{ s \geq 0 \,\Big|\, e^{2 r s} |D(s)| \geq \frac{\delta^2}{18} \right\}. 
	\end{align}
	Then, for any $t > 0$, as in the proof of Lemma 2.2 in \cite{bao2020ergodicity}, we can take $c = \Omega_{\alpha, t}(1)$ so that 
	\begin{align*}
		Y(s, t) 
		= e^{2 r (s \wedge (t \wedge \tau))} D(s \wedge (t \wedge \tau)) + \mathds{1}_{\{ s > (t \wedge \tau) \}} (W^1(s) - W^1((t \wedge \tau))),\quad s \geq 0
	\end{align*}
	satisfies 
	\begin{align*}
		\inf_{t \leq u \leq t + T} P\left( \sup_{0 \leq s \leq u} \| Y(s, u) \|_{\R^d}^2 < \frac{\delta^2}{18} \right) 
		\geq c, 
	\end{align*}
	where $W^1$ denotes the first coordinate of $W$. 
	Therefore, by the definition of $\tau$, 
	\begin{align*}
		\inf_{t \leq u \leq t + T} P\left( \sup_{0 \leq s \leq u} (e^{2 r s} |D(s)|)^2 < \frac{\delta^2}{18} \right)
		&= \inf_{t \leq u \leq t + T} P\left( \sup_{0 \leq s \leq u} \| Y(s, u) \|_{\R^d}^2 < \frac{\delta^2}{18} \right) 
		\geq c 
	\end{align*}
	holds. 
	The rest of proof is quite similar to that of Lemma 2.2 in \cite{bao2020ergodicity} and is omitted here. 
\end{proof}

\begin{lem}
	\label{Lem_d-smallness}
	(Discrete version of Lemma 2.4 in \cite{bao2020ergodicity})
	For any fixed $\delta > 0$, $\Gamma > 0$ and $t > 0$, 
	\begin{align*}
		\kappa_t 
		= \min \left\{ \inf_{\xi^\prime \in B_\Gamma} P(\mathcal{X}^{(\xi^\prime)}_t \in B_{\delta / 4}),\ \inf_{\xi \in B_\Gamma} P(X^{(\eta, \xi)}_t \in B_{\delta / 4}) \right\} 
	\end{align*}
	satisfies  
	\begin{align*}
		\mathbb{W}_{\rho_{r, \delta}}(\mathcal{P}_t(\xi^\prime, \cdot), P^{(\eta)}_t(\xi, \cdot)) 
		\leq 1 - \frac{\kappa_t^2}{2},\quad \xi, \xi^\prime \in B_\Gamma. 
	\end{align*}
\end{lem}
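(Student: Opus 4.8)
The plan is to realize the Wasserstein-type difference $\mathbb{W}_{\rho_{r,\delta}}(\mathcal{P}_t(\xi^\prime,\cdot),P^{(\eta)}_t(\xi,\cdot))$ by constructing an explicit coupling of the two laws $\mathcal{X}^{(\xi^\prime)}_t$ and $X^{(\eta,\xi)}_t$ and estimating $\int \rho_{r,\delta}\, d\pi$ for that coupling. Since $\rho_{r,\delta} = 1\wedge(\delta^{-1}\rho_r) \le 1$ always, the only way to beat the trivial bound $1$ is to exploit the event on which both endpoints land in the small ball $B_{\delta/4}$: on that event we can couple so that $\rho_r(\mathcal{X}^{(\xi^\prime)}_t, X^{(\eta,\xi)}_t)$ is strictly less than $\delta$, hence $\rho_{r,\delta} < 1$ there, and it is $\le 1$ off that event. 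This is exactly the mechanism behind Lemma 2.4 in \cite{bao2020ergodicity}; here it must be run simultaneously for the continuous dynamics $\mathcal{X}^{(\xi^\prime)}$ and the discretized dynamics $X^{(\eta,\xi)}$, but since Lemmas \ref{Lem_Asymptotic_Boundedness} and the preceding estimates have already been established in the discrete version, the argument transfers with $\eta$-uniform constants.

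First I would invoke the maximal-coupling / gluing construction: by the coupling inequality together with Theorem 4.1 in \cite{villani2009optimal} (the duality realizing the total variation distance), for any two probability measures $\mu,\nu$ on $\mathcal{C}_r$ there is a coupling $\pi$ with $\pi(\{(\zeta,\zeta^\prime): \zeta=\zeta^\prime\}) = 1 - \|\mu-\nu\|_{TV}$. I would apply this not to the full laws but to their restrictions to $B_{\delta/4}$: writing $p = P(\mathcal{X}^{(\xi^\prime)}_t\in B_{\delta/4})$ and $q = P(X^{(\eta,\xi)}_t\in B_{\delta/4})$, both $\ge \kappa_t$ by definition, one couples a mass of at least $\kappa_t^2$ so that, conditionally, $\mathcal{X}^{(\xi^\prime)}_t$ and $X^{(\eta,\xi)}_t$ are \emph{both} in $B_{\delta/4}$; more precisely one uses the product of the two normalized restrictions on that ball, which forces both trajectories into $B_{\delta/4}$ with joint probability $pq\ge\kappa_t^2$. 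On that event $\rho_r(\mathcal{X}^{(\xi^\prime)}_t, X^{(\eta,\xi)}_t) \le \|\mathcal{X}^{(\xi^\prime)}_t\|_r + \|X^{(\eta,\xi)}_t\|_r \le \delta/2 < \delta$, so $\rho_{r,\delta} \le 1/2$ there; everywhere else $\rho_{r,\delta}\le 1$. Hence
\begin{align*}
	\mathbb{W}_{\rho_{r,\delta}}(\mathcal{P}_t(\xi^\prime,\cdot), P^{(\eta)}_t(\xi,\cdot))

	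\le \tfrac12\,\kappa_t^2 + 1\cdot(1-\kappa_t^2)

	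= 1 - \tfrac{\kappa_t^2}{2},
\end{align*}
which is the claim. (A marginally more careful bookkeeping of the coupled mass — at least $\kappa_t^2$ on the "both in $B_{\delta/4}$" event with value $\le 1/2$ — is all that is needed; the factor $1/2$ in the statement leaves room for the crude estimate.)

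The step I expect to be the only real obstacle is the measure-theoretic legitimacy of the coupling construction on $\mathcal{C}_r$: one must check that $\mathcal{C}_r$ (with the metric $\rho_r$) is a Polish space so that the gluing lemma and the optimal-coupling duality of \cite{villani2009optimal} apply, that $B_{\delta/4}$ is measurable, and that $\rho_{r,\delta}$ is distance-like (lower semicontinuous), so that $\mathbb{W}_{\rho_{r,\delta}}$ is the infimum of $\int\rho_{r,\delta}\,d\pi$ over couplings and the constructed $\pi$ is admissible. These are exactly the structural facts underlying the framework of \cite{bao2020ergodicity}, so I would simply cite them; the rest is the elementary arithmetic above. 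Everything is uniform in $0<\eta\le1$ because $\kappa_t$ is defined as a minimum over both dynamics and the lower bounds for the discretized chain have already been secured with $\eta$-uniform constants in Lemma \ref{Lem_Asymptotic_Boundedness}.
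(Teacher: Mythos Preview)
Your proposal is correct and follows essentially the same route as the paper, which simply omits the argument and refers to Lemma 2.4 in \cite{bao2020ergodicity}. The independent (product) coupling you describe --- placing mass $pq\ge\kappa_t^2$ on the event that both trajectories lie in $B_{\delta/4}$, where $\rho_{r,\delta}\le 1/2$, and bounding by $1$ elsewhere --- yields exactly $1-\kappa_t^2/2$; the preliminary mention of maximal coupling and total variation is unnecessary, since the product coupling alone suffices, but it does no harm to the argument.
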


\begin{proof}
	The proof is quite similar to that of Lemma 2.4 in \cite{bao2020ergodicity} and is omitted. 
\end{proof}

\subsection{$(\theta, \varDelta)$-contractivity of $\rho_{r, \delta}$ for $(\mathcal{P}_t, P^{(\eta)}_t)$}

\begin{lem}
	\label{Lem_Asymptotic_Coupling_Bound}
	(Generalization of Lemma 3.3 in \cite{bao2019asymptotic})
	For each $\xi^\prime \in \mathcal{C}_r$ and $\lambda > 0$, we define the process $\mathcal{Y}^{(\xi^\prime)}$ as the solution with an initial value $\xi^\prime$ of 
	\begin{align}
		d\mathcal{Y}(t) 
		&= \mathcal{H}(\mathcal{Y}_t) dt - \nabla R(\mathcal{Y}(t)) dt + \lambda ( X^{(\eta, \xi)}(t) - \mathcal{Y}(t) ) dt + \sqrt{\frac{2}{\beta}} dW(t). \label{EQ_Asymptotic_Coupling}
	\end{align}
	Suppose that $r_0 \in (0, r)$ and $p \geq 1$ are given. 
	Then, for $\alpha = (p, r_0, \| \nabla R(0) \|_{\R^d}, K, m, b, M, r, d)$, there exist some $\lambda = O_\alpha(1)$ and $C = O_\alpha(1)$ such that 
	\begin{align}
		E[\| X^{(\eta, \xi)}_t - \mathcal{Y}^{(\xi^\prime)}_t \|_r^p] 
		&\leq C e^{- p r_0 t} \| \xi - \xi^\prime \|_r^p 
		+ C e^{p (r - r_0) t} \eta^{p/2} (1 + \| \xi \|_r^p) \notag \\
		&\quad+ C e^{p (r - r_0) t} \int_0^t e^{- p r (t-s)} E[ \| H(X^{(\eta, \xi)}_s) - \mathcal{H}(X^{(\eta, \xi)}_s) \|_{\R^d}^p ] ds 
		\label{EQ_Asymptotic_Coupling_Bound}
	\end{align}
	holds for any $\xi, \xi^\prime \in \mathcal{C}_r$ and $t \geq 0$. 
\end{lem}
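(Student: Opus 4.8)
The plan is to use the synchronous coupling with the extra dissipative feedback. The feedback SDE \eqref{EQ_Asymptotic_Coupling} is well posed because $\mathcal{H}$ is bounded and Lipschitz and $R$ is $M$-smooth, and since $X^{(\eta, \xi)}$ and $\mathcal{Y}^{(\xi^\prime)}$ are driven by the same Brownian motion $W$ while the feedback drift of \eqref{EQ_Asymptotic_Coupling} is exactly $\lambda\,(X^{(\eta, \xi)}(t) - \mathcal{Y}^{(\xi^\prime)}(t))$, the difference $Z(t) := X^{(\eta, \xi)}(t) - \mathcal{Y}^{(\xi^\prime)}(t)$ has absolutely continuous paths with $Z(t) = \xi(t) - \xi^\prime(t)$ for $t \le 0$ and, for $t > 0$,
\begin{align*}
	\dot Z(t) = \big[ H(X^{(\eta, \xi)}_{\lfloor t/\eta \rfloor \eta}) - \mathcal{H}(\mathcal{Y}^{(\xi^\prime)}_t) \big] - \big[ \nabla R(X^{(\eta, \xi)}(\lfloor t/\eta \rfloor \eta)) - \nabla R(\mathcal{Y}^{(\xi^\prime)}(t)) \big] - \lambda Z(t).
\end{align*}
I would run an energy estimate for $\|Z(t)\|_{\R^d}^p$ directly (regularizing the $p$-th power near the origin when $1 \le p < 2$), i.e.\ bound $\tfrac{d}{dt}\|Z(t)\|_{\R^d}^p = p\|Z(t)\|_{\R^d}^{p-2}\langle Z(t), \dot Z(t)\rangle_{\R^d}$ pathwise.

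In the drift difference, the $\nabla R$-contribution is treated by inserting $\pm\nabla R(X^{(\eta, \xi)}(t))$ and using $M$-smoothness of $R$, which gives $-\langle Z(t), \nabla R(X^{(\eta, \xi)}(t)) - \nabla R(\mathcal{Y}^{(\xi^\prime)}(t))\rangle_{\R^d} \le M\|Z(t)\|_{\R^d}^2$ together with a time-discretization term $M\|Z(t)\|_{\R^d}\,\|X^{(\eta, \xi)}(t) - X^{(\eta, \xi)}(\lfloor t/\eta\rfloor \eta)\|_{\R^d}$. The $H$-versus-$\mathcal{H}$ contribution is split, by inserting $\pm H(X^{(\eta, \xi)}_t)\pm\mathcal{H}(X^{(\eta, \xi)}_t)$ and using boundedness and $\|\cdot\|_r$-Lipschitz continuity of $H$ and $\mathcal{H}$, into (i) the discrepancy $\|(H - \mathcal{H})(X^{(\eta, \xi)}_t)\|_{\R^d}$, (ii) a time-discretization term $K\|X^{(\eta, \xi)}_{\lfloor t/\eta\rfloor \eta} - X^{(\eta, \xi)}_t\|_r$, and (iii) the self-referential feedback term $K\|X^{(\eta, \xi)}_t - \mathcal{Y}^{(\xi^\prime)}_t\|_r = K\|Z_t\|_r$. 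The crucial device, for a fixed intermediate rate $r_1 \in (r_0, r)$, is the pathwise bound
\begin{align*}
	\|Z_t\|_r \le e^{-rt}\|\xi - \xi^\prime\|_r + e^{-r_1 t}\sup_{0 \le u \le t}\big(e^{r_1 u}\|Z(u)\|_{\R^d}\big),
\end{align*}
which peels the history part of $Z_t$ off (it decays at the full rate $r$ because $Z(u) = \xi(u) - \xi^\prime(u)$ there) from the recent past (extracted at the slower rate $r_1$).

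Substituting this, applying Young's inequality to the three cross terms with a common free weight $a$, and then choosing $a = O_\alpha(1)$ so that the coefficient $2^{p-1}K^p/a^{p-1}$ eventually multiplying $N_p(t) := \sup_{0 \le u \le t}(e^{p r_1 u}\|Z(u)\|_{\R^d}^p)$ is strictly below $p(r_1 - r_0)$, and $\lambda = O_\alpha(1)$ large enough that $p(\lambda - M)$ dominates $pr_1 + 3(p-1)a$, I would integrate $e^{p r_1 t}\|Z(t)\|_{\R^d}^p$, take the supremum over $[0, t]$, and apply Gronwall's inequality to $N_p$ to obtain the pathwise bound
\begin{align*}
	&\|Z_t\|_r^p \le O_\alpha\!\left( e^{-p r_0 t}\|\xi - \xi^\prime\|_r^p + e^{-p r_0 t}\int_0^t e^{p r_1 s}\,\mathcal{D}(s)\,ds \right), \\
	&\mathcal{D}(s) := \|(H - \mathcal{H})(X^{(\eta, \xi)}_s)\|_{\R^d}^p + \big\| X^{(\eta, \xi)}_{\lfloor s/\eta\rfloor \eta} - X^{(\eta, \xi)}_s \big\|_r^p + \big\| X^{(\eta, \xi)}(\lfloor s/\eta\rfloor \eta) - X^{(\eta, \xi)}(s) \big\|_{\R^d}^p .
\end{align*}

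It remains to bound $E[\mathcal{D}(s)]$. On $[\lfloor s/\eta\rfloor \eta, s)$ the $H$-part of the drift of $X^{(\eta, \xi)}$ is frozen and $\|\nabla R(x)\|_{\R^d} \le \|\nabla R(0)\|_{\R^d} + M\|x\|_{\R^d}$, so $\|X^{(\eta, \xi)}(s) - X^{(\eta, \xi)}(\lfloor s/\eta\rfloor \eta)\|_{\R^d} \le \eta\,(K + \|\nabla R(0)\|_{\R^d} + M\|X^{(\eta, \xi)}(\lfloor s/\eta\rfloor \eta)\|_{\R^d}) + \sqrt{2/\beta}\,\|W(s) - W(\lfloor s/\eta\rfloor \eta)\|_{\R^d}$; raising to the $p$-th power and using $E[\|W(s) - W(\lfloor s/\eta\rfloor\eta)\|_{\R^d}^p] = O(\eta^{p/2})$, the Lyapunov estimate $E[\|X^{(\eta, \xi)}(u)\|_{\R^d}^p] \le E[\|X^{(\eta, \xi)}_u\|_r^p] = O_\alpha(1 + \|\xi\|_r^p)$ from Lemma \ref{Lem_Exi_Lyapunov_Function_Disc}, and $\eta \le 1$, gives $E[\|X^{(\eta, \xi)}(s) - X^{(\eta, \xi)}(\lfloor s/\eta\rfloor \eta)\|_{\R^d}^p] = O_\alpha(\eta^{p/2}(1 + \|\xi\|_r^p))$, and the analogous $\mathcal{C}_r$-weighted increment analysis gives $E[\|X^{(\eta, \xi)}_{\lfloor s/\eta\rfloor \eta} - X^{(\eta, \xi)}_s\|_r^p] = O_\alpha(\eta^{p/2}(1 + \|\xi\|_r^p))$. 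Taking expectations in the pathwise bound, controlling $\int_0^t e^{p r_1 s}\,ds \le e^{p r_1 t}/(p r_1)$ for the two $\eta$-contributions and using $e^{p r_1 s - p r_0 t} \le e^{p(r - r_0)t}e^{-p r(t - s)}$ for $0 \le s \le t$ (valid since $r_0 < r_1 < r$) for the $\mathcal{H}$-contribution, then yields \eqref{EQ_Asymptotic_Coupling_Bound} with $C = O_\alpha(1)$. The main obstacle is the bookkeeping in the third step — the strict gap $r_0 < r$ must be spent to choose $r_1$ and $\lambda$ so that the Gronwall constant produced by the self-referential term $K\|Z_t\|_r$ stays below $p(r_1 - r_0)$ — together with establishing the $\mathcal{C}_r$-valued discretization estimate for $E[\|X^{(\eta,\xi)}_{\lfloor s/\eta\rfloor\eta} - X^{(\eta,\xi)}_s\|_r^p]$, which is where the argument follows and generalizes Lemma 2.2 of \cite{bao2020ergodicity} and Lemma 3.3 of \cite{bao2019asymptotic}.
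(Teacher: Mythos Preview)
Your proposal is correct and reaches \eqref{EQ_Asymptotic_Coupling_Bound}, but the execution differs from the paper's in how the segment norm $\|Z_t\|_r$ is handled and when the passage to general $p$ is made. The paper first works at the quadratic level: it derives the pointwise differential inequality \eqref{EQ_Lambda_to_0} for $\|Z(t)\|_{\R^d}^2$, then invokes the convolution device from Lemma~3.3 of \cite{bao2019asymptotic} (integrate $e^{2\lambda s}\|Z(s)\|_{\R^d}^2$ and take a weighted supremum) to obtain a self-referential integral inequality for $e^{2rt}\|Z_t\|_r^2$ with kernel $e^{-\kappa(t-s)}$, $\kappa = 2(\lambda - r)$; only then does it raise to the power $p/2$, apply H\"older with conjugate exponents $(p/2, p/(p-2))$, and run Gronwall on the \emph{expectation} $e^{prt}E[\|Z_t\|_r^p]$, the resulting Gronwall constant $c(\lambda)$ being forced below $p(r-r_0)$ by taking $\lambda$ large. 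You instead stay pathwise throughout and handle all $p \ge 1$ in one stroke: the explicit splitting $\|Z_t\|_r \le e^{-rt}\|\xi-\xi'\|_r + e^{-r_1 t}\sup_{0\le u\le t}(e^{r_1 u}\|Z(u)\|_{\R^d})$ at an intermediate rate $r_1$ replaces the convolution step, and your two-stage choice (first the Young weight $a$ so that the self-referential coefficient $2^{p-1}K^p/a^{p-1}$ in the Gronwall for $N_p$ stays below $p(r_1-r_0)$, then $\lambda$ large to absorb the inflated diagonal term) plays the role of the paper's single choice of large $\lambda$. Your route is more self-contained and covers $1 \le p < 2$ directly (the paper's H\"older step literally needs $p \ge 2$ and must fall back on Jensen for the remaining range); the paper's route quotes \cite{bao2019asymptotic} more directly and makes the mechanism ``$\lambda$ large $\Rightarrow$ kernel $e^{-\kappa(\cdot)}$ narrow $\Rightarrow$ $c(\lambda)$ small'' more transparent. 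Both routes end by invoking the same one-step discretization estimate $\sup_{s\ge 0}E[\|X^{(\eta,\xi)}_{\lfloor s/\eta\rfloor\eta} - X^{(\eta,\xi)}_s\|_r^p] = O_\alpha(\eta^{p/2}(1+\|\xi\|_r^p))$, which the paper attributes to Lemma~\ref{Lem_Exi_Lyapunov_Function_Disc} and which you correctly flag as the remaining technical point.
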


\begin{proof} Let $Z(t) = X^{(\eta, \xi)}(t) - \mathcal{Y}^{(\xi^\prime)}(t)$. 
	Then, we have 
	\begin{align*}
		Z(t) 
		&= \xi(0) - \xi^\prime(0) + \int_0^t \left\{ H(X^{(\eta, \xi)}_{\lfloor s / \eta \rfloor \eta}) - \mathcal{H}(\mathcal{Y}^{(\xi^\prime)}_t) - \nabla R(X^{(\eta, \xi)}(\lfloor s / \eta \rfloor \eta)) + \nabla R(\mathcal{Y}^{(\xi^\prime)}(s)) \right. \\
		&\left.\quad- \lambda (X^{(\eta, \xi)}(s) - \mathcal{Y}^{(\xi^\prime)}(s)) \right\} ds,  
	\end{align*}
	and thus by $\| Z(t) \|_{\R^d} \leq \| Z_t \|_r$,
	\begin{align}
		d \| Z(t) \|_{\R^d}^2 
		&= 2 \langle Z(t), H(X^{(\eta, \xi)}_{\lfloor s / \eta \rfloor \eta}) - \mathcal{H}(\mathcal{Y}^{(\xi^\prime)}_t) \rangle_{\R^d} dt 
		- 2 \lambda \langle Z(t), X^{(\eta, \xi)}(t) - \mathcal{Y}^{(\xi^\prime)}(t) \rangle_{\R^d} dt \notag \\
		&\quad- 2 \langle Z(t), \nabla R(X^{(\eta, \xi)}(\lfloor t / \eta \rfloor \eta)) - \nabla R(\mathcal{Y}^{(\xi^\prime)}(t)) \rangle_{\R^d} dt \notag \\
		&\leq 2 (K + M - \lambda) \| Z_t \|_r^2 dt 
		+ 2 \| Z_t \|_r \{ \| H(X^{(\eta, \xi)}_{\lfloor t / \eta \rfloor \eta}) - \mathcal{H}(X^{(\eta, \xi)}_t) \|_{\R^d} \notag \\
		&\quad+ M \| X^{(\eta, \xi)}(\lfloor t / \eta \rfloor \eta) - X^{(\eta, \xi)}(t) \|_{\R^d} \} dt \notag \\
		&\leq \left\{ 3 (M + K) + 1 - 2 \lambda \right\} \| Z_t \|_r^2 dt 
		+ \| H(X^{(\eta, \xi)}_t) - \mathcal{H}(X^{(\eta, \xi)}_t) \|_{\R^d}^2 dt \notag \\ 
		&\quad+ (K + M) \| X^{(\eta, \xi)}_{\lfloor t / \eta \rfloor \eta} - X^{(\eta, \xi)}_t \|_{\R^d}^2 dt. \label{EQ_Lambda_to_0}
	\end{align}
	Therefore, since 
	\begin{align*}
		d (e^{2 \lambda t} \| Z(t) \|_{\R^d}^2) 
		&= 2 \lambda e^{2 \lambda t} \| Z(t) \|_{\R^d}^2 dt + e^{2 \lambda t} d \| Z(t) \|_{\R^d}^2 dt \\
		&\leq \left\{ 3 (M + K) + 1 \right\} e^{2 \lambda t} \| Z_t \|_r^2 dt 
		+ e^{2 \lambda t} \| H(X^{(\eta, \xi)}_t) - \mathcal{H}(X^{(\eta, \xi)}_t) \|_{\R^d}^2 dt \\
		&\quad+ (K + M) e^{2 \lambda t} \| X^{(\eta, \xi)}_{\lfloor t / \eta \rfloor \eta} - X^{(\eta, \xi)}_t \|_{\R^d}^2 dt, 
	\end{align*}
	we can take some $C_1 = O_{K, M}(1)$ as in the proof of Lemma 3.3 in \cite{bao2019asymptotic} so that 
	\begin{align*}
		e^{2 r t} \| Z_t \|_r^2 
		&\leq 2 \| Z_0 \|_r^2 
		+ \int_0^t e^{- \kappa (t-s)} e^{2 r s} \| H(X^{(\eta, \xi)}_s) - \mathcal{H}(X^{(\eta, \xi)}_s) \|_{\R^d}^2 ds \\
		&\quad + C_1 \int_0^t e^{- \kappa (t-s)} e^{2 r s} \| Z_s \|_r^2 ds + C_1 \int_0^t e^{- \kappa (t-s)} e^{2 r s} \| X^{(\eta, \xi)}_{\lfloor s / \eta \rfloor \eta} - X^{(\eta, \xi)}_s \|_{\R^d}^2 ds 
	\end{align*}
	holds, where $\kappa = 2 (\lambda - r) > 0$. 
	In particular, setting $C_2 = (4 \max \{ C_1, 2 \})^{p/2}$, we obtain 
	\begin{align*}
		E[e^{p r t} \| Z_t \|_r^p] 
		&\leq C_2 \left\{ \| \xi - \eta \|_r^p + E\left[ \left| \int_0^t e^{- \kappa (t-s)} e^{2 r s} \| Z_s \|_r^2 ds \right|^{p/2} \right] \right. \\ 
		&\quad\left. + E\left[ \left| \int_0^t  e^{- \kappa (t-s)} e^{2 r s} \| H(X^{(\eta, \xi)}_s) - \mathcal{H}(X^{(\eta, \xi)}_s) \|_{\R^d}^2 ds \right|^{p / 2} \right] \right. \\
		&\quad\left. + E\left[ \left| \int_0^t e^{- \kappa (t-s)} e^{2 r s} \| X^{(\eta, \xi)}_{\lfloor s / \eta \rfloor \eta} - X^{(\eta, \xi)}_s \|_{\R^d}^2 ds \right|^{p/2} \right] \right\}. 
	\end{align*}
	Since the H\"{o}lder's inequality yields the bound of the form
	\begin{align*}
		\left| \int_0^t e^{- \kappa (t-s)} e^{2 r s} \| Z_s \|_r^2 ds \right|^{p/2} 
		\leq \left( \int_0^\infty e^{-\frac{\kappa p s}{p - 2}} ds \right)^{\frac{p-2}{p}} \int_0^t e^{p r s} \| Z_s \|_r^p ds
	\end{align*}
	for each term in R.H.S., the aforementioned result indicates  
	\begin{align*}
		e^{p r t} E[ \| Z_t \|_r^p ] 
		&\leq C_2 \| \xi - \eta \|_r^p 
		+ C_2 \left( \int_0^\infty e^{-\frac{\kappa p s}{p - 2}} ds \right)^{\frac{p-2}{p}} \int_0^t e^{p r s} E[ \| Z_s \|_r^p ] ds \\
		&\quad+ C_2 \left( \int_0^\infty e^{-\frac{\kappa p s}{p - 2}} ds \right)^{\frac{p-2}{p}} \left( \int_0^t e^{p r s} E[\| H(X^{(\eta, \xi)}_s) - \mathcal{H}(X^{(\eta, \xi)}_s) \|_{\R^d}^p] ds \right. \\
		&\left.\quad+ \int_0^t e^{p r s} E[\| X^{(\eta, \xi)}_{\lfloor s / \eta \rfloor \eta} - X^{(\eta, \xi)}_s \|_{\R^d}^p] ds \right).  
	\end{align*}
	Therefore, by the Gronwall's inequality, we obtain the following for $c(\lambda) = C_2 \left( \int_0^\infty e^{-\frac{\kappa p s}{p - 2}} ds \right)^{\frac{p-2}{p}}$. 
	\begin{align*}
		e^{p r t} E[ \| Z_t \|_r^p ] 
		&\leq e^{c(\lambda) t} \left\{ C_2 \| \xi - \eta \|_r^p + c(\lambda) \left( \int_0^t e^{p r s} E[\| H(X^{(\eta, \xi)}_s) - \mathcal{H}(X^{(\eta, \xi)}_s) \|_{\R^d}^p] ds \right. \right. \\
		&\left. \left. \quad+ \int_0^t e^{p r s} E[\| X^{(\eta, \xi)}_{\lfloor s / \eta \rfloor \eta} - X^{(\eta, \xi)}_s \|_{\R^d}^p] ds \right) \right\}. 
	\end{align*}
	By taking $\lambda > r$ sufficiently large so that $c(\lambda) < (r - r_0) p$ holds, this yields 
	\begin{align*}
		E[ \| Z_t \|_r^p ]  
		&\leq C_2 e^{- p r_0 t} \| \xi - \eta \|_r^p + c(\lambda) e^{p (r - r_0) t} \left( \int_0^t e^{p r (s-t)} E[\| H(X^{(\eta, \xi)}_t) - \mathcal{H}(X^{(\eta, \xi)}_t) \|_{\R^d}^p] ds \right. \\
		&\left. \quad+ e^{p (r - r_0) t} \int_0^t e^{p r (s-t)} E[\| X^{(\eta, \xi)}_{\lfloor s / \eta \rfloor \eta} - X^{(\eta, \xi)}_s \|_{\R^d}^p] ds \right). 
	\end{align*}
	Since we have $\sup_{s \geq 0} E[\| X^{(\eta, \xi)}_{\lfloor s / \eta \rfloor \eta} - X^{(\eta, \xi)}_s \|_{\R^d}^p] \leq O_\alpha(\eta^{p/2} (1 + \| \xi \|_r^p))$ by Lemma \ref{Lem_Exi_Lyapunov_Function_Disc}, (\ref{EQ_Asymptotic_Coupling_Bound}) holds for some $C = O_\alpha(1)$. 
\end{proof}

\begin{lem}
	\label{Lem_Alpha_Contracting} 
	(Generalization of Lemma 2.5 in \cite{bao2020ergodicity}) 
	Let $\theta \in (0, 1)$ and $r_0 \in (0, r)$ be given, and let $\alpha = (\theta, r_0, \| \nabla R(0) \|_{\R^d}, K, m, b, M, r, d)$. 
	Then, there exist some $\delta = \Omega_\alpha(\beta^{-1/2})$, $t = O_\alpha(1)$ and $C = O_\alpha(1)$ such that 
	\begin{align*}
		&\mathbb{W}_{\rho_{r, \delta}}(\mathcal{P}_{k \eta}(\xi^\prime, \cdot), P^{(\eta)}_{k \eta}(\xi, \cdot)) \\
		&\quad\leq \theta \rho_{r, \delta}(\xi, \xi^\prime) 
		+ C (1 + \beta^{1/2}) e^{C k \eta} \left\{ \eta^{1/2} (1 + \| \xi \|_r) 
		+ \sup_{s \geq 0} E[\| H(X^{(\eta, \xi)}_s) - \mathcal{H}(X^{(\eta, \xi)}_s) \|_{\R^d}^2]^{1/2} \right\} 
	\end{align*}
	holds uniformly on $t \leq k \eta \leq 2 t$, $0 < \eta \leq 1$ and $\xi, \xi^\prime \in \mathcal{C}_r$ with $\rho_{r, \delta}(\xi, \xi^\prime) < 1$.
\end{lem}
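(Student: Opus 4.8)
The plan is to exploit the asymptotic coupling process $\mathcal{Y}^{(\xi')}$ from Lemma \ref{Lem_Asymptotic_Coupling_Bound} as a bridge between $\mathcal{X}^{(\xi')}$ and $X^{(\eta,\xi)}$. The key observation is that the drift of $\mathcal{Y}^{(\xi')}$ differs from that of $\mathcal{X}^{(\xi')}$ only by the extra term $\lambda(X^{(\eta,\xi)}(t)-\mathcal{Y}(t))\,dt$, and this term can be removed by a Girsanov change of measure: under a new measure $\mathbb{Q}$, the process $\mathcal{Y}^{(\xi')}$ has the same law as $\mathcal{X}^{(\xi')}$. Therefore $\mathbb{W}_{\rho_{r,\delta}}(\mathcal{P}_{k\eta}(\xi',\cdot), P^{(\eta)}_{k\eta}(\xi,\cdot))$ is bounded, via the coupling $(\mathcal{Y}^{(\xi')}_{k\eta}, X^{(\eta,\xi)}_{k\eta})$ under $\mathbb{Q}$, by $E_{\mathbb{Q}}[\rho_{r,\delta}(X^{(\eta,\xi)}_{k\eta}, \mathcal{Y}^{(\xi')}_{k\eta})]$. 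Since $\rho_{r,\delta}=1\wedge(\delta^{-1}\rho_r)$, I would split according to whether the Radon--Nikodym derivative $\frac{d\mathbb{Q}}{d\mathbb{P}}$ is close to $1$: on the event where it is, one pays $\delta^{-1}E_{\mathbb{P}}[\|X^{(\eta,\xi)}_{k\eta}-\mathcal{Y}^{(\xi')}_{k\eta}\|_r]$, controlled by Lemma \ref{Lem_Asymptotic_Coupling_Bound}; on the complementary (small-probability) event one simply uses $\rho_{r,\delta}\le 1$ and bounds the probability by the $L^2$-norm of $\frac{d\mathbb{Q}}{d\mathbb{P}}-1$.

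The steps, in order, would be: (i) fix $r_0\in(0,r)$ (e.g.\ $r_0=r/2$) and invoke Lemma \ref{Lem_Asymptotic_Coupling_Bound} with $p=1$ and $p=2$ to obtain $\lambda=O_\alpha(1)$ and the bound
\[
E[\|X^{(\eta,\xi)}_{k\eta}-\mathcal{Y}^{(\xi')}_{k\eta}\|_r]
\le C e^{-r_0 k\eta}\|\xi-\xi'\|_r
+ C e^{(r-r_0)k\eta}\Big(\eta^{1/2}(1+\|\xi\|_r)
+ \sup_{s\ge 0}E[\|H(X^{(\eta,\xi)}_s)-\mathcal{H}(X^{(\eta,\xi)}_s)\|_{\R^d}^2]^{1/2}\Big);
\]
(ii) write down the Girsanov density $\frac{d\mathbb{Q}}{d\mathbb{P}}=\exp\!\big(-\sqrt{\beta/2}\int_0^{k\eta}\lambda\langle X^{(\eta,\xi)}(s)-\mathcal{Y}(s),dW(s)\rangle - \frac{\beta\lambda^2}{4}\int_0^{k\eta}\|X^{(\eta,\xi)}(s)-\mathcal{Y}(s)\|_{\R^d}^2\,ds\big)$ and verify Novikov's condition using the moment bounds of Lemma \ref{Lem_Exi_Lyapunov_Function_Disc} on $t\le k\eta\le 2t$, so the exponent is $O_{\alpha,\beta,t}(1)$; (iii) estimate $E_{\mathbb{P}}[(\frac{d\mathbb{Q}}{d\mathbb{P}}-1)^2]$ — equivalently $E_{\mathbb{P}}[(\frac{d\mathbb{Q}}{d\mathbb{P}})^2]-1$ — which by a standard computation is bounded by $e^{C\beta\lambda^2\int_0^{k\eta}E\|X^{(\eta,\xi)}(s)-\mathcal{Y}(s)\|^2 ds}-1 \le C\beta \int_0^{k\eta}E[\|X^{(\eta,\xi)}_s-\mathcal{Y}^{(\xi')}_s\|_r^2]ds$ (on the range $k\eta\le 2t$, where the integrand stays small once $\|\xi-\xi'\|_r$ is moderate); (iv) combine (i)--(iii): $E_{\mathbb{Q}}[\rho_{r,\delta}(X^{(\eta,\xi)}_{k\eta},\mathcal{Y}^{(\xi')}_{k\eta})]\le \delta^{-1}E_{\mathbb{P}}[\|X^{(\eta,\xi)}_{k\eta}-\mathcal{Y}^{(\xi')}_{k\eta}\|_r] + E_{\mathbb{P}}[|\tfrac{d\mathbb{Q}}{d\mathbb{P}}-1|]$, estimate the second piece by Cauchy--Schwarz and (iii), and then by Lemma \ref{Lem_Asymptotic_Coupling_Bound} applied with $p=2$ to the integrand in (iii); finally choose $t=O_\alpha(1)$ large so that $Ce^{-r_0 t}\le\theta$, absorb factors of $e^{C k\eta}$ (harmless since $k\eta\le 2t$), and choose $\delta=\Omega_\alpha(\beta^{-1/2})$ to make the factor $\delta^{-1}\cdot(\text{error terms})$ produce the stated $(1+\beta^{1/2})e^{Ck\eta}$ prefactor.

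The main obstacle I expect is bookkeeping the $\beta$-dependence carefully enough to land on the precise form $(1+\beta^{1/2})e^{Ck\eta}$ while simultaneously getting $\delta=\Omega_\alpha(\beta^{-1/2})$: the Girsanov density picks up $\beta$ through $\sqrt{2/\beta}\,dW$, so $E_{\mathbb{P}}[|\tfrac{d\mathbb{Q}}{d\mathbb{P}}-1|]^2\lesssim \beta\int E\|X-\mathcal{Y}\|^2$, and one must check that after dividing by $\delta$ the two error sources (the $\delta^{-1}\|X-\mathcal{Y}\|$ term and the Girsanov term) balance to give exactly the claimed bound — this is precisely the role of the constraint $\delta=\Omega_\alpha(\beta^{-1/2})$. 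A secondary technical point is ensuring Novikov's condition holds with constants depending only on $\alpha$ (and $\beta$, $t$); this uses the exponential-moment control implicit in Lemma \ref{Lem_Exi_Lyapunov_Function_Disc} together with the boundedness $\|\mathcal{H}\|,\|H\|\le K$, and the restriction to the compact time window $t\le k\eta\le 2t$ is what keeps everything finite. The remaining steps — the $d\|Z\|^2$ computation, Gronwall, Cauchy--Schwarz — are routine given the lemmas already established.
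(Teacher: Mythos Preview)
Your overall architecture---use the asymptotic coupling $\mathcal Y^{(\xi')}$ from Lemma~\ref{Lem_Asymptotic_Coupling_Bound} and remove the extra drift by Girsanov---is the same as the paper's, but step~(iii) contains a genuine gap and the paper handles this point quite differently.

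The inequality $E_P\big[(\tfrac{dQ}{dP})^2\big]-1 \le e^{C\beta\lambda^2\int_0^{k\eta}E\|X-\mathcal Y\|^2\,ds}-1$ is not a ``standard computation'': for a Girsanov density $Z_t=\exp(-\int h\,dW-\tfrac12\int\|h\|^2)$ one has $E_P[Z_t^2]=E_{\tilde P}[e^{\int\|h\|^2}]$, and Jensen's inequality goes the wrong way. Here $h(s)=\lambda\sqrt{\beta/2}\,(X^{(\eta,\xi)}(s)-\mathcal Y(s))$, and since the discretisation error $X^{(\eta,\xi)}(s)-X^{(\eta,\xi)}(\lfloor s/\eta\rfloor\eta)$ contains the Brownian increment $W(s)-W(\lfloor s/\eta\rfloor\eta)$, the exponent $\int_0^{k\eta}\|h\|^2\,ds$ admits no path-wise bound; the polynomial moment estimates of Lemma~\ref{Lem_Exi_Lyapunov_Function_Disc} are therefore not enough to verify Novikov's criterion or to control $E_P[Z_t^2]$. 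The paper circumvents this by \emph{localising}: it introduces the stopping time $\tau_\varepsilon=\inf\{t\ge 0:\int_0^t\|h\|^2\ge \varepsilon^{-1}\|\xi-\xi'\|_r^2\}$, works with the bounded-exponent density $Z_{t\wedge\tau_\varepsilon}$ and with the modified process $\tilde{\mathcal Y}$ of \eqref{EQ_SDE_for_Y_tilder} in which the coupling drift is switched off after $\tau_\varepsilon$. This forces a three-term split $I_1+I_2+I_3$ rather than your two-term one: $I_1$ (on $\{t\le\tau_\varepsilon\}$) is bounded via \eqref{EQ_X_Y_Diff_Exp_Decay}; $I_3=E[(Z_{t\wedge\tau_\varepsilon}-1)^+]\le \sqrt{3}\,\varepsilon^{-1/2}\delta\,e^{3\delta^2/(2\varepsilon)}\rho_{r,\delta}(\xi,\xi')$ is proportional to $\rho_{r,\delta}$ and hence can be absorbed into the contraction; and the new term $I_2$ (on $\{t>\tau_\varepsilon\}$) is handled by the Markov property at time $\lfloor\tau_\varepsilon/\eta\rfloor\eta$, a Gronwall bound, and a Chebyshev estimate \eqref{EQ_Low_Order_Cause} for $P(t>\tau_\varepsilon)$. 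The parameters are chosen in the order $t$ large, then $\varepsilon=\Omega_\alpha(\beta^{-1})$, then $\delta=\Omega_\alpha(\beta^{-1/2})$, so that $I_2$ and $I_3$ together contribute at most $\theta/2$ to the coefficient of $\rho_{r,\delta}$. Your proposal omits this localisation, and with it the mechanism that makes the Girsanov step rigorous.

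A smaller point: the pair $(\mathcal Y^{(\xi')}_{k\eta},X^{(\eta,\xi)}_{k\eta})$ under $Q$ is \emph{not} a coupling of the target laws, because under $Q$ the process $X^{(\eta,\xi)}$ no longer solves its defining SDE with respect to a $Q$-Brownian motion and hence does not have law $P^{(\eta)}_{k\eta}(\xi,\cdot)$. The decomposition you eventually write down is nonetheless correct, but it should be justified via the triangle inequality for $\mathbb W_{\rho_{r,\delta}}$ together with $\mathbb W_{\rho_{r,\delta}}\le \|\cdot\|_{\mathrm{TV}}$ (using $\rho_{r,\delta}\le 1$), not via the coupling you describe.
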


\begin{proof}
	If we define $\mathcal{Y}^{(\xi^\prime)}$ by (\ref{EQ_Asymptotic_Coupling}), then setting $\lambda = O_\alpha(1)$ and $C_1 = O_\alpha(1)$ to be sufficiently large, 
	\begin{align}
		E[\| X_t^{(\eta, \xi)} - \mathcal{Y}_t^{(\xi^\prime)} \|_r^2] 
		&\leq C_1 e^{- 2 r_0 t} \| \xi - \xi^\prime \|_r^2 
		+ C_1 e^{2 (r - r_0) t} \left\{ \eta (1 + \| \xi \|_r^2) \right. \notag \\
		&\left.\quad+ \int_0^t e^{2 r (s-t)} E[\| H(X^{(\eta, \xi)}_s) - \mathcal{H}(X^{(\eta, \xi)}_s) \|_{\R^d}^2] ds \right\} \label{EQ_X_Y_Diff_Exp_Decay} 
	\end{align}
	holds by Lemma \ref{Lem_Asymptotic_Coupling_Bound}. 
	Furthermore, if $\lambda > 0$ is sufficiently large, integrating both sides of (\ref{EQ_Lambda_to_0}) over $[0, t]$, we obtain
	\begin{align}
		&E[\| X_{t \wedge \tau}^{(\eta, \xi)} - \mathcal{Y}_{t \wedge \tau}^{(\xi^\prime)} \|_r^2] \notag \\
		&\quad\leq C_1 \left( \| \xi - \xi^\prime \|_r^2 + \eta t (1 + \| \xi \|_r^2) 
		+ \int_0^t E[\| H(X^{(\eta, \xi)}_s) - \mathcal{H}(X^{(\eta, \xi)}_s) \|_{\R^d}^2] ds 
		\right) \label{EQ_X_Y_Stopping_Diff} 
	\end{align}
	for any stopping time $\tau$. 
	We set $h(s) = \lambda \sqrt{\beta / 2} (X^{(\eta, \xi)}(s) - \mathcal{Y}^{(\xi^\prime)}(s))$ and 
	\begin{align*}
		Z_t 
		= \exp \left( - \int_0^t \langle h(s), dW(s) \rangle_{\R^d} - \frac{1}{2} \int_0^t \| h(s) \|_{\R^d}^2 ds \right),  
	\end{align*}
	and to use the Girsanov's theorem, for a small positive $\varepsilon \in (0, 1)$, we set 
	\begin{align}
		\label{EQ_tau_epsilon}
		\tau_\varepsilon 
		= \inf \left\{ t \geq 0 \,\Big|\, \int_0^t \| h(s) \|_{\R^d}^2 ds \geq \frac{1}{\varepsilon} \| \xi - \xi^\prime \|_r^2 \right\}. 
	\end{align}
	Then, as in the proof of Lemma 2.5 in \cite{bao2020ergodicity}, if we set $d Q_{t, \varepsilon} = Z_{t \wedge \tau_\varepsilon} d P$, then the solution $\tilde{\mathcal{Y}}$ with an initial value $\xi^\prime$ of 
	\begin{align}
		\label{EQ_SDE_for_Y_tilder}
		d \tilde{\mathcal{Y}}(s) 
		= \left\{ \mathcal{H}(\tilde{\mathcal{Y}}_s) - \nabla R(\tilde{\mathcal{Y}}(s)) + \mathds{1}_{\{ \tau_\varepsilon \geq s \}} \lambda (X^{(\eta, \xi)}(s) - \tilde{\mathcal{Y}}(s)) \right\} ds 
		+ \sqrt{\frac{2}{\beta}} dW(s)
	\end{align}
	satisfies 
	\begin{align*}
		P(\mathcal{X}^{(\xi^\prime)}_t \in \cdot) 
		= Q_{t, \varepsilon}(\tilde{\mathcal{Y}}^{(\xi^\prime)}_t \in \cdot),\quad t \geq 0, 
	\end{align*}
	and 
	\begin{align*}
		&\mathbb{W}_{\rho_{r, \delta}}(\mathcal{P}_t(\xi^\prime, \cdot), P^{(\eta)}_t(\xi, \cdot)) \\
		&\quad\leq E[\mathds{1}_{\{ t \leq \tau_\varepsilon \}} \rho_{r, \delta}(X_t^{(\eta, \xi)}, \tilde{\mathcal{Y}}_t^{(\xi^\prime)})] 
		+ E[\mathds{1}_{\{ t > \tau_\varepsilon \}} \rho_{r, \delta}(X_t^{(\eta, \xi)}, \tilde{\mathcal{Y}}_t^{(\xi^\prime)})] 
		+ E[(Z_{t \wedge \tau_\varepsilon} - 1)^+] \\
		&\quad\eqqcolon I_1(t) + I_2(t) + I_3(t)
	\end{align*}
	holds. 
	In the following, we retake $C_1$ to be larger line to line if necessary. 
	
	First, according to (\ref{EQ_X_Y_Diff_Exp_Decay}), if $\xi, \xi^\prime \in \mathcal{C}_r$ satisfy $\rho_{r, \delta}(\xi, \xi^\prime) < 1$, then we have
	\begin{align*}
		I_1(t) 
		&\leq \delta^{-1} E[\| X_t^{(\eta, \xi)} - \tilde{\mathcal{Y}}_t^{(\xi^\prime)} \|_r] \\
		&\leq C_1 e^{- r_0 t} \rho_{r, \delta}(\xi, \xi^\prime) 
		+ \frac{C_1 e^{2 (r - r_0) t}}{\delta} \left\{ \eta^{1/2} (1 + \| \xi \|_r) \right. \\
		&\left.\quad + \left( \int_0^t e^{2 r (s-t)} E[\| H(X^{(\eta, \xi)}_s) - \mathcal{H}(X^{(\eta, \xi)}_s) \|_{\R^d}^2] ds  \right)^{1/2}
		\right\}. 
	\end{align*}
	
	Second, by the Markov property of $\{ (X_{k \eta}^{(\eta, \xi)}, \tilde{\mathcal{Y}}_{k \eta}^{(\xi^\prime)}) \}_{k=0}^\infty$, 
	\begin{align*}
		I_2(k \eta) 
		&\leq \delta^{-1} E[\mathds{1}_{\{ k \eta > \tau_\varepsilon \}} \| X_{k \eta}^{(\eta, \xi)} - \tilde{\mathcal{Y}}_{k \eta}^{(\xi^\prime)} \|_r] \\
		&= \delta^{-1} E\left[ \mathds{1}_{\{ k \eta > \tau_\varepsilon \}} E\left[ \| X_{k \eta}^{(\eta, \xi)} - \tilde{\mathcal{Y}}_{k \eta}^{(\xi^\prime)} \|_r \,|\, \F_{\lfloor \tau_\varepsilon / \eta \rfloor \eta} \right] \right] \\
		&= \delta^{-1} E\left[ \mathds{1}_{\{ k \eta > \tau_\varepsilon \}} E\left[ \| X_{k \eta-s}^{(\eta, \bar{\xi})} - \tilde{\mathcal{Y}}_{k \eta-s}^{(\bar{\xi^\prime})} \|_r \right] \Big|_{(\bar{\xi}, \bar{\xi}^\prime, s) = (X_s^{(\eta, \xi)}, \mathcal{Y}_s^{(\xi^\prime)}, \lfloor \tau_\varepsilon / \eta \rfloor \eta)} \right] 
	\end{align*}
	holds. 
	Here, as a simple consequence of the Gronwall's inequality, we have 
	\begin{align*}
		E\left[ \| X_{k \eta-s}^{(\eta, \bar{\xi})} - \tilde{\mathcal{Y}}_{k \eta-s}^{(\bar{\xi^\prime})} \|_r \right] 
		\leq C_1 e^{C_1 (k \eta -s)} \left\{ \eta^{1/2} (1 + \| \bar{\xi} \|_r) + \| \bar{\xi} - \bar{\xi^\prime} \|_r \right\}. 
	\end{align*}
	Therefore, since $\mathcal{Y}_{t \wedge \tau_\varepsilon}^{(\xi^\prime)} = \tilde{\mathcal{Y}}_{t \wedge \tau_\varepsilon}^{(\xi^\prime)}$ holds for each $t$, by Lemma \ref{Lem_Exi_Lyapunov_Function_Disc} and (\ref{EQ_X_Y_Stopping_Diff}), 
	\begin{align*}
		I_2(k \eta) 
		&\leq \frac{C_1 e^{C_1 k \eta}}{\delta} \left\{ E\left[ \mathds{1}_{\{ k \eta > \tau_\varepsilon \}} \| X_{\lfloor \tau_\varepsilon / \eta \rfloor \eta}^{(\eta, \xi)} - \tilde{\mathcal{Y}}_{\lfloor \tau_\varepsilon / \eta \rfloor \eta}^{(\xi^\prime)} \|_r \right] + \eta^{1/2} (1 + \| \xi \|_r) \right\} \\
		&\leq \frac{C_1 e^{C_1 k \eta}}{\delta} \left\{ P(k \eta > \tau_\varepsilon)^{1/2} E\left[ \| X_{(k \eta) \wedge (\lfloor \tau_\varepsilon / \eta \rfloor \eta)}^{(\eta, \xi)} - \mathcal{Y}_{(k \eta) \wedge (\lfloor \tau_\varepsilon / \eta \rfloor \eta)}^{(\xi^\prime)} \|_r^2 \right]^{1/2} + \eta^{1/2} (1 + \| \xi \|_r) \right\} \\
		&\leq \frac{C_1 e^{C_1 k \eta}}{\delta} \left\{ P(k \eta > \tau_\varepsilon)^{1/2} \| \xi - \xi^\prime \|_r 
		+ \eta^{1/2} (1 + \| \xi \|_r) \right. \\
		&\left.\quad+ \left( \int_0^{k \eta} E[\| H(X^{(\eta, \xi)}_s) - \mathcal{H}(X^{(\eta, \xi)}_s) \|_{\R^d}^2] ds  \right)^{1/2}
		\right\}
	\end{align*}
	holds for $\xi, \xi^\prime \in \mathcal{C}_r$ with $\rho_{r, \delta}(\xi, \eta) < 1$. 
	Furthermore, for each $t$, by (\ref{EQ_X_Y_Diff_Exp_Decay}), we have 
	\begin{align}
		P(t > \tau_\varepsilon) 
		&\leq P \left( \int_0^t \| h(s) \|_{\R^d}^2 ds \geq \frac{1}{\varepsilon} \| \xi - \xi^\prime \|_r^2 \right) \notag \\
		&\leq \frac{\varepsilon}{\| \xi - \xi^\prime \|_r^2} \int_0^t E[\| h(s) \|_{\R^d}^2] ds \notag \\
		&\leq \frac{\lambda^2 \beta \varepsilon}{2 \| \xi - \xi^\prime \|_r^2} \int_0^t E[\| X^{(\eta, \xi)}(s) - \mathcal{Y}^{(\xi^\prime)}(s) \|_r^2] ds \notag \\
		&\leq \frac{C_1 \lambda^2 \beta \varepsilon}{2 \| \xi - \xi^\prime \|_r^2} \int_0^t \left\{ e^{- 2 r_0 s} \| \xi - \xi^\prime \|_r^2 
		+ e^{2 (r - r_0) s} \eta (1 + \| \xi \|_r^2) \right. \notag \\
		&\left.\quad+ e^{2 (r - r_0) s} \int_0^s e^{2 r (u-s)} E[\| H(X^{(\eta, \xi)}_u) - \mathcal{H}(X^{(\eta, \xi)}_u) \|_{\R^d}^2] du 
		\right\} ds \notag \\
		&\leq \frac{C_1 \lambda^2 \beta \varepsilon}{2 r_0 \| \xi - \xi^\prime \|_r^2} \left\{ \| \xi - \xi^\prime \|_r^2 
		+ \frac{e^{2 (r - r_0) t}}{2 (r - r_0)} \eta (1 + \| \xi \|_r^2) \right. \notag \\
		&\left.\quad+ \frac{e^{2 (r - r_0) t}}{2 r} \int_0^t E[\| H(X^{(\eta, \xi)}_s) - \mathcal{H}(X^{(\eta, \xi)}_s) \|_{\R^d}^2] ds \right\}. \label{EQ_Low_Order_Cause} 
	\end{align}
	Thus, $I_2(k \eta)$ is bounded by 
	\begin{align*}
		&C_1 e^{C_1 k \eta} \left[ \lambda \sqrt{\frac{C_1 \beta \varepsilon}{2 r_0}} \rho_{r, \delta}(\xi, \xi^\prime) 
		+ \left( \lambda \sqrt{\frac{C_1 \beta \varepsilon}{2 r_0}} + \frac{1}{\delta} \right) \right. \\
		&\left.\quad \times \left\{ \eta^{1/2} (1 + \| \xi \|_r) 
		+ \sup_{s \geq 0} E[\| H(X^{(\eta, \xi)}_s) - \mathcal{H}(X^{(\eta, \xi)}_s) \|_{\R^d}^2]^{1/2}
		\right\} \right]. 
	\end{align*}
	
	Finally, as in the proof of Lemma 2.5 in \cite{bao2020ergodicity}, 
	\begin{align*}
		I_3(t) 
		\leq \sqrt{3} \varepsilon^{-1/2} \delta \exp \left\{ \frac{3 \delta^2}{2 \varepsilon} \right\} \rho_{r, \delta}(\xi, \xi^\prime) 
	\end{align*}
	holds for $\delta < 1$ and $\xi, \xi^\prime \in \mathcal{C}_r$ with $\rho_{r, \delta}(\xi, \xi^\prime) < 1$.
	
	Putting these bounds all together, we can find some $C_2 = O_\alpha(1)$ so that $\mathbb{W}_{\rho_{r, \delta}}(\mathcal{P}_{k \eta}(\xi^\prime, \cdot), P^{(\eta)}_{k \eta}(\xi, \cdot))$ is bounded by
	\begin{align*}
		&C_2 \left\{ e^{- r_0 k \eta} + \sqrt{\beta \varepsilon} e^{C_2 k \eta} + \varepsilon^{-1/2} \delta e^{\frac{3}{2} \varepsilon^{-1} \delta} \right\} \rho_{r, \delta}(\xi, \xi^\prime) \\
		&\quad+ C_2 e^{C_2 k \eta} \left( \sqrt{\beta \varepsilon} + \frac{1}{\delta} \right) \left\{ \eta^{1/2} (1 + \| \xi \|_r) 
		+ \sup_{s \geq 0} E[\| H(X^{(\eta, \xi)}_s) - \mathcal{H}(X^{(\eta, \xi)}_s) \|_{\R^d}^2]^{1/2}
		\right\}
	\end{align*}
	holds for any $\varepsilon, \delta \in (0, 1)$ and $\xi, \xi^\prime \in \mathcal{C}_r$ with $\rho_{r, \delta}(\xi, \xi^\prime) < 1$. 
	Therefore, by taking $t = O_\alpha(1)$ to be sufficiently large so that $C_2 e^{- r_0 k \eta} \leq \theta / 2$ holds for any $t \leq k \eta \leq 2 t$ first, and then taking $\varepsilon = \Omega_\alpha(\beta^{-1})$ and $\delta = \Omega_\alpha(\beta^{-1/2})$ to be sufficiently small so that $C_2 (\sqrt{\beta \varepsilon} e^{2 C_2 t} + \varepsilon^{-1/2} \delta e^{\frac{3}{2} \varepsilon^{-1} \delta^2}) \leq \theta / 2$, we obtain the desired result. 
\end{proof}

\subsection{Time uniform bound on the difference between $\mathcal{P}_t$ and $P^{(\eta)}_t$}

\begin{thm}
	\label{Thm_Generalizaiton_of_Bao}
	Let $\alpha = (\| \nabla R(0) \|_{\R^d}, K, m, b, M, r, d)$ and let $\beta \geq 2 / m$ and $m / 3 > r$ hold.
	Furthermore, let $\tilde{\rho}_{r, \delta}(\xi, \xi^\prime) = \sqrt{\rho_{r, \delta}(\xi, \xi^\prime) (1 + \| \xi \|_r^4 + \| \xi^\prime \|_r^4)}$. 
	Then, there exist $c = \Omega_{\alpha, \beta}(1)$ and $C, t_0 = O_{\alpha, \beta}(1)$ such that $\mathbb{W}_{\tilde{\rho}_{r, \delta}}(P(\mathcal{X}^{(\xi^\prime)}_{k \eta} \in \cdot), P(X^{(\eta, \xi)}_{k \eta} \in \cdot))$ is bounded by 
	\begin{align}
		&\tilde{\rho}_{r, \delta}(\xi, \xi^\prime) e^{- c k \eta}
		+ (1 + \| \xi \|_r^2 + \| \xi^\prime \|_r^2) \notag \\
		&\quad\times C \left( \sup_{s \geq 0} E[\| H(X^{(\eta, \xi)}_s) - \mathcal{H}(X^{(\eta, \xi)}_s) \|_{\R^d}^2]^{1/4} 
		+ \eta^{1/4} (1 + \| \xi \|_r^{1/2}) \right) 
		\label{EQ_FSDE_Contraction}
	\end{align}
	uniformly on $0 < \eta \leq 1$ and $k$ satisfying $k \eta \geq t_0$. 
\end{thm}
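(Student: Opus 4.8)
The plan is to obtain \eqref{EQ_FSDE_Contraction} from the abstract contraction estimate of Theorem \ref{Thm_Uniqueness_Invariant_Measure}, applied on the Polish space $S=\mathcal{C}_r$ (with the norm $\|\cdot\|_r$), with discrete time set $\mathcal{T}=\{k\eta\}_{k\ge 0}$ and the choices $P_t=P^{(\eta)}_t$, $Q_t=\mathcal{P}_t$, distance-like function $d=\rho_{r,\delta}$ (a $[0,1]$-valued metric, hence distance-like), Lyapunov function $V=V_4=\|\cdot\|_r^4$, and $t_*=k_*\eta$ for a suitable integer $k_*$; with these choices $\tilde d=\tilde\rho_{r,\delta}$, and since $\mathbb{W}_{\rho_{r,\delta}}$ is symmetric the ordering of arguments in Lemmas \ref{Lem_d-smallness} and \ref{Lem_Alpha_Contracting} is compatible with Definitions \ref{Def_d_small} and \ref{Def_Alpha_Contracting}. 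I would verify the three hypotheses as follows. By Lemma \ref{Lem_Exi_Lyapunov_Function_Disc} with $p=4$, $V_4$ is a common Lyapunov function for $\{P^{(\eta)}_{k\eta}\}$ and $\{\mathcal{P}_{k\eta}\}$ with $C_V=1$, $\gamma=\Omega_\alpha(1)$ and $K_V=O_\alpha(1)$. Taking $\Gamma=(4K_V)^{1/4}$, so that $\{V_4\le 4K_V\}=B_\Gamma$, Lemma \ref{Lem_d-smallness} together with Lemma \ref{Lem_Asymptotic_Boundedness} (applied with parameter $\delta/4$ over a window $[t_{\Gamma,\delta/4},t_{\Gamma,\delta/4}+T]$) shows $B_\Gamma$ is $\varepsilon$-$\rho_{r,\delta}$-small for $(P^{(\eta)}_{t_*},\mathcal{P}_{t_*})$ for some $\varepsilon=\Omega_{\alpha,\beta}(1)$, as long as $t_*$ lies in that window. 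Finally, Lemma \ref{Lem_Alpha_Contracting} with $\theta=1/2$ and a fixed $r_0\in(0,r)$ furnishes a $\delta=\Omega_\alpha(\beta^{-1/2})$ and the $(1/2,\Delta)$-contractivity of $\rho_{r,\delta}$ for $(P^{(\eta)}_{t_*},\mathcal{P}_{t_*})$, where I keep the error in integral (not supremum) form, namely $\Delta(\zeta)=O_{\alpha,\beta}\big(\eta^{1/2}(1+\|\zeta\|_r)+(\int_0^{t_*}E[\|H(X^{(\eta,\zeta)}_s)-\mathcal{H}(X^{(\eta,\zeta)}_s)\|_{\R^d}^2]\,ds)^{1/2}\big)$; the $\sup_{s\ge 0}$ appearing in Lemma \ref{Lem_Alpha_Contracting} is only a convenient majorant of this integral and should be avoided here. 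Choosing the common base time $T_1=O_{\alpha,\beta}(1)$ in these lemmas large enough, and then $k_*$ with $k_*\eta\in[T_1,2T_1]$ (possible since $0<\eta\le 1\le T_1$), makes $t_*=k_*\eta$ satisfy simultaneously $t_*>\log(8C_V)/\gamma$, $t_*\ge t_{\Gamma,\delta/4}$, and the window condition of Lemma \ref{Lem_Alpha_Contracting}.

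Theorem \ref{Thm_Uniqueness_Invariant_Measure} then provides $k_0=O_{\alpha,\beta}(1)$ and a constant $O_{\alpha,\beta}(1)$ so that, taking $\mu=\delta_\xi$ and $\nu=\delta_{\xi'}$, $\mathbb{W}_{\tilde\rho_{r,\delta}}(P^{(\eta)}_{k_0t_*}\delta_\xi,\mathcal{P}_{k_0t_*}\delta_{\xi'})$ is at most $(1/2)\tilde\rho_{r,\delta}(\xi,\xi')+O_{\alpha,\beta}(1)\sum_{j=0}^{k_0-1}\sqrt{1+(P^{(\eta)}_{jt_*}\delta_\xi)(V_4)+(\mathcal{P}_{jt_*}\delta_{\xi'})(V_4)}\,\sqrt{(P^{(\eta)}_{jt_*}\delta_\xi)(\Delta)}$. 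By Lemma \ref{Lem_Exi_Lyapunov_Function_Disc} the $V_4$-moments are bounded uniformly in time by $\|\xi\|_r^4+O_\alpha(1)$ and $\|\xi'\|_r^4+O_\alpha(1)$, so the square-root prefactor is $O_\alpha(1+\|\xi\|_r^2+\|\xi'\|_r^2)$. For the $\Delta$-term I invoke the Markov property of the $\mathcal{C}_r$-valued chain $X^{(\eta,\xi)}$ at the multiple-of-$\eta$ time $jt_*$, Jensen's inequality $E[\sqrt{\,\cdot\,}]\le\sqrt{E[\,\cdot\,]}$, and Fubini, to obtain $(P^{(\eta)}_{jt_*}\delta_\xi)(\Delta)=E[\Delta(X^{(\eta,\xi)}_{jt_*})]\le O_{\alpha,\beta}\big(\eta^{1/2}(1+\|\xi\|_r)+\sup_{u\ge 0}E[\|H(X^{(\eta,\xi)}_u)-\mathcal{H}(X^{(\eta,\xi)}_u)\|_{\R^d}^2]^{1/2}\big)$ uniformly in $j$; here it is crucial that $\Delta$ is in integral form, since then there is no supremum-versus-expectation obstruction. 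Taking the square root, $\sqrt{(P^{(\eta)}_{jt_*}\delta_\xi)(\Delta)}=O_{\alpha,\beta}\big(\eta^{1/4}(1+\|\xi\|_r^{1/2})+\sup_{u\ge 0}E[\|H(X^{(\eta,\xi)}_u)-\mathcal{H}(X^{(\eta,\xi)}_u)\|_{\R^d}^2]^{1/4}\big)$, the exponent $1/4$ being the product of the two successive square roots. Hence one block of length $k_0t_*$ contracts $\tilde\rho_{r,\delta}(\xi,\xi')$ by the factor $1/2$ up to the additive error $O_{\alpha,\beta}(1)(1+\|\xi\|_r^2+\|\xi'\|_r^2)\big(\eta^{1/4}(1+\|\xi\|_r^{1/2})+\sup_{u\ge 0}E[\|H(X^{(\eta,\xi)}_u)-\mathcal{H}(X^{(\eta,\xi)}_u)\|_{\R^d}^2]^{1/4}\big)$, i.e.\ the right-hand side of \eqref{EQ_FSDE_Contraction} without the $e^{-ck\eta}$ factor.

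The plan then concludes by iteration. Applying the block estimate $\ell$ times and summing the resulting geometric series of error terms — each bounded uniformly in the iteration index, because the $V_4$-moments and the drift discrepancy $\sup_{u\ge 0}E[\|H(X^{(\eta,\xi)}_u)-\mathcal{H}(X^{(\eta,\xi)}_u)\|_{\R^d}^2]^{1/2}$ do not grow in time — yields \eqref{EQ_FSDE_Contraction} with $e^{-ck\eta}=2^{-\ell}$ and $c=\log 2/(k_0t_*)=\Omega_{\alpha,\beta}(1)$, whenever $k\eta$ is an integer multiple of $k_0t_*=k_0k_*\eta$. For an arbitrary $k$ with $k\eta\ge t_0$ I write $k\eta=\ell k_0t_*+m\eta$, $0\le m<k_0k_*$, and apply $P^{(\eta)}_{m\eta}$ and $\mathcal{P}_{m\eta}$ to the two $\ell$-block-evolved measures; a crude synchronous-coupling Gronwall estimate over the $O_{\alpha,\beta}(1)$-length time $m\eta$ (using the Lipschitz bounds on $H,\mathcal{H}$, the $M$-smoothness of $R$, and Lemma \ref{Lem_Exi_Lyapunov_Function_Disc}) alters $\mathbb{W}_{\tilde\rho_{r,\delta}}$ by only a multiplicative $O_{\alpha,\beta}(1)$ and an additive error of the same shape; taking $t_0=O_{\alpha,\beta}(1)$ large enough absorbs that multiplicative constant into the geometric factor at the cost of replacing $c$ by $c/2$.

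I expect the main obstacle to be the interplay between the $(\theta,\Delta)$-contractivity of Lemma \ref{Lem_Alpha_Contracting} and the $\Delta$-averaging in Theorem \ref{Thm_Uniqueness_Invariant_Measure}: controlling $(P^{(\eta)}_{jt_*}\delta_\xi)(\Delta)$ uniformly in $j$ and $\eta$ by the root-mean-square drift discrepancy of the process started from the original datum $\xi$ rather than from the intermediate random state $X^{(\eta,\xi)}_{jt_*}$. This is exactly why the discretization and drift error must be tracked in integral rather than supremum form (so that Jensen, Fubini and the Markov property apply) and, after the outer square root of Theorem \ref{Thm_Uniqueness_Invariant_Measure}, why the exponent in \eqref{EQ_FSDE_Contraction} degrades to $1/4$. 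A secondary, purely bookkeeping, difficulty is arranging $t_*=k_*\eta$ to meet the window constraints of Theorem \ref{Thm_Uniqueness_Invariant_Measure}, Lemma \ref{Lem_Asymptotic_Boundedness} and Lemma \ref{Lem_Alpha_Contracting} simultaneously for every $0<\eta\le 1$, and passing from times that are multiples of $k_0t_*$ to arbitrary $k\eta\ge t_0$.
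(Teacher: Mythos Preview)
Your proposal is correct and follows essentially the same route as the paper: verify the hypotheses of Theorem \ref{Thm_Uniqueness_Invariant_Measure} on $\mathcal{C}_r$ with $V=V_4$ and $d=\rho_{r,\delta}$ via Lemmas \ref{Lem_Exi_Lyapunov_Function_Disc}, \ref{Lem_Asymptotic_Boundedness}, \ref{Lem_d-smallness} and \ref{Lem_Alpha_Contracting}, invoke the iterated bound \eqref{EQ_k_t_*_Ineq}, and then iterate to reach all $k\eta\ge t_0$.

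One point on which you are in fact more careful than the paper's own sketch deserves mention. The paper states $\varDelta$ in the supremum form of Lemma \ref{Lem_Alpha_Contracting} and then simply cites \eqref{EQ_k_t_*_Ineq} together with Lemma \ref{Lem_Exi_Lyapunov_Function_Disc}. But \eqref{EQ_k_t_*_Ineq} requires $\sqrt{P^{(\eta)}_{jt_*}(\xi,\varDelta)}=\sqrt{E[\varDelta(X^{(\eta,\xi)}_{jt_*})]}$, and with the supremum form the term $\sup_{s\ge 0}E[\|H(X^{(\eta,\zeta)}_s)-\mathcal{H}(X^{(\eta,\zeta)}_s)\|_{\R^d}^2]^{1/2}$ evaluated at the random state $\zeta=X^{(\eta,\xi)}_{jt_*}$ is a supremum of \emph{conditional} expectations, which in general is not controlled by the supremum of the unconditional ones. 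Your remedy---retaining the integral form $(\int_0^{t_*}E[\|H(X^{(\eta,\zeta)}_s)-\mathcal{H}(X^{(\eta,\zeta)}_s)\|_{\R^d}^2]\,ds)^{1/2}$, which is visible inside the proof of Lemma \ref{Lem_Alpha_Contracting} before passing to the sup---together with Jensen and the Markov property, is exactly the right way to close this step and recover the exponent $1/4$ in \eqref{EQ_FSDE_Contraction}. The remaining bookkeeping (choosing $t_*=k_*\eta$ in the common window, iterating over blocks, and absorbing the leftover time $m\eta$ by a crude Gronwall bound) is handled just as you outline.
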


\begin{proof}
	If we denote $C$ of Lemma \ref{Lem_Exi_Lyapunov_Function_Disc} by $C_1 = O_\alpha(1)$ and fix $\delta \in (0, 1)$, then by Lemmas \ref{Lem_Asymptotic_Boundedness} and \ref{Lem_d-smallness}, for any sufficiently large $t_* = O_{\alpha, \delta}(1)$,
	\begin{align*}
		\sup_{t_* \leq k \eta \leq 2 t_*} \mathbb{W}_{\rho_{r, \delta}}(P(\mathcal{X}^{(\xi^\prime)}_{k \eta} \in \cdot), P(X^{(\eta, \xi)}_{k \eta} \in \cdot)) 
		\leq 1 - \Omega_{\alpha, \delta}(1),\quad \xi, \xi^\prime \in \{ V_4 \leq 4 C_1 \}. 
	\end{align*}
	holds.  
	Furthermore, if $t_* = O_{\alpha, \beta}(1)$ is sufficiently large, then by Lemma \ref{Lem_Alpha_Contracting} with $r_0 = 2^{-1} \min \{ 1, r \}$, we can take $\delta = \Omega_{\alpha, \beta}(1)$ and $C_2 = O_{\alpha, \beta}(1)$ so that $\mathbb{W}_{\rho_{r, \delta}}(P(\mathcal{X}^{(\xi^\prime)}_{k \eta} \in \cdot), P(X^{(\eta, \xi)}_{k \eta} \in \cdot))$ is bounded by 
	\begin{align*}
		\frac{1}{2} \rho_{r, \delta}(\xi, \xi^\prime)
		+ C_2 \left\{ \sup_{s \geq 0} E[\| H(X^{(\eta, \xi)}_s) - \mathcal{H}(X^{(\eta, \xi)}_s) \|_{\R^d}^2]^{1/2} 
		+ \eta^{1/2} (1 + \| \xi \|_r) \right\}
	\end{align*}
	for any $k$ satisfying $t_* \leq k \eta \leq 2 t_*$. 
	As a result, for $k$ satisfying $t_* \leq k \eta \leq 2 t_*$, by (\ref{EQ_k_t_*_Ineq}) and Lemma \ref{Lem_Exi_Lyapunov_Function_Disc}, by retaking $t_* = O_{\alpha, \beta}(1)$ to be sufficiently large if necessary, we obtain (\ref{EQ_FSDE_Contraction}) for some $C = O_{\alpha, \beta}(1)$. 
	By repeating (\ref{EQ_FSDE_Contraction}), we can obtain (\ref{EQ_FSDE_Contraction}) for any $k \in \N$. 
\end{proof}

\section{Auxiliary Results on Coefficients of Adam-like Algorithms}
\label{SEC_Results_on_Coeff}

In this section, we evaluate the differences of functionals $H^{(\eta)}$ and $H_F$ of (\ref{EQ_Adam_Drift_Coeff_Disc}) and (\ref{EQ_Adam_Drift_Coeff_Conti}). 
As mentioned in Section \ref{SEC_Proof_Strategy}, to evaluate the differences between $X^{(\eta, \xi, L_n)}$ and $X^{(\xi, L_n)}$, or $X^{(\xi, L)}$ and $X^{(\xi, L_n)}$, we only have to evaluate the differences between $H^{(\eta)}_{L_n}$ and $H_{L_n}$, or $H_L$ and $H_{L_n}$, respectively. 
Thus, the following results enable us to prove Propositions \ref{Lem_StepSize_and_Gen_Bound_Adam} and \ref{Lem_Gradient_Vanish_in_Adam} in Appendix \ref{SEC_Proof_Main}.  

\begin{lem}
	\label{Lem_Lipschitz_Continuity_Conti}
	Suppose that $c_1, c_2 > 0$ and $r > 0$ satisfy $2 c_1 > c_2$, $\min \{ c_1, c_2 \} > r$, and $F : \R^d \to \R$ is $M$-smooth and satisfies $\| \nabla F \|_\infty \coloneqq \sup_{x \in \R^d} \| \nabla F(x) \|_{\R^d} < \infty$. 
	Then, there exists some $C_1 = O_{c_1, c_2, d}(1)$ and $C_2 = O_{c_1, c_2, \varepsilon, \| \nabla F \|_\infty, M, r, d}(1)$ such that the functional $H^{(\eta)}_F : \mathcal{C}_r \to \R^d$ defined by (\ref{EQ_Adam_Drift_Coeff_Conti}) is bounded by $C_1$ and Lipschitz continuous with Lipschitz constant $C_2$. 
	The same results hold for the functional $H_F : \mathcal{C}_r \to \R^d$ uniformly on $\eta > 0$. 
\end{lem}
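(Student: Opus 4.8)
The plan is to prove the boundedness and Lipschitz estimates for $H_F^{(\eta)}$ first, and then obtain the corresponding statements for $H_F$ either by an analogous (in fact simpler) computation or by passing to the limit $\eta \to 0$. Throughout, write $\nabla F(\xi(j\eta))$ for the terms appearing in \eqref{EQ_Adam_Drift_Coeff_Disc} and note that, by the normalization identity $(1 - e^{-c_i\eta})\sum_{j=-\infty}^0 e^{c_i j \eta} = 1$, the numerator of $H_F^{(\eta)}$ is a convex combination of the vectors $\nabla F(\xi(j\eta))$ while the quantity under the square root in the denominator is the same convex combination of the scalars $\|\nabla F(\xi(j\eta))\|_{\R^d}^2$ plus $\varepsilon$.

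For the boundedness: since the numerator is a convex combination of vectors of norm at most $\|\nabla F\|_\infty$, its norm is at most $\|\nabla F\|_\infty$, and the denominator is at least $\sqrt{\varepsilon}$; hence $\|H_F^{(\eta)}(\xi)\|_{\R^d} \le \|\nabla F\|_\infty / \sqrt{\varepsilon}$. Actually, to match the claim that $C_1$ depends only on $(c_1,c_2,d)$, I would instead bound the denominator below by the (weighted, hence convex) average of $\|\nabla F(\xi(j\eta))\|_{\R^d}$, via Jensen applied to $x \mapsto x^2$ — i.e. $\sqrt{(1-e^{-c_2\eta})\sum e^{c_2 j\eta}\|\nabla F(\xi(j\eta))\|^2} \ge (1-e^{-c_2\eta})\sum e^{c_2 j\eta}\|\nabla F(\xi(j\eta))\|$ — and then split into the region where the numerator-average is small (so the whole ratio is controlled by $\varepsilon^{-1/2}$ times that average, but then one is back to needing $\|\nabla F\|_\infty$) versus large. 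The cleanest route giving an $\varepsilon$-free, $\|\nabla F\|_\infty$-free constant is the elementary inequality $\|a\|/\sqrt{\varepsilon + \|a\|^2} \le 1$ when $a$ is the numerator and $\|a\|^2 \le$ denominator-under-root by Jensen; this gives $\|H_F^{(\eta)}(\xi)\|_{\R^d} \le 1$, i.e. one may take $C_1$ absolute. (The factor $d$ in the claimed dependence is presumably a harmless over-statement, or enters if one works coordinatewise.)

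For the Lipschitz continuity: write $H_F^{(\eta)}(\xi) = -N(\xi)/\sqrt{D(\xi)}$ with $N(\xi) = (1-e^{-c_1\eta})\sum e^{c_1 j\eta}\nabla F(\xi(j\eta))$ and $D(\xi) = \varepsilon + (1-e^{-c_2\eta})\sum e^{c_2 j\eta}\|\nabla F(\xi(j\eta))\|^2$. I would estimate $\|N(\xi) - N(\xi')\|_{\R^d}$ and $|D(\xi) - D(\xi')|$ separately and then combine via the identity
\begin{align*}
\frac{N(\xi)}{\sqrt{D(\xi)}} - \frac{N(\xi')}{\sqrt{D(\xi')}} = \frac{N(\xi) - N(\xi')}{\sqrt{D(\xi)}} + N(\xi')\left(\frac{1}{\sqrt{D(\xi)}} - \frac{1}{\sqrt{D(\xi')}}\right),
\end{align*}
using $D \ge \varepsilon$, $\|N(\xi')\|_{\R^d} \le \|\nabla F\|_\infty$, and $|1/\sqrt{a} - 1/\sqrt{b}| \le |a-b|/(2\min\{a,b\}^{3/2})$. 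The key sub-estimate is that $\xi \mapsto \nabla F(\xi(j\eta))$ changes by at most $M\|\xi(j\eta) - \xi'(j\eta)\|_{\R^d} \le M e^{-r j\eta}\|\xi - \xi'\|_r$ (by definition of $\|\cdot\|_r$ and since $j\le 0$), so that
\begin{align*}
\|N(\xi) - N(\xi')\|_{\R^d} \le M\|\xi - \xi'\|_r (1-e^{-c_1\eta})\sum_{j=-\infty}^0 e^{(c_1 - r)j\eta} = M\|\xi - \xi'\|_r \cdot \frac{1-e^{-c_1\eta}}{1-e^{-(c_1-r)\eta}},
\end{align*}
and the last fraction is bounded uniformly in $\eta \in (0,\infty)$ by $c_1/(c_1-r)$ (it tends to that as $\eta\to 0$ and is monotone/bounded otherwise), which is finite precisely because $c_1 > r$. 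For $|D(\xi) - D(\xi')|$ one uses $\big|\|\nabla F(\xi(j\eta))\|^2 - \|\nabla F(\xi'(j\eta))\|^2\big| \le 2\|\nabla F\|_\infty \cdot M e^{-rj\eta}\|\xi-\xi'\|_r$ and the same geometric-series bound with $c_2$ in place of $c_1$, again finite since $c_2 > r$. Assembling these gives the Lipschitz constant $C_2$ depending on $c_1, c_2, \varepsilon, \|\nabla F\|_\infty, M, r$ (and at most $d$), as claimed. The condition $2c_1 > c_2$ is not needed here — it is the hypothesis ensuring the continuum functional $H_F$ is well-defined/finite and enters elsewhere — but stating it does no harm.

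The main obstacle is bookkeeping the $\eta$-uniformity: one must verify that every geometric sum $(1-e^{-c_i\eta})\sum_{j\le 0} e^{(c_i - r)j\eta}$ stays bounded as $\eta$ ranges over $(0,\infty)$ (not just as $\eta\to 0$), which is where the strict inequalities $c_i > r$ are used and where a careless estimate could produce an $\eta$-dependent constant. Once that uniform geometric-series bound is in hand, the rest is the routine quotient-rule manipulation above. The statements for $H_F$ follow by the same argument with sums replaced by the integrals $c_i\int_{-\infty}^0 e^{c_i s}(\cdots)\,ds$ and the geometric-series bounds replaced by $c_i\int_{-\infty}^0 e^{(c_i-r)s}\,ds = c_i/(c_i - r)$; alternatively, and more economically, one takes $\eta\to 0$ in the bounds already established for $H_F^{(\eta)}$ after checking pointwise convergence $H_F^{(\eta)}(\xi) \to H_F(\xi)$, which is a Riemann-sum convergence justified by dominated convergence using $\|\nabla F\|_\infty < \infty$.
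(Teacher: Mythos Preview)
Your Lipschitz argument is essentially the paper's: both decompose the quotient as
\[
\frac{N(\xi)}{\sqrt{D(\xi)}}-\frac{N(\xi')}{\sqrt{D(\xi')}}
=\frac{N(\xi)-N(\xi')}{\sqrt{D(\xi)}}+N(\xi')\Bigl(\tfrac{1}{\sqrt{D(\xi)}}-\tfrac{1}{\sqrt{D(\xi')}}\Bigr),
\]
use $M$-smoothness to get $\|\nabla F(\xi(j\eta))-\nabla F(\xi'(j\eta))\|\le M e^{-rj\eta}\|\xi-\xi'\|_r$, and sum the resulting geometric series $(1-e^{-c_i\eta})\sum_{j\le 0}e^{(c_i-r)j\eta}$, finite since $c_i>r$. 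That part is fine.

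The boundedness argument has a genuine gap. Your Jensen step asserts $\|N(\xi)\|^2\le (1-e^{-c_2\eta})\sum_j e^{c_2 j\eta}\|\nabla F(\xi(j\eta))\|^2$, but Jensen only gives $\|N(\xi)\|^2\le (1-e^{-c_1\eta})\sum_j e^{c_1 j\eta}\|\nabla F(\xi(j\eta))\|^2$, the $c_1$-weighted (not $c_2$-weighted) second moment. These are different convex combinations, and neither dominates the other in general; so the inequality $\|H_F^{(\eta)}\|\le 1$ does \emph{not} follow. Your fallback bound $\|\nabla F\|_\infty/\sqrt{\varepsilon}$ is valid but does not give $C_1=O_{c_1,c_2,d}(1)$ as stated in the lemma.

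This is exactly where the hypothesis $2c_1>c_2$ enters, contrary to your remark that it ``is not needed here''. The paper writes $e^{c_1 j\eta}=e^{(c_1-c_2/2)j\eta}\cdot e^{(c_2/2)j\eta}$ and applies Cauchy--Schwarz:
\[
\Bigl|\sum_j e^{c_1 j\eta}\nabla F(\xi(j\eta))\Bigr|
\le \Bigl(\sum_j e^{(2c_1-c_2)j\eta}\Bigr)^{1/2}\Bigl(\sum_j e^{c_2 j\eta}|\nabla F(\xi(j\eta))|^2\Bigr)^{1/2},
\]
the first factor converging precisely because $2c_1>c_2$. Multiplying by $(1-e^{-c_1\eta})$ and dividing by the denominator yields
\[
C_1=\sup_{\eta>0}\frac{1-e^{-c_1\eta}}{\sqrt{(1-e^{-(2c_1-c_2)\eta})(1-e^{-c_2\eta})}},
\]
which depends only on $c_1,c_2$ (and $d$ after summing coordinates). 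Replace your Jensen step by this Cauchy--Schwarz argument and the proof goes through.
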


\begin{proof}
	We only prove the results for $H^{(\eta)}_F : \mathcal{C}_r \to \R^d$. 
	Furthermore, considering each coordinate, we may assume $d = 1$. 
	For the numerator of
	\begin{align*}
		H^{(\eta)}_F(\xi) 
		= - \displaystyle{\frac{(1 - e^{- c_1 \eta}) \sum_{j=-\infty}^0 e^{c_1 j \eta} \nabla F(\xi(j \eta))}{\sqrt{ \varepsilon + (1 - e^{- c_2 \eta}) \sum_{j=-\infty}^0 e^{c_2 j \eta} |\nabla F(\xi(j \eta))|^2}}}, 
	\end{align*} 
	we have 
	\begin{align*}
		&\left| (1 - e^{- c_1 \eta}) \sum_{j=-\infty}^0 e^{c_1 j \eta} \nabla F(\xi(j \eta)) \right| \\
		&\quad\leq (1 - e^{- c_1 \eta}) \left( \sum_{j=-\infty}^0 e^{(2 c_1 - c_2) j \eta} \right)^{1/2} \left( \sum_{j=-\infty}^0 e^{c_2 j \eta} |\nabla F(\xi(j \eta))|^2 \right)^{1/2} \\
		&\quad= \frac{1 - e^{- c_1 \eta}}{\sqrt{1 - e^{- (2 c_1 - c_2) \eta}}} \left( \sum_{j=-\infty}^0 e^{(2 c_1 - c_2) j \eta} \right)^{1/2} \left( \sum_{j=-\infty}^0 e^{c_2 j \eta} |\nabla F(\xi(j \eta))|^2 \right)^{1/2}. 
	\end{align*}
	Therefore, the functional $H^{(\eta)}_F$ is bounded by $C_1 = \sup_{\eta > 0} \frac{1 - e^{- c_1 \eta}}{\sqrt{(1 - e^{- (2 c_1 - c_2) \eta}) (1 - e^{- c_2 \eta})}}$. 
	
	Let $\xi, \xi^\prime \in \mathcal{C}_r$ satisfy $I(\xi^\prime) \leq I(\xi)$, where $I(\xi) = (1 - e^{- c_2 \eta}) \sum_{j=-\infty}^0 e^{c_2 j \eta} |\nabla F(\xi(j \eta))|^2$. 
	Then, we have
	\begin{align*}
		|H^{(\eta)}_F(\xi) - H^{(\eta)}_F(\xi^\prime)| 
		&\leq \frac{1 - e^{- c_1 \eta}}{\sqrt{\varepsilon + I(\xi^\prime)}} \sum_{j=-\infty}^0 e^{c_1 j \eta} | \nabla F(\xi(j \eta)) - \nabla F(\xi^\prime(j \eta)) | \\
		&\quad+ (1 - e^{- c_1 \eta}) \left( \frac{1}{\sqrt{\varepsilon + I(\xi)}} - \frac{1}{\sqrt{\varepsilon + I(\xi^\prime)}} \right)  \left| \sum_{j=-\infty}^0 e^{c_1 j \eta} \nabla F(\xi(j \eta)) \right|. 
	\end{align*}
	By the $M$-smoothness of $F$, the first term in R.H.S. is bounded by
	\begin{align*}
		\frac{(1 - e^{- c_1 \eta}) M}{\sqrt{\varepsilon}} \sum_{j=-\infty}^0 e^{c_1 j \eta} | \xi(j \eta) - \xi^\prime(j \eta) | 
		&\leq \frac{(1 - e^{- c_1 \eta}) M}{\sqrt{\varepsilon}} \sum_{j=-\infty}^0 e^{(c_1 - r) j \eta} \| \xi - \xi^\prime \|_r \\
		&= \frac{(1 - e^{- c_1 \eta}) M}{(1 - e^{- (c_1 - r) \eta}) \sqrt{\varepsilon}} \| \xi - \xi^\prime \|_r. 
	\end{align*}
	Since $a^{-1/2} - b^{-1/2} \leq 2 ^{-1} a^{-3/2} (b - a)$ holds by the Taylor's theorem for any $0 < a \leq b$, the second term in R.H.S. is bounded by
	\begin{align*}
		\frac{|H^{(\eta)}_F(\xi)|}{2 \sqrt{\varepsilon + I(\xi)}} |I(\xi) - I(\xi^\prime)| 
		&\leq  \frac{C_1 M \| \nabla F \|_\infty (1 - e^{- c_2 \eta})}{\sqrt{\varepsilon + I(\xi)}} \sum_{j=-\infty}^0 e^{(c_2 - r) j \eta} \| \xi - \xi^\prime \|_r \\
		&\leq \frac{C_1 M \| \nabla F \|_\infty (1 - e^{- c_2 \eta})}{\sqrt{\varepsilon} (1 - e^{- (c_2 - r) \eta})} \| \xi - \xi^\prime \|_r.  
	\end{align*}
	Therefore, $H^{(\eta)}_F : \mathcal{C}_r \to \R^d$ is Lipschitz continuous with Lipschitz constant 
	\begin{align*}
		C_2 
		= \frac{M}{\sqrt{\varepsilon}} \sup_{\eta > 0} \left( \frac{1 - e^{- c_1 \eta}}{1 - e^{- (c_1 - r) \eta}} + \frac{C_1 \| \nabla F \|_\infty (1 - e^{- c_2 \eta})}{1 - e^{- (c_2 - r) \eta}} \right). 
	\end{align*} 
	
\end{proof}

\begin{lem}
	\label{Lem_Drift_Difference_Disc_Conti}
	Under the same conditions in Lemma \ref{Lem_Lipschitz_Continuity_Conti}, the following inequality holds uniformly on $0 < \eta \leq 1$ and $\xi \in \mathcal{C}_r$. 
	\begin{align*}
		&\| H_F(\xi) - H^{(\eta)}_F(\xi) \|_{\R^d} \\
		&\quad\leq O_{c_1, c_2, \varepsilon, \| \nabla F \|_\infty, M, r, d}\left(\eta + \int_{-\infty}^0 e^{(c_1 \wedge c_2) s} \|\nabla F(\xi(s)) - \nabla F(\xi(\lfloor s / \eta \rfloor \eta))\|_{\R^d} ds\right). 
	\end{align*}
\end{lem}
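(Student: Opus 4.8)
\emph{Proof plan.} The idea is that the geometrically weighted discrete sums in $H^{(\eta)}_F$ are, up to an $O(\eta)$ correction, left-endpoint Riemann sums of the exponentially weighted integrals in $H_F$; once this is established for the numerator and the denominator separately, the quotient is controlled exactly as in the proof of Lemma~\ref{Lem_Lipschitz_Continuity_Conti}.

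First I would handle the numerators $N(\xi) \coloneqq c_1\int_{-\infty}^0 e^{c_1 s}\nabla F(\xi(s))\,ds$ and $N^{(\eta)}(\xi) \coloneqq (1-e^{-c_1\eta})\sum_{j=-\infty}^0 e^{c_1 j\eta}\nabla F(\xi(j\eta))$. Since $\int_{j\eta}^{(j+1)\eta}c_1 e^{c_1 s}\,ds = e^{c_1 j\eta}(e^{c_1\eta}-1)$ and the intervals $[j\eta,(j+1)\eta)$ with $j\le-1$ partition $(-\infty,0)$, one obtains
\begin{align*}
	\widetilde N^{(\eta)}(\xi) \coloneqq \int_{-\infty}^0 c_1 e^{c_1 s}\nabla F(\xi(\lfloor s/\eta\rfloor\eta))\,ds = (e^{c_1\eta}-1)\sum_{j=-\infty}^{-1}e^{c_1 j\eta}\nabla F(\xi(j\eta)).
\end{align*}
Then $\| N(\xi)-\widetilde N^{(\eta)}(\xi)\|_{\R^d}\le c_1\int_{-\infty}^0 e^{c_1 s}\|\nabla F(\xi(s))-\nabla F(\xi(\lfloor s/\eta\rfloor\eta))\|_{\R^d}\,ds$, which, since $c_1\ge c_1\wedge c_2$ and $s\le0$, is at most the integral appearing in the statement. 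For the leftover, a direct computation gives $\widetilde N^{(\eta)}(\xi)-N^{(\eta)}(\xi)=(e^{c_1\eta}+e^{-c_1\eta}-2)\sum_{j=-\infty}^{-1}e^{c_1 j\eta}\nabla F(\xi(j\eta))-(1-e^{-c_1\eta})\nabla F(\xi(0))$; using $1-e^{-c_1\eta}=O_{c_1}(\eta)$, $e^{c_1\eta}+e^{-c_1\eta}-2=O_{c_1}(\eta^2)$ and $\sum_{j\le-1}e^{c_1 j\eta}=O_{c_1}(\eta^{-1})$ uniformly on $0<\eta\le1$, together with $\|\nabla F\|_\infty<\infty$, this term is $O_{c_1,\|\nabla F\|_\infty}(\eta)$. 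Hence $\|N(\xi)-N^{(\eta)}(\xi)\|_{\R^d}$ is bounded by a constant multiple of the right-hand side of the lemma.

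The denominators $D(\xi)\coloneqq c_2\int_{-\infty}^0 e^{c_2 s}\|\nabla F(\xi(s))\|_{\R^d}^2\,ds$ and $D^{(\eta)}(\xi)\coloneqq (1-e^{-c_2\eta})\sum_{j\le0}e^{c_2 j\eta}\|\nabla F(\xi(j\eta))\|_{\R^d}^2$ are treated verbatim, with $\|\nabla F\|_{\R^d}^2$ in place of $\nabla F$ and $c_2$ in place of $c_1$; the only additional input is $\big|\,\|a\|_{\R^d}^2-\|b\|_{\R^d}^2\,\big|\le 2\|\nabla F\|_\infty\|a-b\|_{\R^d}$, which converts $\|\nabla F(\xi(s))\|_{\R^d}^2-\|\nabla F(\xi(\lfloor s/\eta\rfloor\eta))\|_{\R^d}^2$ into $2\|\nabla F\|_\infty$ times the integrand of the lemma. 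This yields $|D(\xi)-D^{(\eta)}(\xi)| \le O_{c_2,\|\nabla F\|_\infty}\!\big(\eta+\int_{-\infty}^0 e^{(c_1\wedge c_2)s}\|\nabla F(\xi(s))-\nabla F(\xi(\lfloor s/\eta\rfloor\eta))\|_{\R^d}\,ds\big)$. To glue the two estimates I would write
\begin{align*}
	H_F(\xi)-H^{(\eta)}_F(\xi)=-\frac{N(\xi)-N^{(\eta)}(\xi)}{\sqrt{\varepsilon+D(\xi)}}-N^{(\eta)}(\xi)\left(\frac{1}{\sqrt{\varepsilon+D(\xi)}}-\frac{1}{\sqrt{\varepsilon+D^{(\eta)}(\xi)}}\right),
\end{align*}
bound the first term by $\varepsilon^{-1/2}\|N(\xi)-N^{(\eta)}(\xi)\|_{\R^d}$, and, using $\|N^{(\eta)}(\xi)\|_{\R^d}\le\|\nabla F\|_\infty$ (a geometric series) together with $|a^{-1/2}-b^{-1/2}|\le |a-b|/(2\varepsilon^{3/2})$ for $a,b\ge\varepsilon$ (the Taylor estimate already used in Lemma~\ref{Lem_Lipschitz_Continuity_Conti}), bound the second term by $\|\nabla F\|_\infty|D(\xi)-D^{(\eta)}(\xi)|/(2\varepsilon^{3/2})$. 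Adding the two contributions gives the claim, uniformly on $0<\eta\le1$ and $\xi\in\mathcal{C}_r$.

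The one genuinely non-routine step is the Riemann-sum bookkeeping in the numerator (and, identically, the denominator): recognizing the geometric sum as the left-endpoint Riemann sum $\widetilde N^{(\eta)}$ and checking that the leftover prefactor term $(e^{c_1\eta}+e^{-c_1\eta}-2)\sum_{j\le-1}e^{c_1 j\eta}\nabla F(\xi(j\eta))$ and the boundary term $(1-e^{-c_1\eta})\nabla F(\xi(0))$ are $O(\eta)$ uniformly as $\eta\downarrow0$. Everything after that is either a one-line estimate or a direct reuse of the arguments in the proof of Lemma~\ref{Lem_Lipschitz_Continuity_Conti}.
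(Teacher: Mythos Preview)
Your proposal is correct and follows essentially the same approach as the paper: both split $H_F-H^{(\eta)}_F$ into a numerator-difference term over a common denominator plus a denominator-difference term, control the latter via the Taylor estimate $|a^{-1/2}-b^{-1/2}|\le |a-b|/(2\varepsilon^{3/2})$, and reduce each of the numerator and denominator differences to an $O(\eta)$ prefactor error plus the integral $\int_{-\infty}^0 e^{(c_1\wedge c_2)s}\|\nabla F(\xi(s))-\nabla F(\xi(\lfloor s/\eta\rfloor\eta))\|_{\R^d}\,ds$. Your Riemann-sum bookkeeping via the exact intermediate quantity $\widetilde N^{(\eta)}$ is slightly tidier than the paper's pointwise estimate $|e^{c_1 j\eta}-e^{c_1 s}|\le c_1\eta$ combined with $|(1-e^{-c_1\eta})-c_1\eta|\le c_1^2\eta^2/2$, but the arguments are otherwise identical.
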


\begin{proof}
	As in the proof of Lemma \ref{Lem_Lipschitz_Continuity_Conti} we may assume $d = 1$. 
	Then, we have
	\begin{align*}
		|H_F(\xi) - H_F^{(\eta)}(\xi)| 
		&\leq \frac{1}{\sqrt{\varepsilon + \tilde{I}(\xi)}} \left| c_1 \int_{-\infty}^0 e^{c_1 s} \nabla F(\xi(s)) ds - (1 - e^{- c_1 \eta}) \sum_{j=-\infty}^0 e^{c_1 j \eta} \nabla F(\xi(j \eta)) \right| \\
		&\quad+ (1 - e^{- c_1 \eta}) \left| \frac{1}{\sqrt{\varepsilon + \tilde{I}(\xi)}} - \frac{1}{\sqrt{\varepsilon + I(\xi)}} \right|  \left| \sum_{j=-\infty}^0 e^{c_1 j \eta} \nabla F(\xi(j \eta)) \right|,  
	\end{align*}
	where $I(\xi) = (1 - e^{- c_2 \eta}) \sum_{j=-\infty}^0 e^{c_2 j \eta} |\nabla F(\xi(j \eta))|^2$ and $\tilde{I}(\xi) = c_2 \int_{-\infty}^0 e^{c_2 s} |\nabla F(\xi(s))|^2 ds$. 
	Since $|(1 - e^{- c_1 \eta}) - c_1 \eta| \leq \frac{c_1^2 \eta^2}{2}$ holds and $e^{c_1 j \eta} - e^{c_1 s} \leq c_1 \eta$ when $(j+1) \eta \leq s < j \eta$, the first term in R.H.S. is bounded by
	\begin{align*}
		&\frac{\eta^2 c_1}{\sqrt{\varepsilon}} \left( \frac{c_1}{2} + 1 \right) \left| \sum_{j=-\infty}^0 e^{c_1 j \eta} \nabla F(\xi(j \eta)) \right| 
		+ \frac{c_1}{\sqrt{\varepsilon}} \int_{-\infty}^0 e^{c_1 s} |\nabla F(\xi(s)) - \nabla F(\xi(\lfloor s / \eta \rfloor \eta))| ds \\
		&\quad\leq \frac{\eta^2 c_1 \| \nabla F \|_\infty}{(1 - e^{- c_1 \eta})\sqrt{\varepsilon}} \left( \frac{c_1}{2} + 1 \right) 
		+ \frac{c_1}{\sqrt{\varepsilon}} \int_{-\infty}^0 e^{c_1 s} |\nabla F(\xi(s)) - \nabla F(\xi(\lfloor s / \eta \rfloor \eta))| ds.
	\end{align*}
	Similarly, the second term in R.H.S. is bounded by
	\begin{align*}
		&\frac{\| \nabla F \|_\infty}{2 \sqrt{\varepsilon^3}} \left| \tilde{I}(\xi) - I(\xi) \right| \\
		&\quad\leq \frac{\eta^2 c_2 \| \nabla F \|_\infty^3}{(1 - e^{- c_2 \eta}) \sqrt{\varepsilon^3}} \left( \frac{c_2}{2} + 1 \right) + \frac{c_2 \| \nabla F \|_\infty}{\sqrt{\varepsilon^3}} \int_{-\infty}^0 e^{c_2 s} |\nabla F(\xi(s)) - \nabla F(\xi(\lfloor s / \eta \rfloor \eta))| ds. 
	\end{align*}
\end{proof}

\begin{lem}
	\label{Lem_Drift_Difference_Empirical_True} 
	Let the the same conditions in Lemma \ref{Lem_Lipschitz_Continuity_Conti} hold and let $G : \R^d \to \R$ be $M$-smooth with $\| \nabla G \|_\infty < \infty$. 
	Then, the following inequality holds uniformly on $\xi \in \mathcal{C}_r$. 
	\begin{align*}
		\|H_F(\xi) - H_G(\xi)\|_{\R^d} 
		\leq O_{c_1, c_2, \varepsilon, \| \nabla F \|_\infty, \| \nabla G \|_\infty, M, r, d}\left( \int_{-\infty}^0 e^{(c_1 \wedge c_2) s} \|\nabla F(\xi(s)) - \nabla G(\xi(s))\|_{\R^d} ds\right). 
	\end{align*}
\end{lem}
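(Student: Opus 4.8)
The plan is to exploit the quotient structure of $H_F$ and $H_G$ and reuse, essentially verbatim, the splitting employed in the proofs of Lemmas~\ref{Lem_Lipschitz_Continuity_Conti} and~\ref{Lem_Drift_Difference_Disc_Conti}. Write $N_F(\xi) = c_1 \int_{-\infty}^0 e^{c_1 s}\nabla F(\xi(s))\,ds \in \R^d$ for the numerator and $\tilde{I}_F(\xi) = c_2 \int_{-\infty}^0 e^{c_2 s}\|\nabla F(\xi(s))\|_{\R^d}^2\,ds \in [0,\infty)$ for the quantity under the square root, so that $H_F(\xi) = -N_F(\xi)/\sqrt{\varepsilon + \tilde{I}_F(\xi)}$, and define $N_G(\xi), \tilde{I}_G(\xi)$ analogously. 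Assuming without loss of generality that $\tilde{I}_G(\xi) \le \tilde{I}_F(\xi)$ (otherwise exchange the roles of $F$ and $G$), I would start from the identity
\begin{align*}
	H_F(\xi) - H_G(\xi)
	= -\frac{N_F(\xi) - N_G(\xi)}{\sqrt{\varepsilon + \tilde{I}_F(\xi)}}
	+ N_G(\xi)\left( \frac{1}{\sqrt{\varepsilon + \tilde{I}_G(\xi)}} - \frac{1}{\sqrt{\varepsilon + \tilde{I}_F(\xi)}} \right)
\end{align*}
and estimate the two summands separately.

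For the first summand I would bound the denominator from below by $\sqrt{\varepsilon}$ and use $\|N_F(\xi) - N_G(\xi)\|_{\R^d} \le c_1 \int_{-\infty}^0 e^{c_1 s}\|\nabla F(\xi(s)) - \nabla G(\xi(s))\|_{\R^d}\,ds$; since $s \le 0$ and $c_1 \ge c_1 \wedge c_2$ we have $e^{c_1 s} \le e^{(c_1 \wedge c_2)s}$, which already yields the claimed form with constant $c_1/\sqrt{\varepsilon}$. For the second summand I would use $\|N_G(\xi)\|_{\R^d} \le \|\nabla G\|_\infty$ (because $c_1 \int_{-\infty}^0 e^{c_1 s}\,ds = 1$), control the difference of reciprocal square roots via the elementary bound $a^{-1/2} - b^{-1/2} \le \frac{1}{2} a^{-3/2}(b-a)$ for $0 < a \le b$ exactly as in Lemma~\ref{Lem_Lipschitz_Continuity_Conti} (with $a = \varepsilon + \tilde{I}_G(\xi)$, $b = \varepsilon + \tilde{I}_F(\xi)$, so that it is at most $\frac{1}{2}\varepsilon^{-3/2}(\tilde{I}_F(\xi) - \tilde{I}_G(\xi))$), and finally estimate $\tilde{I}_F(\xi) - \tilde{I}_G(\xi)$ through $\bigl|\|\nabla F(\xi(s))\|_{\R^d}^2 - \|\nabla G(\xi(s))\|_{\R^d}^2\bigr| \le (\|\nabla F\|_\infty + \|\nabla G\|_\infty)\|\nabla F(\xi(s)) - \nabla G(\xi(s))\|_{\R^d}$ together with $e^{c_2 s} \le e^{(c_1 \wedge c_2)s}$. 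Summing the two estimates gives the assertion with an overall constant of the shape $c_1\varepsilon^{-1/2} + \frac{1}{2}c_2\varepsilon^{-3/2}\|\nabla G\|_\infty(\|\nabla F\|_\infty + \|\nabla G\|_\infty)$, which depends only on the indicated parameters and, in particular, is uniform in $\xi \in \mathcal{C}_r$.

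As for the main obstacle, there essentially is none: all functionals here are uniformly bounded, the constant $\varepsilon > 0$ keeps every denominator bounded away from $0$, and no Lyapunov or ergodicity input is needed — the whole argument is a direct computation of the same flavour as Lemmas~\ref{Lem_Lipschitz_Continuity_Conti} and~\ref{Lem_Drift_Difference_Disc_Conti}. The only step calling for a little care is that, unlike the numerators, the quantities $\tilde{I}_F, \tilde{I}_G$ couple the coordinates through the Euclidean norm, so rather than reducing to $d = 1$ I would work directly in $\R^d$, using $\bigl|\|u\|_{\R^d}^2 - \|v\|_{\R^d}^2\bigr| = |\langle u - v, u + v\rangle_{\R^d}| \le \|u - v\|_{\R^d}(\|u\|_{\R^d} + \|v\|_{\R^d})$; this produces no stray dimensional factor, and the $d$ listed in the statement is merely a harmless over-specification carried over from the companion lemmas (alternatively one may follow their $d = 1$ reduction and absorb a factor $\sqrt{d}$ into the constant).
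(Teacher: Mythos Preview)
Your proposal is correct and follows essentially the same route as the paper: the paper also splits $H_F(\xi) - H_G(\xi)$ into a numerator-difference term (bounded via $\sqrt{\varepsilon + I_F(\xi)} \ge \sqrt{\varepsilon}$) and a term involving the difference of reciprocal square roots (handled by the same Taylor bound $a^{-1/2} - b^{-1/2} \le \tfrac12 a^{-3/2}(b-a)$ and the factorization of $\|\nabla F\|^2 - \|\nabla G\|^2$), then refers back to Lemmas~\ref{Lem_Lipschitz_Continuity_Conti} and~\ref{Lem_Drift_Difference_Disc_Conti} for the remainder. Your remark that the $d=1$ reduction is slightly loose because the denominator couples coordinates through the full norm is a fair observation, and working directly in $\R^d$ as you do is a cleaner way to present the same computation.
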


\begin{proof}
	We only have to consider the case of $d = 1$. 
	Then, we have 
	\begin{align*}
		|H_F(\xi) - H_G(\xi)| 
		&\leq \frac{c_1}{\sqrt{\varepsilon + I_F(\xi)}} \int_{-\infty}^0 e^{c_1 s} |\nabla F(\xi(s)) - \nabla G(\xi(s))| ds \\
		&\quad+ \| \nabla G \|_\infty \left| \frac{1}{\sqrt{\varepsilon + I_F(\xi)}} - \frac{1}{\sqrt{\varepsilon + I_G(\xi)}} \right| 
	\end{align*}
	where $I_F(\xi) = c_2 \int_{-\infty}^0 e^{c_2 s} |\nabla F(\xi(s))|^2 ds$ and $I_G(\xi) = c_2 \int_{-\infty}^0 e^{c_2 s} |\nabla G(\xi(s))|^2 ds$. 
	The rest of the proof is quite similar to those of Lemmas \ref{Lem_Lipschitz_Continuity_Conti} and \ref{Lem_Drift_Difference_Disc_Conti}, and is omitted. 
\end{proof}

\begin{lem}
	\label{Lem_Drift_Difference_SGD_Adam}
	under the same conditions in Lemma \ref{Lem_Lipschitz_Continuity_Conti}, for $\alpha = (c_1, c_2, \varepsilon, \| \nabla F \|_\infty, M, d)$, the solution $X^{(\xi, F)}$ of (\ref{EQ_Regularized_Adam_Conti}) satisfies
	\begin{align*}
		&E\left[ \left\| H_F(X^{(\xi, F)}_t) - (\varepsilon + \| \nabla F(X^{(\xi, F)}(t/2)) \|_{\R^d}^2)^{-1/2} \nabla F(X^{(\xi, F)}(t)) \right\|_{\R^d}^2 \right] \\
		&\quad\leq O_\alpha \left( E[\varDelta_1(X^{(\xi, F)}_t)] + E[\varDelta_2(X^{(\xi, F)}_t)] \right)
	\end{align*}
	uniformly on $t > 0$ and $\xi \in \mathcal{C}_r$. 
	Here, $\varDelta_1$ and $\varDelta_2$ are defined by
	\begin{align*}
		\varDelta_1(X^{(\xi, F)}_t) 
		&= \frac{2 c_1}{\sqrt{\varepsilon}} \int_{-\infty}^0 e^{2 c_1 s} \|\nabla F(X^{(\xi, F)}_t(s)) - \nabla F(X^{(\xi, F)}(t)) \|_{\R^d} ds, \\ 
		\varDelta_2(X^{(\xi, F)}_t) 
		&= \frac{c_2}{2 \sqrt{\varepsilon^3}} \left\{ \int_{-\infty}^0 e^{c_2 s} \left| \| \nabla F(X^{(\xi, F)}_t(s)) \|_{\R^d}^2 - \| \nabla F(X^{(\xi, F)}(t/2)) \|_{\R^d}^2 \right| ds \right\},  
	\end{align*}
	respectively. 
\end{lem}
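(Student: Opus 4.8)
The plan is to follow the template of the proofs of Lemmas~\ref{Lem_Lipschitz_Continuity_Conti} and~\ref{Lem_Drift_Difference_Disc_Conti}: split $H_F(X^{(\xi,F)}_t)$ into a numerator and a denominator, dispatch the denominator via the elementary inequality $|a^{-1/2}-b^{-1/2}|\le \tfrac12\varepsilon^{-3/2}|a-b|$ (valid for $a,b\ge\varepsilon$, and already used in Lemma~\ref{Lem_Lipschitz_Continuity_Conti}), and use the uniform bound $\|\nabla F(x)\|_{\R^d}\le\|\nabla F\|_\infty$ throughout. Fix $t>0$, abbreviate $X=X^{(\xi,F)}$, and set $G(s)=\nabla F(X(t+s))-\nabla F(X(t))$ for $s\le 0$, so that $\|G(s)\|_{\R^d}\le 2\|\nabla F\|_\infty$. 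Writing $A=c_1\int_{-\infty}^0 e^{c_1 s}\nabla F(X(t+s))\,ds$ for the numerator and $B=\varepsilon+c_2\int_{-\infty}^0 e^{c_2 s}\|\nabla F(X(t+s))\|_{\R^d}^2\,ds\ge\varepsilon$ for the square of the denominator of $-H_F(X_t)$, and putting $B_*=\varepsilon+\|\nabla F(X(t/2))\|_{\R^d}^2\ge\varepsilon$, the identities $c_1\int_{-\infty}^0 e^{c_1 s}\,ds=c_2\int_{-\infty}^0 e^{c_2 s}\,ds=1$ show that the vector whose mean square is to be estimated is $-(A/\sqrt{B}-\nabla F(X(t))/\sqrt{B_*})$, i.e.\ the discrepancy between $H_F(X_t)$ and its ``frozen'' version. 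I would bound its norm by $\varepsilon^{-1/2}\|A-\nabla F(X(t))\|_{\R^d}+\tfrac12\varepsilon^{-3/2}\|\nabla F\|_\infty\,|B-B_*|$ (using $\|A\|_{\R^d}\le\|\nabla F\|_\infty$), hence its square by $O_\alpha(\varepsilon^{-1}\|A-\nabla F(X(t))\|_{\R^d}^2+\varepsilon^{-3}\|\nabla F\|_\infty^2|B-B_*|^2)$; it then remains to dominate these two pieces by $\varDelta_1(X_t)$ and $\varDelta_2(X_t)$.

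The denominator piece is routine. From $c_2\int_{-\infty}^0 e^{c_2 s}\,ds=1$ one gets $B-B_*=c_2\int_{-\infty}^0 e^{c_2 s}(\|\nabla F(X(t+s))\|_{\R^d}^2-\|\nabla F(X(t/2))\|_{\R^d}^2)\,ds$, so the triangle inequality yields $|B-B_*|\le 2\sqrt{\varepsilon^3}\,\varDelta_2(X_t)$ directly from the definition of $\varDelta_2$, whence $\varepsilon^{-3}\|\nabla F\|_\infty^2|B-B_*|^2\le O_\alpha(\varDelta_2(X_t)^2)$. Because the integrand in $\varDelta_2$ is pointwise at most $2\|\nabla F\|_\infty^2$, one also has $\varDelta_2(X_t)\le \|\nabla F\|_\infty^2/\sqrt{\varepsilon^3}=O_\alpha(1)$, so $\varDelta_2(X_t)^2\le O_\alpha(1)\,\varDelta_2(X_t)$ and after taking expectations this piece contributes $O_\alpha(E[\varDelta_2(X_t)])$.

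The numerator piece is the only genuinely non-bookkeeping step, since the weight $e^{2c_1 s}$ inside $\varDelta_1$ must be manufactured out of the naive weight $e^{c_1 s}$ that one reads off $A-\nabla F(X(t))=c_1\int_{-\infty}^0 e^{c_1 s}G(s)\,ds$. The plan is: expand the squared norm as a double integral over $(s,s')\in(-\infty,0]^2$, use $\langle G(s),G(s')\rangle_{\R^d}\le\|G(s)\|_{\R^d}\|G(s')\|_{\R^d}$, reduce by symmetry to the region $\{s<s'\}$, bound the deeper-in-the-past factor by $\|G(s)\|_{\R^d}\le 2\|\nabla F\|_\infty$ while retaining $\|G(s')\|_{\R^d}$, and integrate out $s$ over $(-\infty,s')$; the inner integral $\int_{-\infty}^{s'}e^{c_1 s}\,ds=e^{c_1 s'}/c_1$ is exactly what promotes $e^{c_1 s'}$ to $e^{2c_1 s'}$. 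This gives $\|A-\nabla F(X(t))\|_{\R^d}^2\le 4c_1\|\nabla F\|_\infty\int_{-\infty}^0 e^{2c_1 s}\|G(s)\|_{\R^d}\,ds=2\sqrt{\varepsilon}\,\|\nabla F\|_\infty\,\varDelta_1(X_t)$, hence $\varepsilon^{-1}\|A-\nabla F(X(t))\|_{\R^d}^2\le O_\alpha(\varDelta_1(X_t))$ pointwise; taking expectations and adding the denominator bound gives the assertion uniformly in $t>0$ and $\xi\in\mathcal{C}_r$. The main obstacle is precisely this $e^{c_1 s}$-versus-$e^{2c_1 s}$ mismatch: the symmetrize-and-integrate-out device above resolves it, and together with the linearization $\varDelta_2(X_t)^2\le O_\alpha(1)\varDelta_2(X_t)$ it rests entirely on $\|\nabla F\|_\infty<\infty$; neither $R$ nor the noise level $\beta$ intervenes, which is why the estimate holds uniformly in $t$ and $\xi$.
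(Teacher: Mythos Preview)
Your proof is correct and follows essentially the approach the paper indicates: the paper's own proof is the single line ``Simple consequence of the Taylor's theorem as in the proof of Lemma~\ref{Lem_Lipschitz_Continuity_Conti},'' and your numerator/denominator split together with the linearization $|a^{-1/2}-b^{-1/2}|\le\tfrac12\varepsilon^{-3/2}|a-b|$ is exactly that template. Your symmetrize-and-integrate-out device for promoting the weight $e^{c_1 s}$ to $e^{2c_1 s}$ in the numerator piece is a clean way to supply the one detail the paper's terse reference leaves implicit, and the linearization $\varDelta_2^2\le O_\alpha(1)\,\varDelta_2$ via $\|\nabla F\|_\infty<\infty$ is likewise a correct way to close the denominator estimate.
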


\begin{proof}
	Simple consequence of the Taylor's theorem as in the proof of Lemma \ref{Lem_Lipschitz_Continuity_Conti}. 
\end{proof}

\begin{lem}
	\label{Lem_Adam_Grad_Vanish}
	Suppose that $\nabla F(x) = 0$ whenever $\nabla R(x) \neq 0$. 
	Then, under the same condition as Lemma \ref{Lem_Drift_Difference_SGD_Adam}, we have
	\begin{align*}
		E[\| \nabla F(X^{(\xi, F)}(t)) \|_{\R^d}^2] 
		&\leq O_\alpha \left( \frac{1}{\beta} + \int_{-\infty}^0 e^{2 c_1 s} \left\{ E[|F(X^{(\xi, F)}_t(s)) - F(X^{(\xi, F)}(t))| \right.\right. \\
		&\left.\left. \quad+ \| \nabla F(X^{(\xi, F)}_t(s)) - \nabla F(X^{(\xi, F)}(t)) \|_{\R^d}^2] \right\} ds  \right).
	\end{align*}
\end{lem}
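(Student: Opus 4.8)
Throughout write $X:=X^{(\xi,F)}$ for the solution of (\ref{EQ_Regularized_Adam_Conti}). The plan is a Dynkin-type argument for $F$, combined with the fact — already quantified in Lemma \ref{Lem_Drift_Difference_SGD_Adam} — that the path-dependent drift $H_F(X_t)$ behaves like a positively rescaled copy of $\nabla F(X(t))$, so that the Itô formula for $F$ exhibits $\|\nabla F(X(t))\|^2$ as a dissipation term.

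\emph{Step 1: Itô for $F$.} Apply Itô's formula to $F(X(t))$ using (\ref{EQ_Regularized_Adam_Conti}). The hypothesis that $\nabla F(x)=0$ whenever $\nabla R(x)\neq 0$ makes $\langle\nabla F(X(t)),\nabla R(X(t))\rangle\equiv 0$, so the clipping term drops out; $M$-smoothness gives $|\Delta F|\le Md$, so the stochastic integral is a genuine martingale and
\begin{align*}
	\frac{d}{dt}E[F(X(t))]
	=E[\langle\nabla F(X(t)),H_F(X_t)\rangle]+\frac1\beta E[\Delta F(X(t))],
	\qquad\Big|\frac1\beta E[\Delta F(X(t))]\Big|\le\frac{Md}{\beta}.
\end{align*}
(The same $\nabla F$–$\nabla R$ orthogonality together with $(m,b)$-dissipativity of $R$ also forces $\nabla F$ to vanish outside the fixed ball $\{\|x\|^2\le b/m\}$, which helps keep displacements of $X$ under control.)

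\emph{Step 2: the drift dissipates the gradient.} By Lemma \ref{Lem_Drift_Difference_SGD_Adam}, $H_F(X_t)=-(\varepsilon+\|\nabla F(X(t/2))\|^2)^{-1/2}\nabla F(X(t))+R_t$ with $E[\|R_t\|^2]=O_\alpha(E[\varDelta_1(X_t)]+E[\varDelta_2(X_t)])$; since $(\varepsilon+\|\nabla F(X(t/2))\|^2)^{-1/2}\ge(\varepsilon+\|\nabla F\|_\infty^2)^{-1/2}=:c_0>0$, pairing with $\nabla F(X(t))$ and absorbing the cross term by Young's inequality yields
\begin{align*}
	E[\langle\nabla F(X(t)),H_F(X_t)\rangle]
	\le-\frac{c_0}{2}E[\|\nabla F(X(t))\|^2]+O_\alpha\big(E[\varDelta_1(X_t)]+E[\varDelta_2(X_t)]\big).
\end{align*}
I would then reduce $\varDelta_1,\varDelta_2$ to the integrand claimed in the statement, using $M$-smoothness, $\|\nabla F(\cdot)-\nabla F(\cdot)\|\le 2\|\nabla F\|_\infty$, the elementary bound $\big|\|a\|^2-\|b\|^2\big|\le 2\|\nabla F\|_\infty\|a-b\|$, the identity $X(t/2)=X_t(-t/2)$, and the hypothesis $2c_1>c_2$ to line up the exponential weights.

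\emph{Step 3: from dissipation to the pointwise bound.} Steps 1–2 give $\tfrac{c_0}{2}E[\|\nabla F(X(v))\|^2]\le-\tfrac{d}{dv}E[F(X(v))]+\tfrac{Md}{\beta}+O_\alpha(E[\varDelta_1(X_v)]+E[\varDelta_2(X_v)])$. Integrating this against the kernel $2c_1e^{-2c_1(t-v)}$ over $v\le t$ and integrating by parts, the total-derivative term collapses to $E[\Phi(X_t)]-E[F(X(t))]$, where $\Phi(\xi):=2c_1\int_{-\infty}^0 e^{2c_1 s}F(\xi(s))\,ds$ is the exponentially weighted running average (note $t\mapsto\Phi(X_t)$ is absolutely continuous with $\tfrac{d}{dt}\Phi(X_t)=2c_1(F(X(t))-\Phi(X_t))$, so no extra martingale enters); since $\Phi(X_t)-F(X(t))=2c_1\int_{-\infty}^0 e^{2c_1 s}(F(X_t(s))-F(X(t)))\,ds$, this contributes exactly the first error term $O_\alpha(\int_{-\infty}^0 e^{2c_1 s}E[|F(X_t(s))-F(X(t))|]\,ds)$, with a $\xi$-dependent boundary piece that decays exponentially in $t$. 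This controls the weighted average $\int_{-\infty}^0 e^{2c_1 s}E[\|\nabla F(X_t(s))\|^2]\,ds$; to pass to the pointwise value $E[\|\nabla F(X(t))\|^2]=E[\|\nabla F(X_t(0))\|^2]$ I would compare $E[\|\nabla F(X_t(0))\|^2]$ with $E[\|\nabla F(X_t(s))\|^2]$ via $2\|\nabla F\|_\infty E[\|\nabla F(X_t(s))-\nabla F(X(t))\|]$ and feed the result back, using $M$-smoothness together with the path-space moment bounds of Lemma \ref{Lem_Exi_Lyapunov_Function_Disc} to keep the correction of the order of the second claimed error term.

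I expect Step 3 to be the main obstacle: the Itô identity naturally controls only a time-average of $E[\|\nabla F(X(v))\|^2]$, and upgrading this to the pointwise value at time $t$ while keeping every correction strictly of the form $\tfrac1\beta+\int_{-\infty}^0 e^{2c_1 s}[\,|F(X_t(s))-F(X(t))|+\|\nabla F(X_t(s))-\nabla F(X(t))\|^2\,]\,ds$ — in particular not generating a residual $O_\alpha(1)$ term and disposing of the $\|\xi\|_r$-dependent, exponentially decaying boundary contributions (which are harmless for large $t$ since the main integrand is then itself large) — is the delicate point; Steps 1–2 are routine once Lemma \ref{Lem_Drift_Difference_SGD_Adam} is available.
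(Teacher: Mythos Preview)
Your route through Lemma~\ref{Lem_Drift_Difference_SGD_Adam} is genuinely different from the paper's, and the difference is precisely what makes your Step~3 delicate. The paper does not invoke Lemma~\ref{Lem_Drift_Difference_SGD_Adam} here. Instead, writing $m(t):=\int_{-\infty}^0 e^{c_1 s}\nabla F(X_t(s))\,ds$ for the (unnormalized) numerator of $H_F(X_t)$, the key observation is the exact identity
\[
\frac{d}{dt}\bigl\|e^{c_1 t}m(t)\bigr\|_{\R^d}^2
=\frac{d}{dt}\Bigl\|\int_{-\infty}^t e^{c_1 u}\nabla F(X(u))\,du\Bigr\|_{\R^d}^2
=2e^{2c_1 t}\bigl\langle\nabla F(X(t)),\,m(t)\bigr\rangle_{\R^d}.
\]
Since the denominator of $H_F$ lies in $[\sqrt\varepsilon,\sqrt{\varepsilon+\|\nabla F\|_\infty^2}]$, this matches the cross term $e^{2c_1 t}\langle\nabla F(X(t)),H_F(X_t)\rangle$ appearing in It\^o's formula for $e^{2c_1 t}F(X(t))$ up to a bounded factor. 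Integrating the resulting differential inequality over $(-\infty,t]$ and multiplying by $e^{-2c_1 t}$ gives a \emph{pointwise} bound $E[\|m(t)\|^2]\le O_\alpha\bigl(\tfrac{1}{\beta}+\int_{-\infty}^0 e^{2c_1 s}E[F(X_t(s))-F(X(t))]\,ds\bigr)$. The conversion to $E[\|\nabla F(X(t))\|^2]$ is then the one-line step $\|\nabla F(X(t))\|^2\le 2\|m(t)\|^2+2\|m(t)-\nabla F(X(t))\|^2$, the second term being controlled (via Jensen) by the $\|\nabla F(X_t(s))-\nabla F(X(t))\|^2$ integral. No time-average-to-pointwise upgrade is needed, because the quantity whose dissipation one tracks is $\|m(t)\|^2$, not $\|\nabla F(X(t))\|^2$.

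Your approach, by approximating $H_F$ through Lemma~\ref{Lem_Drift_Difference_SGD_Adam}, imports the errors $\varDelta_1,\varDelta_2$---whose structure (the weight $e^{c_2 s}$ in $\varDelta_2$ with $c_2<2c_1$, the reference point $X(v/2)$) does not line up with the target integral after the further $v$-averaging in Step~3---and ends up bounding only a weighted time-average of $E[\|\nabla F(X(v))\|^2]$. Your proposed pointwise upgrade via $|\|a\|^2-\|b\|^2|\le 2\|\nabla F\|_\infty\|a-b\|$ produces a first-power term $\int e^{2c_1 s}E[\|\nabla F(X_t(s))-\nabla F(X(t))\|]\,ds$, which is not $O_\alpha$ of the squared version stated in the lemma when the integrand is small; you would prove a weaker statement. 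The moral is that the momentum identity above lets one bypass both of the issues you correctly flagged.
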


\begin{proof}
	By the assumption and Ito's rule, we have 
	\begin{align}
		d F(X^{(\xi, F)}(t)) 
		&\leq - \langle \nabla F(X^{(\xi, F)}(t)), H_F(X_t^{(\xi, F)}) \rangle_{\R^d} dt \notag \\
		&\quad+ \sqrt{\frac{2}{\beta}} \langle \nabla F(X^{(\xi, F)}(t)), d W(t) \rangle_{\R^d} 
		+ \frac{Md}{\beta} dt. 
		\label{EQ_Ito_for_Loss}
	\end{align}
	Therefore, 
	\begin{align*}
		d [e^{2 c_1 t} F(X^{(\xi, F)}(t))] 
		&= 2 c_1 e^{2 c_1 t} F(X^{(\xi, F)}(t)) dt + e^{2 c_1 t} d F(X^{(\xi, F)}(t)) \\
		&\leq 2 c_1 e^{2 c_1 t} F(X^{(\xi, F)}(t)) dt - e^{2 c_1 t} \langle \nabla F(X^{(\xi, F)}(t)), H_F(X_t^{(\xi, F)}) \rangle_{\R^d} dt \\
		&\quad+ \sqrt{\frac{2}{\beta}} e^{2 c_1 t} \langle \nabla F(X^{(\xi, F)}(t)), d W(t) \rangle_{\R^d} 
		+ \frac{Md}{\beta} e^{2 c_1 t} dt
	\end{align*}
	holds. 
	Since  
	\begin{align*}
		\frac{d}{dt} \left[ \left\| e^{c_1 t} \int_{-\infty}^0 e^{c_1 s} \nabla F(\xi^\prime_t(s)) ds \right\|_{\R^d}^2 \right] 
		&= \frac{d}{dt} \left[ \left\| \int_{-\infty}^t e^{c_1 s} \nabla F(\xi^\prime(s)) ds \right\|_{\R^d}^2 \right] \\
		&= 2 e^{c_1 t} \langle \nabla F(\xi^\prime(t)) \int_{-\infty}^t e^{c_1 s} \nabla F(\xi^\prime(s)) ds \rangle_{\R^d} \\
		&= 2 e^{2 c_1 t} \langle \nabla F(\xi^\prime(t)) \int_{-\infty}^0 e^{c_1 s} \nabla F(\xi^\prime_t(s)) ds \rangle_{\R^d} 
	\end{align*}
	holds for any $\xi^\prime \in \mathcal{C}_r$, this indicates 
	\begin{align*}
		&\frac{1}{\sqrt{\varepsilon + \| \nabla F \|_{\infty}^2}} d \left[ \left\| e^{c_1 t} \int_{-\infty}^0 e^{c_1 s} \nabla F(X_t^{(\xi, F)}(s)) ds \right\|_{\R^d}^2 \right] \\
		&\quad\leq 2 c_1 e^{2 c_1 t} F(X^{(\xi, F)}(t)) dt - d [e^{2 c_1 t} F(X^{(\xi, F)}(t))] \\
		&\qquad+ \sqrt{\frac{2}{\beta}} e^{2 c_1 t} \langle \nabla F(X^{(\xi, F)}(t)), d W(t) \rangle_{\R^d} 
		+ \frac{Md}{\beta} e^{2 c_1 t} dt. 
	\end{align*}
	Multiplying both sides by $e^{- 2 c_1 t}$ after integrating over $(-\infty, t]$, we obtain 
	\begin{align*}
		&\frac{1}{\sqrt{\varepsilon + \| \nabla F \|_{\infty}^2}} E\left[ \left\| \int_{-\infty}^0 e^{c_1 s} \nabla F(X_t^{(\xi, F)}(s)) ds \right\|_{\R^d}^2 \right] \\
		&\quad\leq 2 c_1 \int_{-\infty}^0 e^{2 c_1 s} E[F(X^{(\xi, F)}_t(s))] ds - E[F(X^{(\xi, F)}(t))] + \frac{M d}{2 c_1 \beta}. 
	\end{align*}
	Therefore, by
	\begin{align*}
		E[\| \nabla F(X^{(\xi, F)}(t)) \|_{\R^d}^2] 
		&\leq 2 E\left[ \left\| \int_{-\infty}^0 e^{c_1 s} \nabla F(X_t^{(\xi, F)}(s)) ds - \nabla F(X^{(\xi, F)}(t)) \right\|_{\R^d}^2 \right] \\
		&+ 2 E\left[ \left\| \int_{-\infty}^0 e^{c_1 s} \nabla F(X_t^{(\xi, F)}(s)) ds \right\|_{\R^d}^2 \right], 
	\end{align*}
	the proof is completed. 
\end{proof}

\section{Proofs of Results in Main Section}
\label{SEC_Proof_Main}

\subsection{Proof of Theorem \ref{Thm_General_Bound_on_Difference}}

Let $\delta \in (0, 1)$ be fixed. 
Since $\| \nabla F(x) \|_{\R^d} \leq M \| x \|_{\R^d} + \| \nabla F(0) \|_{\R^d}$ holds for any $x \in \R^d$, as in the proof of Lemma 6 in \cite{Ragi}, we have
\begin{align*}
	F(x) - F(y) 
	\leq \left( \frac{M}{2} \| x \|_{\R^d} + \frac{M}{2} \| y \|_{\R^d} + \| \nabla F(0) \|_{\R^d} \right) \| x - y \|_{\R^d},\quad x, y \in \R^d. 
\end{align*}
Therefore, combining this inequality with 
\begin{align*}
	\| x - y \|_{\R^d} 
	\leq 6 (1 + \sqrt{\delta}) \sqrt{\{ 1 \wedge (\delta^{-1} \| x - y \|_{\R^d}) \} (1 + \| x \|_{\R^d}^2 + \| y \|_{\R^d}^2)},\quad x, y \in \R^d, 
\end{align*}
we can take a constant $C = O_{\| \nabla F(0) \|_{\R^d}, M}(1)$ so that 
\begin{align*}
	F(x) - F(y) 
	\leq C \sqrt{\{ 1 \wedge (\delta^{-1} \| x - y \|_{\R^d}) \} (1 + \| x \|_{\R^d}^4 + \| y \|_{\R^d}^4)}
\end{align*}
holds. 
As a result, for any probability measures $\mu, \nu$ on $\R^d$ and $\pi \in \Pi(\mu, \nu)$, we obtain 
\begin{align}
	&\int_{\R^d} H(x) \mu(dx) - \int_{\R^d} H(x) \nu(dx) \notag \\
	&\quad\leq C \int_{\R^d \times \R^d} \sqrt{\{ 1 \wedge (\delta^{-1} \| x - y \|_{\R^d}) \} (1 + \| x \|_{\R^d}^4 + \| y \|_{\R^d}^4)} \pi(dx dy). \label{EQ_Coupling_Inequality_Expectation}
\end{align}
Putting $\mu = P(\mathcal{X}^{(\xi^\prime)}(k \eta) \in \cdot)$ and $\nu = P(X^{(\eta, \xi)}(k \eta) \in \cdot)$, and using Theorem \ref{Thm_Generalizaiton_of_Bao}, we obtain the desired result. 
\qed

\subsection{Proof of the first inequality in Proposition \ref{Lem_StepSize_and_Gen_Bound_Adam}}

According to Lemma \ref{Lem_Drift_Difference_Disc_Conti}, 
\begin{align*}
	&E[\| H_{L_n}(X^{(\eta, \xi, L_n)}_t) - H^{(\eta)}_{L_n}(X^{(\eta, \xi, L_n)}_t) \|_{\R^d}^2] \\
	&\quad\leq O_\alpha \left(\eta^2 + \int_{-\infty}^0 e^{(c_1 \wedge c_2) s} E[\|\nabla L_n(X^{(\eta, \xi, L_n)}_t(s)) - \nabla L_n(X^{(\eta, \xi, L_n)}_t(\lfloor s / \eta \rfloor \eta))\|_{\R^d}^2] ds\right) 
\end{align*}
holds. 
Therefore, since Lemma \ref{Lem_Exi_Lyapunov_Function_Disc} indicates 
\begin{align*}
	E[\|\nabla L_n(X^{(\eta, \xi, L_n)}_t(s)) - \nabla L_n(X^{(\eta, \xi, L_n)}_t(\lfloor s / \eta \rfloor \eta))\|_{\R^d}^2] 
	\leq O_\alpha( (1 + \| \xi \|_r^2) \eta), 
\end{align*}
by Theorem \ref{Thm_General_Bound_on_Difference}, we obtain the desired result. 
\qed

\subsection{Proof of the second inequality in Proposition \ref{Lem_StepSize_and_Gen_Bound_Adam}}

Let $S = (z_1, \dots, z_n)$ and $S^\prime = (z_1^\prime, \dots, z_n^\prime)$ be neighboring datasets, that is, there exists at most one $i$ satisfying $z_i \neq z_i^\prime$. 
Furthermore, let $L_n^\prime(w) = \frac{1}{n} \sum_{i=1}^n \ell(x; z_i^\prime)$. 
Then, by Lemma \ref{Lem_Drift_Difference_Empirical_True}, we have 
\begin{align*}
	&E[\| H_{L_n}(X^{(\xi, L_n)}_t) - H_{L_n^\prime}(X^{(\xi, L_n)}_t) \|_{\R^d}^2] \\
	&\quad\leq O_\alpha\left( \int_{-\infty}^0 e^{(c_1 \wedge c_2) s} E[\|\nabla L_n(X^{(\xi, L_n)}_t(s)) - \nabla L_n^\prime(X^{(\xi, L_n)}_t(s))\|_{\R^d}^2] ds\right) \\
	&\quad\leq O_\alpha \left( \frac{1 + \| \xi \|_r^2}{n^2} \right) 
\end{align*}
uniformly on $t \geq 0$. 
Here, we used Lemma \ref{Lem_Exi_Lyapunov_Function_Disc} in the second inequality. 
By Lemma \ref{Lem_Lipschitz_Continuity_Conti}, we can apply Theorem \ref{Thm_General_Bound_on_Difference} for $X^{(\xi, L_n)}$ and $X^{(\xi, L_n^\prime)}$. 
Therefore, by (\ref{EQ_General_Bound_on_Difference_Conti}), we obtain 
\begin{align*}
	\left| E[(\ell(\cdot; z) + \varepsilon^{1/2} R)(X^{(\xi, L_n)}(t))] - E[(\ell(\cdot; z) + \varepsilon^{1/2} R)(X^{(\xi, L_n^\prime)}(t))] \right| 
	&\leq  O_{\alpha, \beta} \left( \frac{1 + \| \xi \|_r^{5/2}}{\sqrt{n}} \right) 
\end{align*}
for any $z \in \mathcal{Z}$. 
The desired result follows from Theorem 2.2 in \cite{Hadt}. 
\qed

\subsection{Proof of Proposition \ref{Lem_Gradient_Vanish_in_Adam}}

Let $Y(s, \cdot)$ be the solution of 
\begin{align}
	\label{EQ_SGLD}
	\begin{cases}
		dY(s, t)
		= - c(\varepsilon)^{-1} \nabla L_n(Y(s, t)) dt - \nabla R(Y(s, t)) + \sqrt{2 / \beta} dW(t), & t > s \\
		Y(s, t) = X^{(\xi, L_n)}(s), & s \geq t,  
	\end{cases}
\end{align}
where $c(\varepsilon) = (\varepsilon + \| \nabla L_n(X^{(\xi, L_n)}(s)) \|_{\R^d}^2)^{1/2}$. 
According to Theorem \ref{Thm_General_Bound_on_Difference}, for the invariant measure 
\begin{align}
	\label{EQ_Inv_Meas_SGD}
	\pi^*(dw) 
	\propto \exp \left\{ - \frac{\beta}{c(\varepsilon)} L_n(w) - \beta R(w) \right\} dw
\end{align}
of $P(Y(s, t) \in \cdot)$, we can take constants $c = \Omega_{\alpha, \beta}(1)$ and $C = O_{\alpha, \beta}(1)$ so that the exponential convergence
\begin{align*}
	\left| E[L_n(Y(t/2, t)) +  c(\varepsilon) R(Y(t/2, t))] - \int_{\R^d} \{ L_n(x) + c(\varepsilon) R(x) \} \pi^*(dx) \right| 
	&\leq C e^{- c t} (1 + \| \xi \|_r^2) 
\end{align*}
holds. 
Similarly, there exists the unique invariant measure $\mu_*$ of $P(X^{(\xi, L_n)}_t \in \cdot)$ by Proposition \ref{Thm_Existence_Invariant_Measure_Origin}, and it satisfies the exponential convergence
\begin{align*}
	\left| E[H(X^{(\xi, L_n)}_t)] - \int_{\mathcal{C}_r} H(\xi^\prime) \mu_*(d \xi^\prime) \right| 
	\leq O_K (C e^{- c t} (1 + \| \xi \|_r^2))
\end{align*}
for any bounded and Lipschitz functional $H : \mathcal{C}_r \to \R$ with Lipschitz constant $K$. 
Combining this inequality with Lemma \ref{Lem_Drift_Difference_SGD_Adam}, we obtain
\begin{align*}
	E\left[ \left\| H_L(X^{(\xi, L_n)}(t)) - c(\varepsilon)^{-1} \nabla L_n(X^{(\xi, L_n)}(t)) \right\|_{\R^d}^2 \right]
	\leq O_\alpha \left( e^{- c t} (1 + \| \xi \|_r^2) \right). 
\end{align*}
Therefore, by Theorem \ref{Thm_General_Bound_on_Difference}, 
\begin{align*}
	\left| E[(L_n + c(\varepsilon) R)(X^{(\xi, L_n)}(t))] - E[(L_n + c(\varepsilon) R)(Y(t/2, t))] \right| 
	&\leq O_\alpha\left( e^{- c t / 4} (1 + \| \xi \|_r^{5/2}) \right) 
\end{align*}
holds. 

On the other hand, by (3.4) in \cite{Ragi}, we have
\begin{align*}
	&\int_{\R^d} (L_n + c(\varepsilon) R)(x) \pi^*(dx) 
	- \min_{w \in \R^d} (L_n + c(\varepsilon) R)(w) \\
	&\quad= c(\varepsilon) \left( \int_{\R^d} (c(\varepsilon) ^{-1}L_n + R) \pi^*(dx)  
	- \min_{w \in \R^d} (c(\varepsilon) ^{-1}L_n + R)(w) \right) \\
	&\quad\leq c(\varepsilon) \frac{\log (\beta + 1)}{\beta}. 
\end{align*}
Combining this with the inequality $E[\min_{w \in \R^d} (L_n + c(\varepsilon) R)(w)] \leq \min_{w \in \R^d} (L_n + c(\varepsilon) R)(w)$, we obtain 
\begin{align*}
	E[\int_{\R^d} (L_n + c(\varepsilon) R)(x) \pi^*(dx)] 
	- E[\min_{w \in \R^d} (L + c(\varepsilon) R)(w)] 
	\leq E[c(\varepsilon)] \frac{\log (\beta + 1)}{\beta}.
\end{align*}
Finally, for $w^* = \argmin_{w \in \R^d} (L + \varepsilon^{1/2} R)(w)$,  
\begin{align*}
	E[\min_{w \in \R^d} (L + c(\varepsilon) R)(w)] 
	- \min_{w \in \R^d} (L + \varepsilon^{1/2} R)(w) 
	&\leq \frac{R(w^*)}{2 \sqrt{\varepsilon}} E[\| \nabla L_n(X^{(\xi, F)}(t/2)) \|_{\R^d}^2]
\end{align*}
holds. 
Since we have $E[\| \nabla L_n(X^{(\xi, F)}(t/2)) \|_{\R^d}^2] \leq O_\alpha(\beta^{-1} + e^{- c t} (1 + \| \xi \|_r^2))$ by Lemma \ref{Lem_Adam_Grad_Vanish}, the proof is completed.
\qed 


\bibliographystyle{plain}
\bibliography{article.bib}

\begin{thebibliography}{10}

\bibitem{bao2019asymptotic}
Jianhai Bao, Feng-Yu Wang, and Chenggui Yuan.
\newblock Asymptotic log-harnack inequality and applications for stochastic
  systems of infinite memory.
\newblock {\em Stochastic Processes and their Applications},
  129(11):4576--4596, 2019.

\bibitem{bao2020ergodicity}
Jianhai Bao, Feng-Yu Wang, and Chenggui Yuan.
\newblock Ergodicity for neutral type sdes with infinite length of memory.
\newblock {\em Mathematische Nachrichten}, 293(9):1675--1690, 2020.

\bibitem{Bertsekas}
Dimitri~P Bertsekas and John~N Tsitsiklis.
\newblock Gradient convergence in gradient methods with errors.
\newblock {\em SIAM Journal on Optimization}, 10(3):627--642, 2000.

\bibitem{bock2019proof}
Sebastian Bock and Martin Wei{\ss}.
\newblock A proof of local convergence for the adam optimizer.
\newblock In {\em 2019 International Joint Conference on Neural Networks
  (IJCNN)}, pages 1--8. IEEE, 2019.

\bibitem{chambon2018deep}
Stanislas Chambon, Mathieu~N Galtier, Pierrick~J Arnal, Gilles Wainrib, and
  Alexandre Gramfort.
\newblock A deep learning architecture for temporal sleep stage classification
  using multivariate and multimodal time series.
\newblock {\em IEEE Transactions on Neural Systems and Rehabilitation
  Engineering}, 26(4):758--769, 2018.

\bibitem{Kumar}
Huy~N Chau, Chaman Kumar, Mikl{\'o}s R{\'a}sonyi, and Sotirios Sabanis.
\newblock On fixed gain recursive estimators with discontinuity in the
  parameters.
\newblock {\em ESAIM: Probability and Statistics}, 23:217--244, 2019.

\bibitem{Moulines}
Ngoc~Huy Chau, {\'E}ric Moulines, Miklos R{\'a}sonyi, Sotirios Sabanis, and
  Ying Zhang.
\newblock On stochastic gradient langevin dynamics with dependent data streams:
  The fully nonconvex case.
\newblock {\em SIAM Journal on Mathematics of Data Science}, 3(3):959--986,
  2021.

\bibitem{chen2018convergence}
Xiangyi Chen, Sijia Liu, Ruoyu Sun, and Mingyi Hong.
\newblock On the convergence of a class of adam-type algorithms for non-convex
  optimization.
\newblock {\em arXiv preprint arXiv:1808.02941}, 2018.

\bibitem{Abbasi}
Xiang Cheng, Niladri~S Chatterji, Yasin Abbasi-Yadkori, Peter~L Bartlett, and
  Michael~I Jordan.
\newblock Sharp convergence rates for langevin dynamics in the nonconvex
  setting.
\newblock {\em arXiv preprint arXiv:1805.01648}, 2018.

\bibitem{Mackey2}
Murat~A Erdogdu, Lester Mackey, and Ohad Shamir.
\newblock Global non-convex optimization with discretized diffusions.
\newblock {\em Advances in Neural Information Processing Systems}, 31, 2018.

\bibitem{Ge}
Rong Ge, Furong Huang, Chi Jin, and Yang Yuan.
\newblock Escaping from saddle points --- onnline stochastic gradient for
  tensor decomposition.
\newblock In {\em Conference on learning theory}, pages 797--842. PMLR, 2015.

\bibitem{hairer2011asymptotic}
Martin Hairer, Jonathan~C Mattingly, and Michael Scheutzow.
\newblock Asymptotic coupling and a general form of harris' theorem with
  applications to stochastic delay equations.
\newblock {\em Probability theory and related fields}, 149(1):223--259, 2011.

\bibitem{hamm2019deep}
Charlie~A Hamm, Clinton~J Wang, Lynn~J Savic, Marc Ferrante, Isabel Schobert,
  Todd Schlachter, MingDe Lin, James~S Duncan, Jeffrey~C Weinreb, Julius
  Chapiro, et~al.
\newblock Deep learning for liver tumor diagnosis part i: development of a
  convolutional neural network classifier for multi-phasic mri.
\newblock {\em European radiology}, 29(7):3338--3347, 2019.

\bibitem{Hadt}
Moritz Hardt, Ben Recht, and Yoram Singer.
\newblock Train faster, generalize better: Stability of stochastic gradient
  descent.
\newblock In {\em International conference on machine learning}, pages
  1225--1234. PMLR, 2016.

\bibitem{iiduka2022theoretical}
Hideaki Iiduka.
\newblock Theoretical analysis of adam using hyperparameters close to one
  without lipschitz smoothness.
\newblock {\em arXiv preprint arXiv:2206.13290}, 2022.

\bibitem{Ge2}
Chi Jin, Rong Ge, Praneeth Netrapalli, Sham~M Kakade, and Michael~I Jordan.
\newblock How to escape saddle points efficiently.
\newblock In {\em International Conference on Machine Learning}, pages
  1724--1732. PMLR, 2017.

\bibitem{kingma2015adam}
DP~Kingma and LJ~Ba.
\newblock Adam: A method for stochastic optimization.
\newblock 2015.

\bibitem{Qian}
Chris~Junchi Li, Lei Li, Junyang Qian, and Jian-Guo Liu.
\newblock Batch size matters: A diffusion approximation framework on nonconvex
  stochastic gradient descent.
\newblock {\em stat}, 1050:22, 2017.

\bibitem{loey2021fighting}
Mohamed Loey, Gunasekaran Manogaran, Mohamed Hamed~N Taha, and Nour Eldeen~M
  Khalifa.
\newblock Fighting against covid-19: A novel deep learning model based on
  yolo-v2 with resnet-50 for medical face mask detection.
\newblock {\em Sustainable cities and society}, 65:102600, 2021.

\bibitem{Majka}
Mateusz~B Majka, Aleksandar Mijatovi{\'c}, and {\L}ukasz Szpruch.
\newblock Nonasymptotic bounds for sampling algorithms without log-concavity.
\newblock {\em The Annals of Applied Probability}, 30(4):1534--1581, 2020.

\bibitem{malladi2022sdes}
Sadhika Malladi, Kaifeng Lyu, Abhishek Panigrahi, and Sanjeev Arora.
\newblock On the sdes and scaling rules for adaptive gradient algorithms.
\newblock {\em arXiv preprint arXiv:2205.10287}, 2022.

\bibitem{Kavis}
Panayotis Mertikopoulos, Nadav Hallak, Ali Kavis, and Volkan Cevher.
\newblock On the almost sure convergence of stochastic gradient descent in
  non-convex problems.
\newblock {\em Advances in Neural Information Processing Systems},
  33:1117--1128, 2020.

\bibitem{miyato2016adversarial}
Takeru Miyato, Andrew~M Dai, and Ian Goodfellow.
\newblock Adversarial training methods for semi-supervised text classification.
\newblock {\em arXiv preprint arXiv:1605.07725}, 2016.

\bibitem{Mou}
Wenlong Mou, Liwei Wang, Xiyu Zhai, and Kai Zheng.
\newblock Generalization bounds of sgld for non-convex learning: Two
  theoretical viewpoints.
\newblock In {\em Conference on Learning Theory}, pages 605--638. PMLR, 2018.

\bibitem{Taiji}
Boris Muzellec, Kanji Sato, Mathurin Massias, and Taiji Suzuki.
\newblock Dimension-free convergence rates for gradient langevin dynamics in
  rkhs.
\newblock {\em arXiv preprint arXiv:2003.00306}, 2020.

\bibitem{Ragi}
Maxim Raginsky, Alexander Rakhlin, and Matus Telgarsky.
\newblock Non-convex learning via stochastic gradient langevin dynamics: a
  nonasymptotic analysis.
\newblock In {\em Conference on Learning Theory}, pages 1674--1703. PMLR, 2017.

\bibitem{reddi2019convergence}
Sashank~J Reddi, Satyen Kale, and Sanjiv Kumar.
\newblock On the convergence of adam and beyond.
\newblock {\em arXiv preprint arXiv:1904.09237}, 2019.

\bibitem{Cucumber}
K.~Suzuki.
\newblock Weak convergence of approximate reflection coupling and its
  application to non-convex optimization.
\newblock {\em arXiv preprint arXiv:}, 2022.

\bibitem{Taiji2}
Taiji Suzuki.
\newblock Generalization bound of globally optimal non-convex neural network
  training: Transportation map estimation by infinite dimensional langevin
  dynamics.
\newblock {\em Advances in Neural Information Processing Systems},
  33:19224--19237, 2020.

\bibitem{tang20211}
Hanlin Tang, Shaoduo Gan, Ammar~Ahmad Awan, Samyam Rajbhandari, Conglong Li,
  Xiangru Lian, Ji~Liu, Ce~Zhang, and Yuxiong He.
\newblock 1-bit adam: Communication efficient large-scale training with adam's
  convergence speed.
\newblock In {\em International Conference on Machine Learning}, pages
  10118--10129. PMLR, 2021.

\bibitem{villani2009optimal}
C{\'e}dric Villani.
\newblock {\em Optimal transport: old and new}, volume 338.
\newblock Springer, 2009.

\bibitem{Xu}
Pan Xu, Jinghui Chen, Difan Zou, and Quanquan Gu.
\newblock Global convergence of langevin dynamics based algorithms for
  nonconvex optimization.
\newblock {\em Advances in Neural Information Processing Systems}, 31, 2018.

\bibitem{zaheer2018adaptive}
Manzil Zaheer, Sashank Reddi, Devendra Sachan, Satyen Kale, and Sanjiv Kumar.
\newblock Adaptive methods for nonconvex optimization.
\newblock {\em Advances in neural information processing systems}, 31, 2018.

\bibitem{Zhang}
Ying Zhang, {\"O}mer~Deniz Akyildiz, Theodoros Damoulas, and Sotirios Sabanis.
\newblock Nonasymptotic estimates for stochastic gradient langevin dynamics
  under local conditions in nonconvex optimization.
\newblock {\em arXiv preprint arXiv:1910.02008}, 2019.

\bibitem{Liang}
Yuchen Zhang, Percy Liang, and Moses Charikar.
\newblock A hitting time analysis of stochastic gradient langevin dynamics.
\newblock In {\em Conference on Learning Theory}, pages 1980--2022. PMLR, 2017.

\bibitem{zhang2022adam}
Yushun Zhang, Congliang Chen, Naichen Shi, Ruoyu Sun, and Zhi-Quan Luo.
\newblock Adam can converge without any modification on update rules.
\newblock {\em arXiv preprint arXiv:2208.09632}, 2022.

\bibitem{zhuang2020adabelief}
Juntang Zhuang, Tommy Tang, Yifan Ding, Sekhar~C Tatikonda, Nicha Dvornek,
  Xenophon Papademetris, and James Duncan.
\newblock Adabelief optimizer: Adapting stepsizes by the belief in observed
  gradients.
\newblock {\em Advances in neural information processing systems},
  33:18795--18806, 2020.

\bibitem{zou2019sufficient}
Fangyu Zou, Li~Shen, Zequn Jie, Weizhong Zhang, and Wei Liu.
\newblock A sufficient condition for convergences of adam and rmsprop.
\newblock In {\em Proceedings of the IEEE/CVF Conference on computer vision and
  pattern recognition}, pages 11127--11135, 2019.

\end{thebibliography}

\clearpage

\end{document}